\pdfoutput=1
\documentclass{article}

\PassOptionsToPackage{table}{xcolor}
\usepackage{times}
\let\htmlEmailSize\footnotesize
\makeatletter
\renewcommand{\normalsize}{\@setfontsize\normalsize{10pt}{11pt}}
\renewcommand{\small}{\@setfontsize\small{9pt}{10pt}}
\renewcommand{\footnotesize}{\@setfontsize\footnotesize{9pt}{10pt}}
\renewcommand{\scriptsize}{\@setfontsize\scriptsize{7pt}{8pt}}
\renewcommand{\large}{\@setfontsize\large{12pt}{14pt}}
\renewcommand{\LARGE}{\@setfontsize\LARGE{17pt}{20pt}}
\normalsize
\renewcommand{\section}{\@startsection{section}{1}{\z@}{-2ex plus -0.5ex minus -0.2ex}{1.5ex plus 0.3ex minus 0.2ex}{\large\scshape\raggedright}}
\renewcommand{\subsection}{\@startsection{subsection}{2}{\z@}{-1.8ex plus -0.5ex minus -0.2ex}{0.8ex plus 0.2ex}{\normalsize\scshape\raggedright}}
\renewcommand{\subsubsection}{\@startsection{subsubsection}{3}{\z@}{-1.5ex plus -0.5ex minus -0.2ex}{0.5ex plus 0.2ex}{\normalsize\scshape\raggedright}}
\renewcommand{\maketitle}{\par\begingroup\centering{\LARGE\scshape\@title\par}\ifx\@author\@empty\else\vspace{12pt}{\normalsize\pdfauthorblock\par}\fi\endgroup\vspace{12pt}\setcounter{footnote}{0}}
\renewenvironment{abstract}{\vspace{0.075in}\centerline{\large\scshape Abstract}\vspace{0.5ex}\begin{quote}}{\par\end{quote}\vspace{1ex}}
\makeatother
\usepackage[utf8]{inputenc}
\usepackage[T1]{fontenc}
\usepackage{hyperref,url,booktabs,array,multirow,adjustbox}

\usepackage{microtype,placeins,wrapfig}
\usepackage{mathtools}
\usepackage{xcolor}
\usepackage[authoryear,round]{natbib}
\usepackage{bookmark}
\hypersetup{hidelinks}
\usepackage{refcount}
\usepackage{graphicx}
\usepackage{booktabs}
\usepackage{caption}
\usepackage{subcaption}
\usepackage{amsmath}
\usepackage{amssymb}
\usepackage{amsthm}

\usepackage{float}
\usepackage{algorithm}
\IfFileExists{algpseudocode.sty}{
  \usepackage{algpseudocode}
}{
  \newcounter{algorithmicline}

}
\makeatletter
\@ifundefined{STATE}{}{}
\@ifundefined{FOR}{}{}
\@ifundefined{ENDFOR}{}{}
\@ifundefined{IF}{}{}
\@ifundefined{ENDIF}{}{}
\makeatother
\usepackage{xparse}
\usepackage{aliascnt}
\IfFileExists{cleveref.sty}{
  \usepackage[nameinlink,noabbrev]{cleveref}
}{
  \newcommand{\cref}[1]{\autoref{##1}}
  \let\Cref\cref
  \newcommand{\crefname}[3]{}
  \newcommand{\Crefname}[3]{}
}

\theoremstyle{plain}
\newtheorem{theorem}{Theorem}[section]

\newaliascnt{lemma}{theorem}
\newtheorem{lemma}[lemma]{Lemma}
\aliascntresetthe{lemma}

\newaliascnt{proposition}{theorem}
\newtheorem{proposition}[proposition]{Proposition}
\aliascntresetthe{proposition}

\newaliascnt{corollary}{theorem}
\newtheorem{corollary}[corollary]{Corollary}
\aliascntresetthe{corollary}

\theoremstyle{definition}
\newaliascnt{definition}{theorem}
\newtheorem{definition}[definition]{Definition}
\aliascntresetthe{definition}

\newaliascnt{example}{theorem}

\aliascntresetthe{example}

\newaliascnt{assumption}{theorem}
\newtheorem{assumption}[assumption]{Assumption}
\aliascntresetthe{assumption}

\newcounter{remark}[theorem]
\NewDocumentEnvironment{remark}{o}
{
  \refstepcounter{remark}
  \par\vspace*{-\parskip}\noindent\textbf{Remark \thetheorem.\arabic{remark}.}\IfValueT{#1}{\ (#1).}\ \normalfont\ignorespaces
}
{
  \par\ignorespacesafterend
}

\crefname{equation}{equation}{equations}
\Crefname{equation}{Equation}{Equations}
\crefname{theorem}{theorem}{theorems}
\Crefname{theorem}{Theorem}{Theorems}
\crefname{assumption}{assumption}{assumptions}
\Crefname{assumption}{Assumption}{Assumptions}
\crefname{definition}{definition}{definitions}
\Crefname{definition}{Definition}{Definitions}
\crefname{example}{example}{examples}
\Crefname{example}{Example}{Examples}
\crefname{lemma}{lemma}{lemmas}
\Crefname{lemma}{Lemma}{Lemmas}
\crefname{corollary}{corollary}{corollaries}
\Crefname{corollary}{Corollary}{Corollaries}
\crefname{proposition}{proposition}{propositions}
\Crefname{proposition}{Proposition}{Propositions}
\crefname{algorithm}{algorithm}{algorithms}
\Crefname{algorithm}{Algorithm}{Algorithms}
\crefname{remark}{remark}{remarks}
\Crefname{remark}{Remark}{Remarks}
\crefname{logicpart}{Part}{Parts}
\Crefname{logicpart}{Part}{Parts}
\crefname{logicsubpart}{Part}{Parts}
\Crefname{logicsubpart}{Part}{Parts}

\newaliascnt{traceproof}{theorem}
\aliascntresetthe{traceproof}
\crefname{traceproof}{proof}{proofs}
\Crefname{traceproof}{Proof}{Proofs}
\NewDocumentEnvironment{traceproof}{o m}
{
  \refstepcounter{traceproof}
  \IfNoValueF{#1}{\label{#1}}
  \begin{proof}[\textup{\textbf{Proof \thetraceproof}\ (#2)}]
}
{
  \end{proof}
}

\newcommand{\mypara}[1]{\paragraph{#1}}

\newcounter{logicpart}
\newcounter{logicsubpart}[logicpart]

\newcommand{\Acal}{\mathcal{A}}
\newcommand{\Bcal}{\mathcal{B}}

\newcommand{\Ecal}{\mathcal{E}}
\newcommand{\Fcal}{\mathcal{F}}

\newcommand{\Hcal}{\mathcal{H}}

\newcommand{\Lcal}{\mathcal{L}}

\newcommand{\Ncal}{\mathcal{N}}

\newcommand{\Rcal}{\mathcal{R}}

\newcommand{\Ucal}{\mathcal{U}}

\newcommand{\Zcal}{\mathcal{Z}}

\newcommand{\Rbb}{\mathbb{R}}

\newcommand{\argmin}{\operatorname*{arg\,min}}

\setcitestyle{authoryear,round,citesep={;},aysep={,},yysep={;}}
\hypersetup{colorlinks=true,allcolors=black,pdftitle={Ontic State Recovery through Prediction},pdfauthor={Ping Zhang, Quang-Huy Nguyen, Yingtie Lei, Wei-Lun Chao}}
\renewcommand\mypara[1]{\par\addvspace{0.6mm}\noindent\strut\textbf{#1}\enspace\ignorespaces}
\title{Ontic State Recovery through Prediction}
\author{
\begin{tabular}{ccc}
Ping~Zhang\textsuperscript{1} & \rule{90pt}{0pt} & Quang-Huy~Nguyen\textsuperscript{1}\\
\mbox{\htmlEmailSize\href{mailto:pingzh@bu.edu}{pingzh@bu.edu}} & & \mbox{\htmlEmailSize\href{mailto:quanghuy@bu.edu}{quanghuy@bu.edu}}\\
\rule{0pt}{18pt} & & \\
Yingtie~Lei\textsuperscript{2} & & Wei-Lun~Chao\textsuperscript{1}\\
\mbox{\htmlEmailSize\href{mailto:lei.441@osu.edu}{lei.441@osu.edu}} & & \mbox{\htmlEmailSize\href{mailto:chao209@bu.edu}{chao209@bu.edu}}\\
\end{tabular}
\\[11pt]
\textsuperscript{1}Boston University\quad\textsuperscript{2}The Ohio State University
}
\newcommand{\pdfauthorblock}{
  \large
  \makebox[.48\textwidth]{Ping~Zhang\textsuperscript{1}}\makebox[.48\textwidth]{Quang-Huy~Nguyen\textsuperscript{1}}\\[0pt]
  {\small\rmfamily\color{black!72}\makebox[.48\textwidth]{pingzh@bu.edu}\makebox[.48\textwidth]{quanghuy@bu.edu}}\\[8pt]
  \makebox[.48\textwidth]{Yingtie~Lei\textsuperscript{2}}\makebox[.48\textwidth]{Wei-Lun~Chao\textsuperscript{1}}\\[0pt]
  {\small\rmfamily\color{black!72}\makebox[.48\textwidth]{lei.441@osu.edu}\makebox[.48\textwidth]{chao209@bu.edu}}\\[11pt]
  {\normalsize\color{black!85}\mbox{\textsuperscript{1}Boston University\enspace\textsuperscript{2}The Ohio State University}}
}

\date{}

\begin{document}
\maketitle
\addtocontents{toc}{\protect\setcounter{tocdepth}{-1}}

\begin{abstract}
  We encounter a system through its responses, yet its state concerns how the system is. Predicting further responses promises knowledge beyond what is immediately observed. What that knowledge tells us about the system's state depends on what we mean by a state in the first place. We give this notion a model-relative meaning: a state is the complete response profile over a specified query universe, and configurations agreeing on every query are regarded as the same state. From this perspective, a learned state is understood through the distinctions between these profiles that it retains. Prediction concerns selected observations; recovery concerns the profile as a whole. We examine their relationship through query ambiguity, state-space geometry, and the choice of prediction targets. Synthetic systems with known response laws and neural recordings provide complementary settings for this investigation. The aim is to clarify how prediction can become a way of knowing a system, and what it would mean for a learned state to preserve the state being studied.
\end{abstract}

\section{Introduction}
\label{sec:paper-introduction}

\begin{wrapfigure}{r}{.55\textwidth}
  \centering
  \noindent\makebox[\linewidth][c]{\hspace*{2bp}\makebox[204.78696pt][c]{\raisebox{-20.00000pt}[30.97462pt][0.00000pt]{\includegraphics[width=244.78696pt,height=70.97462pt]{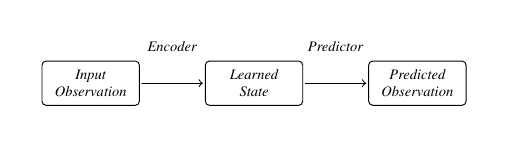}}}
\hspace*{-2bp}}
  \caption{From \textit{observation} to prediction. The \textit{encoder} maps \textit{input observations} to a \textit{learned state}; the \textit{predictor} estimates \textit{observations} at the \textit{target queries}.}
  \label{fig:paper-prediction-pipeline}
\end{wrapfigure}

Learning a system's state requires deciding which differences between its possible conditions matter. Knowing that state would let us reason beyond the aspects we happen to observe. For an animal, a neural recording offers one such view: a pattern of activity from which we seek to understand the condition that gives rise to it~\citep{Cunningham2014,Churchland2012}. Giving this underlying condition a precise meaning is part of the problem of knowing it.

Predicting further observations offers a way to study this problem. Neural population models learn to predict held-out activity~\citep{Pei2021NLB}, and models trained across recordings use related objectives to support downstream decoding~\citep{Ye2023NDT2}. A learned state is useful here because it carries information about observations beyond its input.

\begin{center}
  \parbox{\linewidth}{\centering\emph{``When does learning to predict a system amount to learning its state?''}}
\end{center}

We make this question precise relative to a query universe. A \emph{query} specifies what is measured, and an \emph{observation} is the resulting value. Within that universe, the \emph{ontic state} is the complete response profile: configurations agreeing on every query count as the same state. In a neural recording, a query specifies a unit and a trailing-window duration. An \textit{encoder} maps input observations to a learned state, from which a \textit{predictor} estimates observations at target queries (\cref{fig:paper-prediction-pipeline}). Theoretical work relates auxiliary prediction and context learning to downstream performance~\citep{Lee2021ProvableSSL,Zhai2025Contextures}. We ask how much of the ontic state can be recovered from the learned state that supports these predictions.

Consider targets that all depend on one coordinate of a state. Exact prediction constrains that coordinate while leaving the others free. \emph{Query ambiguity} measures the unresolved state differences. It connects the distinctions tested by prediction to the profile we seek to recover.

This connection motivates three contributions:
\begin{enumerate}
  \item \textit{Ontic-state formulation.} We formulate ontic states as complete response profiles over a query universe.
  \item \textit{Recovery guarantees.} We establish prediction--recovery guarantees in both general settings and the low-dimensional settings commonly studied in neuroscience.
  \item \textit{Fixed-family sharpness.} We prove matching budget bounds on a fixed state family, distinguishing targets shared across \textit{encoders} from targets tailored to each \textit{encoder}.
\end{enumerate}

\section{Ontic State}\label{sec:paper-setup}

\subsection{State, Query, and Observation}

\begin{wrapfigure}{l}{.55\textwidth}
  \centering
  \noindent\makebox[\linewidth][c]{
    \hspace*{1.75bp}\makebox[219.23898pt][c]{\raisebox{-20.00000pt}[62.03368pt][0.00000pt]{\includegraphics[width=259.23898pt,height=102.03368pt]{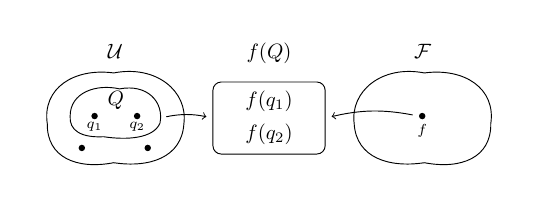}}}
\hspace*{-1.75bp}
  }
  \caption{\textit{State}, \textit{query}, and \textit{observation}. The \textit{state} $f\in\Fcal$ and \textit{query set} $Q=\{q_1,q_2\}\subseteq\Ucal$ determine the \textit{observation vector} $f(Q)$. Its entries are $f(q_1)$ and $f(q_2)$; unlabeled points denote queries outside $Q$.}
  \label{fig:ontic-state-observation}
\end{wrapfigure}

What does it mean for a system to be in a particular state, independently of how we describe it? We need a criterion of identity. Leibniz's account of individual substance locates identity in a complete concept determining the predicates of its subject~\citep{leibniz1686discourse}. For a scientific account of state, which properties should determine this identity?

We give these properties empirical meaning through the observations the system could produce~\citep{peirce1878ideas,james1907pragmatism}. Each query specifies one such observation. Two configurations may agree in position and differ in velocity: whether we distinguish them depends on the queries admitted. We therefore fix a query universe before defining its states.

If no question in this universe can distinguish two putative states, we regard them as one ontic state: a difference in description alone does not constitute a difference in state. Their common response profile fixes this identity independently of the coordinates used to represent it~\citep{poincare1905hypothesis}. An experiment reveals part of the profile, while the state specifies responses throughout the universe. We thus model a state as a function from queries to observations.

\begin{definition}[Ontic State]\label{def:ontic-state-space}
  An \emph{ontic state} is an admissible bounded function
  \begin{equation}
    f:\Ucal\to\Rbb,
  \end{equation}
  where $\Ucal$ is a fixed nonempty set. The collection $\Fcal$ of all admissible ontic states is called the \emph{ontic state space}.
\end{definition}

\begin{definition}[Query Universe]\label{def:ontic-query-space}
  The common domain $\Ucal$ of the ontic states is called the \emph{query universe}, and its elements are called \emph{queries}. For $f\in\Fcal$ and $q\in\Ucal$, the value $f(q)$ is the \emph{observation} returned by query $q$ in state $f$.
\end{definition}

An experiment records responses to a finite set of queries. We collect these readings into an observation vector.

\begin{definition}[Observation]\label{def:neural-measurement}
  A \emph{query set} is a nonempty finite subset $Q\subseteq\Ucal$. The \emph{observation vector} of a state $f\in\Fcal$ on $Q$ is
  \begin{equation}
    f(Q)\coloneqq\bigl(f(q)\bigr)_{q\in Q}\in\Rbb^Q.
  \end{equation}
  Here $\Rbb^Q$ denotes real-valued vectors indexed by $Q$.
\end{definition}

To compare two profiles, we ask how far their responses can differ over the queries being considered. Taking the largest difference gives a common distance for full states and finite observation vectors.

\begin{definition}[Ontic Distance]\label{def:ontic-observation-distances}
  For a nonempty index set $I$ and bounded real-valued functions $u,v$ whose domains contain $I$, define their distance on $I$ by
  \begin{equation}\label{eq:paper-distance}
    d_I(u,v)\coloneqq\sup_{i\in I}|u(i)-v(i)|.
  \end{equation}
  This is the supremum metric between their restrictions to $I$. Vectors are functions on their index set; for finite $I$, the supremum is a maximum.

  The \emph{ontic distance} is $d_{\Ucal}(f,g)$; the \emph{observation distance} on $Q$ is $d_Q(f(Q),g(Q))$. The latter is a metric on observation vectors and a pseudometric on states, since distinct states can yield identical observations on $Q$.
\end{definition}

\Cref{fig:ontic-state-observation} shows how a state and query set determine an observation vector. The noisy extension treats the profile as a conditional mean response and measured activity as a noisy observation (\cref{sec:noisy-recovery}).

\subsection{Query Ambiguity}

A complete response profile defines a state, but how much of it do our observations determine? Hume's question about inference beyond present observation brings this gap into focus~\citep[sec.~IV]{hume1902enquiries}. Within a specified model class, optimal recovery makes the uncertainty quantitative~\citep{MicchelliRivlin1977,Binev2024OptimalLearning}. We ask how different two full states can be when their recorded observations agree within a given tolerance.

\begin{definition}[Query Ambiguity]\label{def:observation-ambiguity}
  Fix a query set $Q\subseteq\Ucal$. For a tolerance $r\ge0$, define the \emph{query ambiguity} on $Q$ by
  \begin{equation}\label{eq:paper-ambiguity}
    \omega_Q(r)\coloneqq\sup_{\substack{f,g\in\Fcal\\d_Q(f(Q),g(Q))\le r}}d_{\Ucal}(f,g).
  \end{equation}
\end{definition}

At $r=0$, ambiguity measures the state differences compatible with identical observations on $Q$. Positive tolerances allow for prediction error. An \textit{observation decoder} recovers a state from the observation vector.

\begin{definition}[Observation Decoder]\label{def:observation-decoder}
  Fix a query set $Q\subseteq\Ucal$. Define the \textit{observation decoder class} by
  \begin{equation}
    \Acal_Q\coloneqq\{a_Q:\Rbb^Q\to\Fcal\}.
  \end{equation}
  For an \textit{observation decoder} $a_Q\in\Acal_Q$, its \emph{recovery error} is
  \begin{equation}
    \Ecal_Q(a_Q)\coloneqq\sup_{f\in\Fcal}d_{\Ucal}\bigl(a_Q(f(Q)),f\bigr).
  \end{equation}
  The \emph{optimal recovery error} from observations on $Q$ is
  \begin{equation}
    \Ecal_Q^\star\coloneqq\inf_{a_Q\in\Acal_Q}\Ecal_Q(a_Q).
  \end{equation}
\end{definition}

Additional queries can resolve distinctions left open by the current observations. We describe this accumulation of information by a nested query sequence.

\begin{definition}[Query Sequence]\label{def:observation-sequence}
  A \emph{query sequence} is a family $(Q_p)_{p\ge1}$ of query sets with $Q_p\subseteq Q_{p+1}\subseteq\Ucal$ for all $p\ge1$. The stage-$p$ observation vector is $f(Q_p)$.
\end{definition}

\begin{definition}[State Separation]\label{def:state-separation}
  A query sequence $(Q_p)_{p\ge1}$ is said to \emph{separate states in $\Fcal$} if
  \begin{equation}
    \forall f\ne g\in\Fcal,\quad \exists p\ge1:\quad f(Q_p)\ne g(Q_p).
  \end{equation}
\end{definition}

Separating queries progressively resolve observation ambiguity (\cref{fig:asymptotic-recovery-relations}). Compactness makes this resolution stable under vanishing tolerances.

\begin{lemma}[Vanishing Ambiguity]\label{lem:vanishing-ambiguity}
  Let $(\Fcal,d_{\Ucal})$ be nonempty and compact, and let $(P_m)_{m\ge1}$ be a query sequence separating states. For nonnegative tolerances $r_m\to0$,
  \begin{equation}\label{eq:paper-stable-ambiguity}
    \omega_{P_m}(r_m)\to0.
  \end{equation}
\end{lemma}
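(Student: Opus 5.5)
We argue by contradiction with a compactness extraction. Suppose $\omega_{P_m}(r_m)\not\to0$. Then there are $\varepsilon>0$ and a subsequence (still indexed by $m$) with $\omega_{P_m}(r_m)>2\varepsilon$. By the definition of the supremum in \cref{def:observation-ambiguity}, we pick $f_m,g_m\in\Fcal$ with
\begin{equation*}
  d_{P_m}\bigl(f_m(P_m),g_m(P_m)\bigr)\le r_m
  \quad\text{and}\quad
  d_{\Ucal}(f_m,g_m)>\varepsilon .
\end{equation*}
Since $(\Fcal,d_{\Ucal})$ is a compact metric space, it is sequentially compact. Passing to a further subsequence, we get $f_m\to f$ and $g_m\to g$ in $d_{\Ucal}$ for some $f,g\in\Fcal$. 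The triangle inequality then gives $d_{\Ucal}(f,g)\ge\varepsilon$, so $f\ne g$.

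Next we show that $f$ and $g$ agree on every query of the sequence. Fix $p\ge1$ and $q\in P_p$. Because the sequence is nested, $q\in P_m$ for all $m\ge p$. Hence, for these $m$,
\begin{equation*}
  |f(q)-g(q)|\le |f(q)-f_m(q)|+|f_m(q)-g_m(q)|+|g_m(q)-g(q)|
  \le d_{\Ucal}(f,f_m)+r_m+d_{\Ucal}(g_m,g).
\end{equation*}
All three terms tend to $0$ along the subsequence, so $f(q)=g(q)$. The nestedness of $(P_m)$ is exactly what lets a fixed query stay available at all later stages. As $q\in P_p$ and $p$ were arbitrary, $f(P_p)=g(P_p)$ for every $p$. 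This contradicts the separation hypothesis of \cref{def:state-separation} applied to $f\ne g$, so $\omega_{P_m}(r_m)\to0$.

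The main subtlety is bookkeeping rather than a hard estimate. First, we must make sure that $\omega_{P_m}(r_m)$ is finite and that the supremum can be approximated by actual pairs; this holds since $\Fcal$ is nonempty and compact, so its $d_{\Ucal}$-diameter is finite. Second, we pass to subsequences twice (once for $\varepsilon$, once for convergence); the nested structure survives this because any subsequence of a nested sequence is still eventually containing each $P_p$. Third, compactness of $(\Fcal,d_{\Ucal})$ is what upgrades pointwise agreement on $\bigcup_p P_p$ together with uniform convergence into a genuine limiting pair inside $\Fcal$.
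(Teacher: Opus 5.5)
Your proof is correct and follows the paper's argument essentially step for step. Like the paper, you argue by contradiction and extract pairs $f_m,g_m$ with $P_m$-discrepancy at most $r_m$ but $d_{\Ucal}$-gap bounded below. You then use compactness to obtain limits $f\ne g$, and nestedness to show $f(P_p)=g(P_p)$ at every stage, which contradicts separation. The only differences are cosmetic: you pass to the limit query by query via the triangle inequality rather than restricting whole observation vectors to $P_m$ as the paper does, and the finiteness of $\omega_{P_m}(r_m)$ you flag is not actually needed to choose the pairs.
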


\subsection{Prediction and Recovery}

What must a learned representation preserve for us to recover the state it describes? Hertz's account of representation centers on the correspondence between consequences~\citep[Introduction]{hertz1899mechanics}. Here we test that correspondence by predicting further responses from encoded observations. The preceding lemma explains why the targets matter: agreement on sufficiently rich readings constrains the difference between full states. Prediction can thereby test whether a representation retains the distinctions defining the state.

\begin{definition}[Prediction and Recovery]\label{def:prediction-recovery}
  The learning pipeline uses two disjoint query sets: $Q$ indexes the input observations $f(Q)$, and $P$ indexes the target observations $f(P)$. An \textit{encoder} $e:\Rbb^Q\to\Zcal$ maps $f(Q)$ to the learned state $e(f(Q))$ in a nonempty space $\Zcal$. From this learned state, a \textit{predictor} $b:\Zcal\to\Rbb^P$ estimates $f(P)$, while a \textit{state decoder} $a:\Zcal\to\Fcal$ reconstructs the ontic state $f$. Their errors are
  \begin{equation}\label{eq:paper-errors}
    \begin{aligned}
      \Lcal_{Q,P}(b;e)&\coloneqq\sup_{f\in\Fcal}d_P\bigl((b\circ e)(f(Q)),f(P)\bigr),\\
      \Rcal_Q(a;e)&\coloneqq\sup_{f\in\Fcal}d_{\Ucal}\bigl((a\circ e)(f(Q)),f\bigr).
    \end{aligned}
  \end{equation}
  Let $\Bcal_P$ and $\Acal$ be the classes of all maps of these respective types. For a fixed \textit{encoder}, define
  \begin{equation}\label{eq:paper-optimal-errors}
    \Lcal_{Q,P}^\star(e)\coloneqq\inf_{b\in\Bcal_P}\Lcal_{Q,P}(b;e),\qquad \Rcal_Q^\star(e)\coloneqq\inf_{a\in\Acal}\Rcal_Q(a;e).
  \end{equation}
\end{definition}
Both optimizations hold the \textit{encoder} fixed; $\Rcal_Q^\star(e)$ measures the recovery accuracy available from its learned state.

\begin{corollary}[Input Ambiguity]\label{cor:input-ambiguity}
  For a nonempty state family, query set $Q$, and \textit{encoder} $e$,
  \begin{equation}
    \Rcal_Q^\star(e)\ge\Ecal_Q^\star\ge\frac12\omega_Q(0).
  \end{equation}
\end{corollary}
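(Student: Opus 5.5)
The proof splits into two inequalities, each proved directly from the definitions.

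\emph{First inequality, $\Rcal_Q^\star(e)\ge\Ecal_Q^\star$.} Every composite of a state decoder with the encoder is itself an observation decoder. Given any state decoder $a\in\Acal$, set $a_Q\coloneqq a\circ e:\Rbb^Q\to\Fcal$. Then $a_Q\in\Acal_Q$, since $\Acal_Q$ contains all maps of this type. Comparing the definitions gives $\Ecal_Q(a_Q)=\Rcal_Q(a;e)$ termwise, so $\Ecal_Q^\star\le\Rcal_Q(a;e)$. Taking the infimum over $a$ yields the claim. (If $\Zcal$ admits no state decoders the infimum is $+\infty$ and the inequality is trivial; since $\Fcal$ is nonempty, constant decoders exist, so this case does not arise.)

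\emph{Second inequality, $\Ecal_Q^\star\ge\frac12\omega_Q(0)$.} This is the classical optimal-recovery lower bound. Fix $a_Q\in\Acal_Q$ and any $f,g\in\Fcal$ with $f(Q)=g(Q)$, i.e. $d_Q(f(Q),g(Q))=0$. Because the decoder sees identical inputs, it returns the same state $h\coloneqq a_Q(f(Q))=a_Q(g(Q))$. The triangle inequality for the supremum metric $d_\Ucal$ gives
\begin{equation*}
d_\Ucal(f,g)\le d_\Ucal(h,f)+d_\Ucal(h,g)\le 2\,\Ecal_Q(a_Q).
\end{equation*}
Here $d_\Ucal$ is finite because states are bounded. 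Taking the supremum over such pairs gives $\omega_Q(0)\le 2\Ecal_Q(a_Q)$. The set of pairs is nonempty because $f=g$ is allowed and $\Fcal\neq\emptyset$. Finally, take the infimum over $a_Q$.

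Neither step presents a real obstacle. The only points needing care are the bookkeeping ones: the $r=0$ constraint means exact equality of observation vectors, so the decoder outputs coincide, and the possibly infinite values of the suprema and infima are handled consistently.
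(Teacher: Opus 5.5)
Your proof is correct and follows the paper's argument. For the first inequality, you treat $a\circ e$ as an observation decoder with the same error. The second is the lower bound of \cref{lem:ambiguity-recovery}, which the paper derives from \cref{lem:fiberwise-readout} by the same triangle-inequality argument that you carry out inline rather than cite.
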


Input queries supply distinctions to the \textit{encoder}; targets test their preservation in its learned state.

\subsection{Low-Dimensional Geometry}

\begin{wrapfigure}{r}{.55\textwidth}
  \centering
  \noindent\makebox[\linewidth][c]{
\makebox[214.27057pt][c]{\raisebox{-20.00000pt}[84.84212pt][0.00000pt]{\includegraphics[width=254.27057pt,height=124.84212pt]{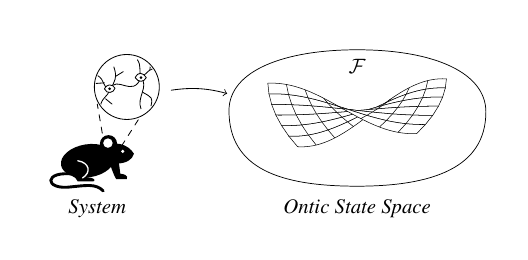}}}
}
  \caption{From population activity to state geometry. The neural inset motivates a low-dimensional model for the \textit{ontic state space} $\Fcal$.}
  \label{fig:ontic-manifold-covering}
\end{wrapfigure}

A neural recording contains many coordinates, yet activity within a task can vary through a few latent variables~\citep{Gallego2017Neuralmanifolds,Jazayeri2021Dimensions}. This organization can constrain which patterns are learned~\citep{Sadtler2014Neuralconstraints} and take forms such as the toroidal geometry of grid-cell activity~\citep{Gardner2022Toroidal}. The resulting state family may curve through a much larger linear space~\citep{DeChaudhuri2023}.

Recovering a state within such a family requires responses that distinguish its members. Classical population decoding illustrates how neural responses carry this information: movement direction can be read from motor-cortical ensembles~\citep{Georgopoulos1986Population}, and position reconstructed from hippocampal place-cell activity~\citep{Zhang1998Reconstruction}. Models of population codes explain how neuronal tuning and background activity affect readout accuracy~\citep{Seung1993Population}.

These observations lead to two quantitative requirements (\cref{fig:ontic-manifold-covering}). The state family should admit a controlled number of representatives at each resolution, and queries outside the inputs should distinguish its members. Covering numbers express the first requirement; observability expresses the second.

\begin{assumption}[Low-Dimensional Observability]\label{ass:low-dimensional-manifold}
  For a query set $Q$ with $\Ucal\setminus Q\ne\varnothing$, there are constants $C\ge1$, $d>0$, and $\alpha\in(0,1]$ such that
  \begin{equation}\label{eq:state-covering-growth}
    \Ncal_{\Fcal}(\varepsilon)\le C\varepsilon^{-d}, \qquad 0<\varepsilon\le1,
  \end{equation}
  where $\Ncal_{\Fcal}(\varepsilon)$ is the covering number: the fewest closed $d_{\Ucal}$-balls of radius $\varepsilon$, centered in $\Fcal$, that cover $\Fcal$. The queries outside $Q$ distinguish states quantitatively:
  \begin{equation}\label{eq:target-observability}
    \sup_{p\in\Ucal\setminus Q}|f(p)-g(p)|\ge\alpha d_{\Ucal}(f,g),\qquad f,g\in\Fcal.
  \end{equation}
\end{assumption}


\section{Main Results}\label{sec:paper-recovery}

Prediction tests the distinctions retained by a learned state. If an \textit{encoder} maps two states to the same learned state, every \textit{predictor} must return the same target observations for both. Their target difference therefore imposes an unavoidable prediction error.

\begin{lemma}[Prediction within Fibers]\label{lem:optimal-prediction-error}
  Assume $(\Fcal,d_{\Ucal})$ is nonempty and bounded. Fix a query set $Q\subseteq\Ucal$, a nonempty finite target query set $P\subseteq\Ucal\setminus Q$, and an \textit{encoder} $e:\Rbb^Q\to\Zcal$. The optimal prediction error is attained and equals
  \begin{equation}\label{eq:optimal-prediction-fibers}
    \Lcal_{Q,P}^\star(e)=\frac12\sup_{\substack{f,g\in\Fcal\\e(f(Q))=e(g(Q))}}d_P(f(P),g(P)).
  \end{equation}
\end{lemma}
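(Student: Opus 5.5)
The plan is to reduce the optimization over predictors to a Chebyshev-center problem in $(\Rbb^P,d_P)$ on each fiber of the encoder. Write $\Zcal_e\coloneqq\{e(f(Q)):f\in\Fcal\}$ for the attained learned states. For $z\in\Zcal_e$ let $S_z\coloneqq\{f(P): f\in\Fcal,\ e(f(Q))=z\}\subseteq\Rbb^P$. These sets are bounded: each coordinate satisfies $|f(p)-g(p)|\le d_{\Ucal}(f,g)$, and $(\Fcal,d_{\Ucal})$ is bounded. Any predictor $b$ satisfies
\begin{equation*}
  \Lcal_{Q,P}(b;e)=\sup_{z\in\Zcal_e}\sup_{y\in S_z}d_P(b(z),y),
\end{equation*}
and values of $b$ off $\Zcal_e$ are irrelevant. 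Set $D\coloneqq\sup_{z\in\Zcal_e}\operatorname{diam}(S_z)$, where the diameter is taken in $d_P$. Then $D$ is exactly the supremum on the right of \eqref{eq:optimal-prediction-fibers}, since $e(f(Q))=e(g(Q))$ means that $f(P)$ and $g(P)$ lie in a common $S_z$.

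For the lower bound, fix any $b$ and any $f,g$ with $e(f(Q))=e(g(Q))=z$. The triangle inequality gives
\begin{equation*}
  d_P(f(P),g(P))\le d_P(b(z),f(P))+d_P(b(z),g(P))\le 2\Lcal_{Q,P}(b;e).
\end{equation*}
Taking the supremum over such pairs yields $\Lcal_{Q,P}(b;e)\ge D/2$ for every $b$, hence $\Lcal_{Q,P}^\star(e)\ge D/2$.

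For the upper bound with attainment, I use the fact that the sup-norm on the finite-dimensional space $\Rbb^P$ admits an explicit Chebyshev center of radius exactly half the diameter. For each $z\in\Zcal_e$ and $p\in P$, put
\begin{equation*}
  m_p(z)\coloneqq\inf_{y\in S_z}y_p,\qquad M_p(z)\coloneqq\sup_{y\in S_z}y_p,\qquad b(z)_p\coloneqq\tfrac12\bigl(m_p(z)+M_p(z)\bigr).
\end{equation*}
These quantities are finite because $S_z$ is bounded. On $\Zcal\setminus\Zcal_e$, define $b$ arbitrarily, say $b\equiv0$. For $y\in S_z$ we have $|b(z)_p-y_p|\le\frac12(M_p(z)-m_p(z))$. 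Moreover $M_p(z)-m_p(z)=\sup_{y,y'\in S_z}(y_p-y'_p)\le\operatorname{diam}(S_z)\le D$. Maximizing over $p$, which is legitimate because $P$ is finite (this is where finiteness and nonemptiness of $P$ enter), gives $d_P(b(z),y)\le D/2$. Therefore $\Lcal_{Q,P}(b;e)\le D/2$, so the infimum is attained and equals $D/2$.

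The only step needing care is this coordinatewise-midpoint construction. It is the key to exact attainment: in a general metric space, a Chebyshev radius can exceed half the diameter. The construction works here precisely because $d_P$ is the $\ell^\infty$ metric, so the $\ell^\infty$ diameter splits into per-coordinate ranges. Also $D<\infty$ by boundedness of $\Fcal$, which makes all expressions finite. No choice issues arise beyond defining $b$ pointwise on $\Zcal$.
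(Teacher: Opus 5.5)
Your proof is correct and follows the paper's route: the paper obtains this lemma by applying its Fiberwise Error Bound lemma with $T(f)=e(f(Q))$ and $G(f)=f(P)$ in $(\Rbb^P,d_P)$, and that lemma's proof consists of your triangle-inequality lower bound plus the coordinatewise-midpoint predictor attaining half the fiber diameter. You have simply inlined that general lemma, with the boundedness of the target fibers obtained, as in the paper, from the contraction of $d_{\Ucal}$ under restriction to $P$.
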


In the one-coordinate example, two states can have identical target readings while differing elsewhere. A \textit{predictor} can be exact on those targets even if its \textit{encoder} merges the states. Recovery depends on how much full-state difference those same readings permit.

\subsection{General Case}

Target ambiguity controls how prediction accuracy translates into state recovery.

\begin{theorem}[General Prediction-Recovery Bound]\label{thm:paper-general-bounds}
  Let $(\Fcal,d_{\Ucal})$ be nonempty and bounded. For disjoint query sets $Q,P$ and any \textit{encoder} $e:\Rbb^Q\to\Zcal$,
  \begin{equation}\label{eq:paper-general-bounds}
    \mathmakebox[0.28\linewidth][r]{\Lcal_{Q,P}^\star(e)}\le\boxed{\Rcal_Q^\star(e)}\le\mathmakebox[0.28\linewidth][l]{\omega_P\bigl(2\Lcal_{Q,P}^\star(e)\bigr).}
  \end{equation}
  The upper bound also holds with $\Lcal_{Q,P}(b;e)$ for any \textit{predictor} $b$ with finite error.
\end{theorem}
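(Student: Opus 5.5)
The two inequalities in \cref{thm:paper-general-bounds} are proved separately, each by building one map from another. For the lower bound $\Lcal_{Q,P}^\star(e)\le\Rcal_Q^\star(e)$, we turn a state decoder into a predictor. Given any $a\in\Acal$, define $b_a:\Zcal\to\Rbb^P$ by $b_a(z)\coloneqq a(z)(P)$, that is, evaluate the reconstructed state on the target queries. For every $f\in\Fcal$, restricting the supremum from $\Ucal$ to $P$ can only decrease it, so
\begin{equation*}
d_P\bigl(b_a(e(f(Q))),f(P)\bigr)\le d_{\Ucal}\bigl(a(e(f(Q))),f\bigr).
\end{equation*}
Taking suprema over $f$ gives $\Lcal_{Q,P}(b_a;e)\le\Rcal_Q(a;e)$. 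Taking the infimum over $a$ then gives the claim, because $b_a\in\Bcal_P$. Boundedness of $\Fcal$ keeps all the quantities finite, and elements of $\Fcal$ are bounded functions, so $b_a$ takes values in $\Rbb^P$.

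For the upper bound, fix any predictor $b$ with $L\coloneqq\Lcal_{Q,P}(b;e)<\infty$; the attained case is handled at the end. We build a state decoder by selection. For $z$ in the image $e(\{f(Q):f\in\Fcal\})$, choose some $a(z)\in\Fcal$ with $e(a(z)(Q))=z$, which is possible by the axiom of choice. For other $z$, set $a(z)$ to an arbitrary element of the nonempty family $\Fcal$. Now fix $f$ and write $g\coloneqq a(e(f(Q)))$. Then $e(g(Q))=e(f(Q))$, so $b$ gives the same output $y$ for both states. By the triangle inequality in $d_P$,
\begin{equation*}
d_P(f(P),g(P))\le d_P(f(P),y)+d_P(y,g(P))\le 2L.
\end{equation*}
The definition of $\omega_P$ then yields $d_{\Ucal}(g,f)\le\omega_P(2L)$. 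Taking the supremum over $f$ gives $\Rcal_Q^\star(e)\le\Rcal_Q(a;e)\le\omega_P(2\Lcal_{Q,P}(b;e))$. This is the version of the upper bound stated for an arbitrary predictor.

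To pass to $\Lcal_{Q,P}^\star(e)$, the cleanest route is to use the attainment in \cref{lem:optimal-prediction-error}. That lemma supplies a predictor $b^\star$ with $\Lcal_{Q,P}(b^\star;e)=\Lcal_{Q,P}^\star(e)$, which is finite by boundedness, and substituting $b^\star$ into the previous step gives the bound. One could also bypass predictors entirely. The fiber formula \eqref{eq:optimal-prediction-fibers} directly gives $d_P(f(P),g(P))\le 2\Lcal_{Q,P}^\star(e)$ for any $f,g$ in the same fiber, and this is exactly what the selection argument uses.

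The main obstacle is a hypothesis mismatch. \cref{lem:optimal-prediction-error} is stated for nonempty finite $P$, while the theorem only assumes $P$ disjoint from $Q$. If $P$ is finite, the lemma applies directly. Otherwise, the second route above still works, because the step ``$e(f(Q))=e(g(Q))$ implies $d_P(f(P),g(P))\le2\Lcal_{Q,P}^\star(e)$'' only needs the elementary half of the fiber lemma. That half follows without attainment: for every $\eta>0$, take a predictor within $\eta$ of the infimum, apply the triangle inequality as above, and let $\eta\to0$.
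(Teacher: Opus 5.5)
Your proof is correct and follows essentially the paper's argument. For the upper bound you decode to a representative of each encoder fiber and use that same-fiber states satisfy $d_P(f(P),g(P))\le 2\Lcal_{Q,P}^\star(e)$, exactly as the paper does. For the lower bound you restrict a decoded state to $P$ and apply this directly to each decoder; the paper instead routes the same idea through the fiber formula of \cref{lem:optimal-prediction-error} and the half-diameter bound of \cref{lem:fiberwise-readout}.

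The finiteness concern in your last paragraph does not arise. A query set is by definition nonempty and finite, so \cref{lem:optimal-prediction-error} applies directly to $P$.
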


When the state space is compact, a growing set of separating targets resolves the remaining ambiguity. Prediction and recovery then converge together.

\begin{theorem}[General Prediction-Recovery Equivalence]\label{thm:paper-general-equivalence}
  Let $(\Fcal,d_{\Ucal})$ be nonempty and compact, and let $(P_m)_{m\ge1}$ be a nested query sequence separating states. For each $m$, choose a query set $Q_m\subseteq\Ucal\setminus P_m$ and an \textit{encoder} $e_m:\Rbb^{Q_m}\to\Zcal_m$. Then
  \begin{equation}\label{eq:paper-general-equivalence}
    \Lcal_{Q_m,P_m}^\star(e_m)\to0\quad\Longleftrightarrow\quad\Rcal_{Q_m}^\star(e_m)\to0.
  \end{equation}
\end{theorem}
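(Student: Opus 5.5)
The plan is to derive both directions of \eqref{eq:paper-general-equivalence} from the two-sided bound of \cref{thm:paper-general-bounds}, applied with $Q=Q_m$, $P=P_m$, $e=e_m$. This is legitimate because $Q_m\subseteq\Ucal\setminus P_m$ makes the two query sets disjoint, and compactness implies boundedness. For every $m$ this gives
\begin{equation*}
  \Lcal_{Q_m,P_m}^\star(e_m)\le\Rcal_{Q_m}^\star(e_m)\le\omega_{P_m}\bigl(2\Lcal_{Q_m,P_m}^\star(e_m)\bigr).
\end{equation*}

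The direction ``$\Leftarrow$'' is then immediate. The left inequality squeezes the nonnegative quantity $\Lcal_{Q_m,P_m}^\star(e_m)$ between $0$ and $\Rcal_{Q_m}^\star(e_m)\to0$.

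For ``$\Rightarrow$'', set $r_m\coloneqq2\Lcal_{Q_m,P_m}^\star(e_m)$. These are nonnegative tolerances with $r_m\to0$.
\begin{itemize}
  \item Each $r_m$ is finite. Compactness makes $\Fcal$ bounded in $d_{\Ucal}$, so by \cref{lem:optimal-prediction-error} the optimal prediction error is at most $\tfrac12\operatorname{diam}\Fcal<\infty$. This needs $P_m$ nonempty and finite, which holds because $P_m$ is a query set.
  \item The targets $(P_m)_{m\ge1}$ form a nested sequence separating states in $\Fcal$, and $(\Fcal,d_{\Ucal})$ is nonempty and compact. These are exactly the hypotheses of \cref{lem:vanishing-ambiguity}, which therefore yields $\omega_{P_m}(r_m)\to0$.
\end{itemize}
The right inequality above then forces $\Rcal_{Q_m}^\star(e_m)\to0$.

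The only substantive content sits in \cref{lem:vanishing-ambiguity}; here it is taken as given. The one point to check is that the input sets $Q_m$ play no role in that lemma. The lemma concerns only the ambiguity of the targets $P_m$, and the varying encoders and input sets enter solely through the scalar tolerances $r_m$. So no uniformity over the $e_m$ or the $Q_m$ is needed.

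One further subtlety is that $\omega_{P_m}$ is evaluated at a tolerance that is an infimum. This causes no difficulty: the upper bound in \cref{thm:paper-general-bounds} is stated directly at $2\Lcal^\star$, and \cref{lem:optimal-prediction-error} shows the infimum is attained in any case.
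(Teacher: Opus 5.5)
Your proposal is correct and matches the paper's proof: the paper also uses the two-sided bound of \cref{thm:paper-general-bounds}, squeezing for the reverse direction and applying \cref{lem:vanishing-ambiguity} with tolerances $2\Lcal_{Q_m,P_m}^\star(e_m)$ for the forward direction. Your extra checks are the routine verifications the paper leaves implicit: disjointness from $Q_m\subseteq\Ucal\setminus P_m$, boundedness from compactness, and finiteness of the tolerances.
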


For fixed input queries and a fixed \textit{encoder}, vanishing prediction error along such a separating sequence leaves no distinct ontic states with the same learned state: an exact \textit{state decoder} exists (\cref{cor:general-state-recovery}). If the map to a Euclidean learned state is also continuous, compactness makes it a homeomorphism onto its image (\cref{cor:manifold-recovery}). Under these conditions, preserving the distinctions that define states also preserves the topology of the state space.

\subsection{Low-Dimensional Case}

\Cref{ass:low-dimensional-manifold} combines a bound on the number of representative states with the ability to distinguish them through queries outside $Q$. Together, these conditions give a finite target budget.

\begin{theorem}[Low-Dimensional Prediction-Recovery Bound]\label{thm:paper-low-bounds}
  Let $\Fcal$ be nonempty and let $(\Fcal,Q)$ satisfy~\cref{ass:low-dimensional-manifold} with constants $C,d,\alpha$. For every integer $m\ge C^2/2$, there is a target set $P\subseteq\Ucal\setminus Q$, $1\le|P|\le m$, such that every \textit{encoder} $e$ satisfies
  \begin{equation}\label{eq:paper-low-bounds}
    \Lcal_{Q,P}^\star(e)\le\boxed{\Rcal_Q^\star(e)}\le\frac{2}{\alpha}\Lcal_{Q,P}^\star(e)+\frac{2+\alpha}{\alpha}\left(\frac{C^2}{2m}\right)^{\frac{1}{2d}}.
  \end{equation}
  The upper bound also holds with $\Lcal_{Q,P}(b;e)$ for any \textit{predictor} $b$ with finite error.
\end{theorem}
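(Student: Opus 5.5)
The lower bound $\Lcal_{Q,P}^\star(e)\le\Rcal_Q^\star(e)$ needs no new work: it is the left inequality of \cref{thm:paper-general-bounds}, valid for any disjoint $Q,P$ once $\Fcal$ is bounded. Boundedness follows from \cref{ass:low-dimensional-manifold}, since $\Fcal$ is covered by finitely many balls of radius $1$. The same theorem gives $\Rcal_Q^\star(e)\le\omega_P(2\Lcal_{Q,P}^\star(e))$, and also the version with $\Lcal_{Q,P}(b;e)$ for any \textit{predictor} $b$ with finite error. So the whole task is to build a single target set $P\subseteq\Ucal\setminus Q$ with $1\le|P|\le m$, chosen independently of $e$, for which
\begin{equation*}
  \omega_P(r)\le\frac{r}{\alpha}+\frac{2+\alpha}{\alpha}\,\varepsilon,\qquad \varepsilon\coloneqq\left(\frac{C^2}{2m}\right)^{\frac{1}{2d}},
\end{equation*}
holds for every $r\ge0$. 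Substituting $r=2\Lcal_{Q,P}^\star(e)$, or $r=2\Lcal_{Q,P}(b;e)$, then gives \cref{eq:paper-low-bounds}.

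\emph{Construction of $P$.} The hypothesis $m\ge C^2/2$ is exactly what makes $\varepsilon\le1$, so \cref{eq:state-covering-growth} applies. It provides centers $f_1,\dots,f_N\in\Fcal$ with
\begin{equation*}
  N\le C\varepsilon^{-d},\qquad\text{so}\qquad \frac{N^2}{2}\le\frac{C^2\varepsilon^{-2d}}{2}=m .
\end{equation*}
For each unordered pair $\{f_i,f_j\}$ with $f_i\ne f_j$, I use \cref{eq:target-observability} to pick a query $p_{ij}\in\Ucal\setminus Q$ whose difference $|f_i(p_{ij})-f_j(p_{ij})|$ nearly attains $\alpha\, d_{\Ucal}(f_i,f_j)$. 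I then let $P$ be the set of all $p_{ij}$, so $|P|\le N(N-1)/2\le m$. If $N=1$, there are no pairs, and I take $P$ to be any single query in $\Ucal\setminus Q$, which is nonempty by assumption.

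\emph{Ambiguity estimate.} Take $f,g\in\Fcal$ with $d_P(f(P),g(P))\le r$, and choose centers $f_i,f_j$ within $\varepsilon$ of $f$ and $g$. Since sup-distances on $P$ are dominated by those on $\Ucal$, the triangle inequality gives $d_P(f_i(P),f_j(P))\le r+2\varepsilon$. The query $p_{ij}\in P$ then yields
\begin{equation*}
  \alpha\, d_{\Ucal}(f_i,f_j)\lesssim r+2\varepsilon .
\end{equation*}
Combining this with $d_{\Ucal}(f,g)\le d_{\Ucal}(f_i,f_j)+2\varepsilon$ bounds $d_{\Ucal}(f,g)$ linearly in $r$ and $\varepsilon$, with the coefficient $1/\alpha$ on $r$ as required.

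\emph{Main obstacle: the constants.} The naive estimate above produces $(2+2\alpha)/\alpha$ rather than $(2+\alpha)/\alpha$ in front of $\varepsilon$, and it also leaves a small slack because the supremum in \cref{eq:target-observability} need not be attained.

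For the slack, I would first check whether the suprema over the finitely many pairs are attained. If they are not, I would shrink $\varepsilon$ slightly, or use the strict room left in $N^2/2\le m$, to absorb an arbitrarily small defect, since $P$ must be fixed before passing to any limit.

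For the $\varepsilon$ constant, I would try to save one $\varepsilon$ by comparing only $g$ to a center. The idea is to bound $d_{\Ucal}(f,g)\le\varepsilon+d_{\Ucal}(f_i,g)$ and to route observability through the pair $(f_i,f_j)$ in a way that does not pay $\varepsilon$ twice. Alternatively, I would select the pair queries relative to the covering balls rather than only the centers. I would try this refinement before settling for the weaker constant.
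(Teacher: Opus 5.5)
\textbf{There is a genuine gap.} It is the one you flagged, but sharper bookkeeping cannot close it, because the loss is built into the reduction $\Rcal_Q^\star(e)\le\omega_P(2\Lcal_{Q,P}^\star(e))$. That reduction bounds recovery by the diameter of an \textit{encoder} fiber, and both endpoints of that diameter are arbitrary states. Each must be moved to a representative before your pairwise queries say anything, so the covering radius is paid twice in $d_{\Ucal}$ and twice in $d_P$.

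The loss really occurs for your construction. Write $\delta_m=(C^2/(2m))^{1/(2d)}$ and let $\varepsilon\le\delta_m$ be your covering radius. Take $\alpha=1$ and non-input queries $p,u$. Let representatives $f_i,f_j$ have responses $(0,0)$ and $(a,a)$ on $(p,u)$, so your construction may select $P=\{p\}$. Let states $f,g$ have responses $(\varepsilon,-\varepsilon)$ and $(a-\varepsilon,a+\varepsilon)$, and let the single input query separate these four states by at most $\varepsilon$. For an \textit{encoder} merging only $f$ and $g$, we get $2\Lcal_{Q,P}^\star(e)=a-2\varepsilon$ and $\omega_P(2\Lcal_{Q,P}^\star(e))\ge d_{\Ucal}(f,g)=2\Lcal_{Q,P}^\star(e)+4\varepsilon$. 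This exceeds the required $2\Lcal_{Q,P}^\star(e)+3\delta_m$ whenever $\varepsilon>3\delta_m/4$. Yet decoding the merged pair to $f_i$ costs only $a+\varepsilon=2\Lcal_{Q,P}^\star(e)+3\varepsilon$.

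Your first refinement would need observability between $f_i$ and the non-representative $g$, which $P$ does not supply. Shrinking $\varepsilon$ below $3\delta_m/4$ does not help either: the covering bound would then allow on the order of $(4/3)^{2d}m$ pair queries rather than $m$.

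\textbf{The missing idea} is to bypass $\omega_P$ and build the \textit{state decoder} from the \textit{predictor}. Given a learned state $z$, let $a(z)$ be a representative $g\in\Fcal_\varepsilon$ minimizing $d_P(b(z),g(P))$. Set $y=(b\circ e)(f(Q))$ and take $f_\varepsilon\in\Fcal_\varepsilon$ within $\varepsilon$ of $f$. Both $f_\varepsilon$ and $g$ are representatives, so the pairwise guarantee with allowance $\tau$ and the minimizing property of $g$ give
\[ d_{\Ucal}(f,g)\le\varepsilon+\frac{d_P(f_\varepsilon(P),g(P))+\tau}{\alpha}\le\varepsilon+\frac{2d_P(y,f_\varepsilon(P))+\tau}{\alpha}\le\frac{2}{\alpha}d_P(y,f(P))+\frac{(2+\alpha)\varepsilon+\tau}{\alpha}. \]
The covering error is now paid only once in $d_{\Ucal}$. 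The factor $2$ on the prediction error comes from nearest-representative selection rather than from a fiber diameter. Taking the supremum over $f$ and then the infimum over $b$ gives the upper bound, including the version with $\Lcal_{Q,P}(b;e)$.

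For the slack, choose $\varepsilon<\delta_m$ close enough that the integer $N\le C\varepsilon^{-d}$ still satisfies $N\le\lfloor\sqrt{2m}\rfloor$. Then set $\tau=(2+\alpha)(\delta_m-\varepsilon)$, so that $(2+\alpha)\varepsilon+\tau=(2+\alpha)\delta_m$. Your treatment of the lower bound and of boundedness is correct as it stands.
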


The recovery allowance vanishes as the target budget grows, yielding convergence and rate equivalences.

\begin{theorem}[Low-Dimensional Prediction-Recovery Equivalence]\label{thm:paper-low-equivalence}
  Let $\Fcal$ be nonempty. Assume $(\Fcal,Q_m)$ satisfies~\cref{ass:low-dimensional-manifold} with common $C,d,\alpha$ for every integer $m\ge C^2/2$, and choose $P_m$ by~\cref{thm:paper-low-bounds}. Then every \textit{encoder} sequence $(e_m)$ satisfies
  \begin{equation}
    \Lcal_{Q_m,P_m}^\star(e_m)\to0\quad\Longleftrightarrow\quad\Rcal_{Q_m}^\star(e_m)\to0.
  \end{equation}
  Moreover,
  \begin{equation}\label{eq:paper-low-rate-equivalence}
    \Lcal_{Q_m,P_m}^\star(e_m)=O(m^{-1/(2d)})\quad\Longleftrightarrow\quad\Rcal_{Q_m}^\star(e_m)=O(m^{-1/(2d)}).
  \end{equation}
\end{theorem}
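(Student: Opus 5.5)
We derive both equivalences directly from the two-sided bound of \cref{thm:paper-low-bounds}, applied with $Q=Q_m$ and $P=P_m$ for each integer $m\ge C^2/2$. The constants $C,d,\alpha$ are common to all $m$, so for every encoder $e_m$ we have
\begin{equation*}
  \Lcal_{Q_m,P_m}^\star(e_m)\le\Rcal_{Q_m}^\star(e_m)\le\frac{2}{\alpha}\Lcal_{Q_m,P_m}^\star(e_m)+K m^{-1/(2d)},\qquad K\coloneqq\frac{2+\alpha}{\alpha}\Bigl(\frac{C^2}{2}\Bigr)^{\frac{1}{2d}}.
\end{equation*}
The constant $K$ does not depend on $m$, and the allowance $Km^{-1/(2d)}$ tends to zero because $d>0$. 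Only finitely many indices $m<C^2/2$ lie outside this range. Neither convergence nor an $O(\cdot)$ statement is affected by finitely many terms, so we may ignore those indices. This is harmless even if $P_m$ is undefined there: both statements are asymptotic in $m$.

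For the convergence equivalence, first suppose $\Rcal_{Q_m}^\star(e_m)\to0$. The lower bound $0\le\Lcal^\star\le\Rcal^\star$ then squeezes $\Lcal_{Q_m,P_m}^\star(e_m)$ to zero. Conversely, suppose $\Lcal_{Q_m,P_m}^\star(e_m)\to0$. Then both terms on the right of the upper bound vanish, so $\Rcal_{Q_m}^\star(e_m)\to0$.

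The rate equivalence follows in the same way. If $\Rcal^\star=O(m^{-1/(2d)})$, the lower bound transfers that rate to $\Lcal^\star$. If $\Lcal^\star\le K'm^{-1/(2d)}$ for all large $m$, the upper bound gives
\begin{equation*}
  \Rcal^\star\le\Bigl(\frac{2K'}{\alpha}+K\Bigr)m^{-1/(2d)}.
\end{equation*}
The exponent is the same because the allowance already decays at exactly the rate $m^{-1/(2d)}$.

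The only step needing care is uniformity. The upper bound from \cref{thm:paper-low-bounds} must hold simultaneously for the optimal quantities at every $m$ with the same $C,d,\alpha$. This is guaranteed by the hypothesis that these constants are common to all $m$. We should also check that the optimal errors are finite, so the inequalities are meaningful. In the converse directions this follows from the hypothesis itself, since a quantity tending to zero or of order $O(m^{-1/(2d)})$ is finite for all large $m$. For any remaining $m$ the inequalities hold in the extended reals and do not affect the asymptotic conclusions.
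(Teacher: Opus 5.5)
Your proposal is correct and follows essentially the same route as the paper. Both arguments rest on the sandwich $\Lcal_{Q_m,P_m}^\star(e_m)\le\Rcal_{Q_m}^\star(e_m)\le\frac{2}{\alpha}\Lcal_{Q_m,P_m}^\star(e_m)+\frac{2+\alpha}{\alpha}(C^2/(2m))^{1/(2d)}$ from \cref{thm:paper-low-bounds}, using the upper bound for the prediction-to-recovery directions and the lower bound for the converses. Your extra finiteness check is harmless but unnecessary, since the covering assumption already makes $\Fcal$ bounded and hence both optimal errors finite.
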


\subsection{Sharpness on a Fixed State Family}

We next examine whether the budget rate can be improved when one target set serves every \textit{encoder}. For $\alpha=1$, we measure the additive recovery cost beyond $2\Lcal^\star_{Q,P}(e)$ on a fixed state family. Here $d$ is the covering exponent; the family has topological dimension one.

\begin{definition}[Uniform Transfer Residual]\label{def:uniform-transfer-residual}
  Let $\mathcal E_{\mathrm{Lip}}$ be the globally Lipschitz maps $e:\Rbb^Q\to\Rbb^2$. For an integer $m\ge1$, the \emph{uniform transfer residual} is
  \begin{equation}\label{eq:paper-transfer-residual}
    \mathcal T_m(\Fcal,Q)\coloneqq\inf_{\substack{P\subseteq\Ucal\setminus Q\\1\le|P|\le m}}\sup_{e\in\mathcal E_{\mathrm{Lip}}}\bigl[\Rcal_Q^\star(e)-2\Lcal_{Q,P}^\star(e)\bigr].
  \end{equation}
\end{definition}
The target set is selected before the \textit{encoder}. The family below has identifying inputs and even a single query outside $Q$ whose exact response identifies the state (\cref{eq:fixed-family-exact-sampling}). An \textit{encoder} can nevertheless merge distinct states. The residual measures the additive cost of controlling their recovery by twice the prediction error, uniformly across \textit{encoders}.

\begin{theorem}[Sharp Uniform Transfer]\label{thm:paper-fixed-transfer}
  For every $d>1$, there exist a fixed countable query universe $\Ucal$, a singleton input set $Q$, and a compact state family $\Fcal$ homeomorphic to $[0,1]$, of diameter $1$. The pair $(\Fcal,Q)$ satisfies~\cref{ass:low-dimensional-manifold} with $C=C_d$ and $\alpha=1$, and its input observation map is injective. For every integer $m\ge C_d^2/2$,
  \begin{equation}\label{eq:paper-fixed-transfer}
    \mathmakebox[0.28\linewidth][r]{\frac16m^{-1/(2d)}}\le\boxed{\mathcal T_m(\Fcal,Q)}\le\mathmakebox[0.28\linewidth][l]{3\left(\frac{C_d^2}{2m}\right)^{1/(2d)}.}
  \end{equation}
  The supremum in~\cref{eq:paper-transfer-residual} has the same value when restricted to $1$-Lipschitz \textit{encoders}.
\end{theorem}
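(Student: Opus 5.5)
The proof splits into a construction and two bounds. The upper bound on $\mathcal T_m$ should follow from \cref{thm:paper-low-bounds} with $\alpha=1$. The lower bound needs a packing argument on the constructed family, and that is the real work.

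\textbf{Construction.} Fix $d>1$. I will build $\Fcal$ as the image of a curve $t\mapsto f_t$, $t\in[0,1]$, with three properties:
\begin{itemize}
\item the covering numbers grow like $\varepsilon^{-d}$, so the curve is a ``Hölder snowflake'' rather than a rectifiable arc;
\item the single input query $q_0$ returns $f_t(q_0)=t$, which makes the input map injective;
\item one distinguished query $p_0\notin Q$ also identifies $t$.
\end{itemize}
Concretely, I take $\Ucal=\{q_0,p_0\}\cup\{p_{k,j}\}$. Here $k\ge1$ and $j$ ranges over the $\sim N_k$ dyadic-type cells at scale $k$, with $N_k\approx 2^{kd}$ cells of width $1/N_k$ and amplitude $2^{-k}$. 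I set $f_t(p_{k,j})=2^{-k}\phi(N_kt-j)$ for a tent bump $\phi$ with $0\le\phi\le1$. This gives $|f_t-f_s|_{\infty}\asymp |t-s|^{1/d}$ at the relevant scales, which yields $\Ncal_{\Fcal}(\varepsilon)\le C_d\varepsilon^{-d}$. I will then normalize the diameter to $1$.

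\textbf{Observability.} Observability with $\alpha=1$ holds because $d_{\Ucal}$ is already attained off $q_0$. The supremum over $\Ucal\setminus Q$ dominates the $q_0$ coordinate once the scale-$k$ bumps have amplitude at least $|t-s|$ in the relevant regime, and I can arrange this by tuning $f_t(p_0)$. Compactness and the homeomorphism with $[0,1]$ follow from continuity and injectivity of $t\mapsto f_t$.

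\textbf{Upper bound.} Apply \cref{thm:paper-low-bounds} with $\alpha=1$. It gives, for a single $P$ and every encoder,
\begin{equation*}
\Rcal_Q^\star(e)-2\Lcal^\star_{Q,P}(e)\le 3\bigl(C_d^2/(2m)\bigr)^{1/(2d)}.
\end{equation*}
Taking the supremum over $e\in\mathcal E_{\mathrm{Lip}}$ and then the infimum over $P$ gives the right inequality.

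\textbf{Lower bound.} This is the main obstacle. Fix any $P$ with $|P|\le m$; I must exhibit a Lipschitz $e:\Rbb\to\Rbb^2$ with $\Rcal_Q^\star(e)-2\Lcal^\star_{Q,P}(e)\ge\frac16m^{-1/(2d)}$. The plan is a pigeonhole argument:
\begin{enumerate}
\item Choose the scale $k$ with $N_k\approx 2^{kd}$ somewhat larger than $m$. Since $P$ touches at most $m$ of the scale-$k$ queries, some scale-$k$ cell $j$ is missed by $P$.
\item Inside that cell, pick $s<t$ whose profiles differ by about $2^{-k}\gtrsim m^{-1/(2d)}$ at $p_{k,j}$, but differ only by $O(|t-s|)$ or by smaller finer-scale amounts on every query in $P$.
\item Let $e$ collapse the interval $[s,t]$ of input values to a point. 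A piecewise-linear retraction into $\Rbb^2$ (even into $\Rbb$) is $1$-Lipschitz.
\end{enumerate}
By \cref{lem:optimal-prediction-error}, $\Lcal^\star_{Q,P}(e)$ is half the maximal target spread over this fiber. By \cref{cor:input-ambiguity}-style reasoning, $\Rcal^\star_Q(e)$ is at least half the fiber's full $d_{\Ucal}$-diameter, since merged states must share one decoded state.

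The delicate part is the bookkeeping in step 2. The targets in $P$ at coarser scales, and $p_0$ itself, do see the difference $|t-s|$. I must choose $|t-s|\ll 2^{-k}$ relative to these so that the target spread stays small, while the fiber still straddles a full scale-$k$ bump. Using the $2^{kd}$ width against $2^{-k}$ amplitude with $d>1$ makes $|t-s|\approx 2^{-kd}$ negligible against $2^{-k}$; this is exactly where $d>1$ enters. The constant $\frac16$ should emerge after accounting for the factor $\tfrac12$ in recovery, the factor $2\cdot\tfrac12$ on prediction, and the rounding of $k$. Finally, since the encoder used is $1$-Lipschitz and the upper bound holds for all encoders, both bounds remain valid when the supremum is restricted to $1$-Lipschitz encoders.
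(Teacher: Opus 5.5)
\textbf{There is a genuine gap: the lower bound fails for your family, and the failure is in the rate, not the bookkeeping.} Your upper bound via \cref{thm:paper-low-bounds} with $\alpha=1$ is exactly the paper's argument. The problem is step~1, which needs $N_k>m$ so that some scale-$k$ query is missed. That forces $2^{-k}=N_k^{-1/d}<m^{-1/d}$, so the claim $2^{-k}\gtrsim m^{-1/(2d)}$ is false.

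No choice of encoder repairs this. Let $P$ consist of $p_0$ (with $f_t(p_0)=t$) and every query at scales $k\le K$, where $K$ is largest with $1+\sum_{k\le K}N_k\le m$; then $2^{-K-1}=O(m^{-1/d})$. Every query outside $P\cup\{q_0\}$ takes values in $[0,2^{-K-1}]$, and $q_0$ is duplicated by $p_0$. Hence $d_{\Ucal}(f,g)\le d_P(f(P),g(P))+2^{-K-1}$. By \cref{lem:fiberwise-readout,lem:optimal-prediction-error}, every encoder then satisfies $\Rcal_Q^\star(e)-2\Lcal_{Q,P}^\star(e)\le2^{-K-1}$. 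So $\mathcal T_m(\Fcal,Q)=O(m^{-1/d})$, which is eventually below $\frac16m^{-1/(2d)}$.

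\textbf{The missing idea: witnesses must be pairwise.} The number of distinguishing queries at a scale must be quadratic in the number of representatives, matching the $N(N-1)/2$ count in \cref{lem:covering-queries}. In the paper, block $j$ is a polygonal path through the $N_j$ basis vertices of $\Rbb^{N_j}$, with a private query $s_j(\lambda_a-\lambda_b)$ for every pair $a<b$. A budget $m$ misses the private query of some nonadjacent pair once $(N_j-1)(N_j-2)/2>m$, i.e.\ already at $N_j\approx\sqrt{2m}$, so $s_j\asymp m^{-1/(2d)}$. Meanwhile the covering number is still governed by $N_j$ segments of length $2s_j$ per block; here $d>1$ makes $N_js_j$ grow geometrically.

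Once witnesses are pairwise, two more of your steps must change.
\begin{enumerate}
  \item \emph{The recovery lower bound.} The omitted pair $f_{t_i},f_{t_k}$ can have target distance $s_j$ (through $p^j_{ib}$, $b\ne k$) against full distance $2s_j$. Your half-diameter bound then yields only a residual $\ge s_j-s_j=0$. The paper instead uses that a state decoder must output a member of $\Fcal$: every $g\in\Fcal$ is at distance at least $\frac43s_j$ from one of the two vertices. This comes from a private query inside block $j$, the trade-off $\max\{1+w,2(1-w)\}\ge\frac43$ between private and height errors on connectors, and height error $2s_j$ elsewhere. With $N_j\le8\sqrt m$ this gives a residual $\ge s_j/3\ge\frac16m^{-1/(2d)}$.
  \item \emph{The encoder.} It must merge only the two vertices. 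Collapsing the parameter interval between them also merges intermediate vertices, whose private queries may lie in $P$. A continuous map into $\Rbb$ cannot identify exactly two points, so the paper uses a Lipschitz loop into $\Rbb^2$.
\end{enumerate}

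\textbf{The $1$-Lipschitz restriction.} Your last sentence shows only that the restricted supremum obeys the same bounds, but the statement asserts equality. That follows because replacing $e$ by $e/K$ and composing readouts with $z\mapsto Kz$ leaves both optimal errors unchanged.
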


The order of selection matters as well. On this same family, a target selected for one fixed $1$-Lipschitz \textit{encoder} can achieve $\Rcal_Q^\star(e)\le2\Lcal_{Q,\{p\}}^\star(e)$ with a single query (\cref{cor:regularity-target-order}). This selection uses the complete state family. Requiring one target set to serve every \textit{encoder} gives the sharp budget in~\cref{thm:paper-fixed-transfer}. Classical optimal recovery studies the radius left by fixed information~\citep{Traub1983Information,Krieg2026Sampling}; our budget measures how many prediction targets suffice to control that radius uniformly across learned representations.

\subsection{From Recorded Error to Recovery}

The bounds above use prediction error over the entire state family. Training supplies finitely many observation pairs. To pass from recorded error to a recovery guarantee, we need to know both how these records cover possible observations and how predictions vary between them.

\begin{definition}[Recorded Error and Coverage]\label{def:recorded-error-coverage}
  Fix disjoint query sets $Q,P$, a pair $(e,b)$, and states $f_1,\ldots,f_n\in\Fcal$ with $n\ge1$ indexing the recorded pairs $(f_i(Q),f_i(P))$. The \emph{empirical prediction error} and \emph{observation coverage radius} are
  \begin{equation}\label{eq:empirical-record-error}
    \widehat{\Lcal}_{Q,P,n}(b;e)\coloneqq\max_{1\le i\le n}d_P\bigl((b\circ e)(f_i(Q)),f_i(P)\bigr),
  \end{equation}
  \begin{equation}\label{eq:training-coverage}
    h_n\coloneqq\sup_{f\in\Fcal}\min_{1\le i\le n}d_{Q\cup P}\bigl(f(Q\cup P),f_i(Q\cup P)\bigr).
  \end{equation}
\end{definition}

The radius measures coverage jointly in inputs and targets: every possible observation pair lies within $h_n$ of a record. Prediction stability controls how the learned map responds over this distance.

\begin{assumption}[Prediction Stability]\label{ass:uniform-prediction-stability}
  For the pair $(e,b)$, there is a nondecreasing function $\rho:[0,\infty)\to[0,\infty)$ with $\rho(0)=0$ and $\rho(t)\to0$ as $t\downarrow0$ such that, for all $f,g\in\Fcal$,
  \begin{equation}
    d_P\bigl((b\circ e)(f(Q)),(b\circ e)(g(Q))\bigr) \le\rho\bigl(d_Q(f(Q),g(Q))\bigr).
  \end{equation}
\end{assumption}

For a nearby record, the prediction changes by at most $\rho(h_n)$ and the target reading by at most $h_n$. Adding these changes to the recorded error controls prediction at the unrecorded state. A Lipschitz prediction map has $\rho(t)=Kt$.

\begin{proposition}[Recovery from Records]\label{prop:record-recovery}
  Let $(\Fcal,d_{\Ucal})$ be nonempty and compact. For a pair $(e,b)$ satisfying~\cref{ass:uniform-prediction-stability}, the prediction error obeys
  \begin{gather}
    E_n\coloneqq\widehat{\Lcal}_{Q,P,n}(b;e)+\rho(h_n)+h_n,\label{eq:paper-record-error}\\
    \mathmakebox[0.20\linewidth][r]{\widehat{\Lcal}_{Q,P,n}(b;e)}\le\boxed{\Lcal_{Q,P}(b;e)}\le\mathmakebox[0.20\linewidth][l]{E_n,}\label{eq:uniform-prediction-error-bounds}
  \end{gather}
  and the recovery error satisfies
  \begin{equation}\label{eq:paper-record-recovery}
    \Rcal_Q^\star(e)\le\omega_P(2E_n).
  \end{equation}
  Under~\cref{ass:low-dimensional-manifold}, for targets selected by~\cref{thm:paper-low-bounds},
  \begin{equation}\label{eq:paper-record-low-recovery}
    \Rcal_Q^\star(e)\le\frac{2}{\alpha}E_n+\frac{2+\alpha}{\alpha}\left(\frac{C^2}{2m}\right)^{1/(2d)}.
  \end{equation}
\end{proposition}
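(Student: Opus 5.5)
The plan is to prove the two-sided bound \cref{eq:uniform-prediction-error-bounds} first and then feed the resulting finite prediction error into the upper bounds already established. Set $\phi\coloneqq b\circ e$.

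\emph{Lower bound.} Each recorded state $f_i$ belongs to $\Fcal$. Hence
\[
d_P\bigl(\phi(f_i(Q)),f_i(P)\bigr)\le\Lcal_{Q,P}(b;e)
\]
for every $i$, and taking the maximum over $i$ gives $\widehat{\Lcal}_{Q,P,n}(b;e)\le\Lcal_{Q,P}(b;e)$.

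\emph{Upper bound.} Fix $f\in\Fcal$. Since the index set $\{1,\dots,n\}$ is finite, the minimum in \cref{eq:training-coverage} is attained at some $i$ with $d_{Q\cup P}(f(Q\cup P),f_i(Q\cup P))\le h_n$. Because $d_{Q\cup P}$ is a supremum over $Q\cup P$, this one inequality gives both $d_Q(f(Q),f_i(Q))\le h_n$ and $d_P(f(P),f_i(P))\le h_n$. Apply the triangle inequality for $d_P$:
\[
d_P\bigl(\phi(f(Q)),f(P)\bigr)\le d_P\bigl(\phi(f(Q)),\phi(f_i(Q))\bigr)+d_P\bigl(\phi(f_i(Q)),f_i(P)\bigr)+d_P\bigl(f_i(P),f(P)\bigr).
\]
The three terms are bounded as follows:
\begin{itemize}
\item the first is at most $\rho(d_Q(f(Q),f_i(Q)))\le\rho(h_n)$, by \cref{ass:uniform-prediction-stability} and the monotonicity of $\rho$;
\item the second is at most $\widehat{\Lcal}_{Q,P,n}(b;e)$;
\item the third is at most $h_n$.
\end{itemize}
Taking the supremum over $f$ gives $\Lcal_{Q,P}(b;e)\le E_n$.

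\emph{Finiteness of $h_n$ and $E_n$.} The bounds in \cref{thm:paper-general-bounds,thm:paper-low-bounds} require a predictor with finite error, so I need $E_n<\infty$. Compactness, hence boundedness, of $(\Fcal,d_{\Ucal})$ gives
\[
d_{Q\cup P}\bigl(f(Q\cup P),g(Q\cup P)\bigr)\le d_{\Ucal}(f,g)\le\operatorname{diam}\Fcal<\infty,
\]
so $h_n<\infty$. Then $\rho(h_n)<\infty$ because $\rho$ is real-valued. The recorded error $\widehat{\Lcal}_{Q,P,n}(b;e)$ is a maximum of finitely many real numbers, hence finite. Therefore $E_n<\infty$, and so $\Lcal_{Q,P}(b;e)<\infty$.

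\emph{Recovery bounds.} By the predictor form of the upper bound in \cref{thm:paper-general-bounds},
\[
\Rcal_Q^\star(e)\le\omega_P\bigl(2\Lcal_{Q,P}(b;e)\bigr).
\]
Query ambiguity $\omega_P(r)$ is nondecreasing in $r$, since a larger tolerance only enlarges the set of pairs in the supremum. Combined with $\Lcal_{Q,P}(b;e)\le E_n$, this gives \cref{eq:paper-record-recovery}.

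For \cref{eq:paper-record-low-recovery}, I invoke the predictor form of \cref{thm:paper-low-bounds} with the selected $P$. This bounds $\Rcal_Q^\star(e)$ by $\frac{2}{\alpha}\Lcal_{Q,P}(b;e)$ plus the covering allowance $\frac{2+\alpha}{\alpha}\bigl(\frac{C^2}{2m}\bigr)^{1/(2d)}$. Replacing $\Lcal_{Q,P}(b;e)$ by the larger quantity $E_n$ gives the claim, since the coefficient $2/\alpha$ is positive.

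\emph{Where care is needed.} There is no real obstacle beyond bookkeeping. Two points need care: the finiteness of the error, which licenses the predictor version of the theorems, and the fact that coverage in the joint metric $d_{Q\cup P}$ controls the input and target discrepancies simultaneously, using the same record index $i$.
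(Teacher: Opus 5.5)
Your proof is correct and follows the paper's argument essentially verbatim. The paper proves the two-sided prediction bound in \cref{lem:record-extension} by the same nearest-record triangle inequality, with finiteness of the recorded error and of $h_n$ giving a finite full-family error, and then obtains both recovery bounds from \cref{thm:paper-general-bounds,thm:paper-low-bounds} and monotonicity of $\omega_P$. The only cosmetic difference is that the paper passes through $\Lcal_{Q,P}^\star(e)\le\Lcal_{Q,P}(b;e)\le E_n$ and uses the optimal-error form of those theorems, whereas you use their predictor form directly, which your finiteness check licenses.
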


The terms answer different questions. The empirical error measures fit on recorded pairs. Their coverage and the prediction modulus control what this fit implies between records. The target budget controls distinctions left unresolved even by a uniformly accurate \textit{predictor}. More training records and more target queries address these two sources of uncertainty separately. The fine-tuning results in~\cref{thm:general-finetuning-recovery,thm:low-dimensional-finetuning-recovery} include model-class error and optimization tolerance; \cref{sec:noisy-recovery} also treats measurement perturbations.

\section{Experiment}\label{sec:experiments}

The experiments examine prediction and recovery with two kinds of access to the state. On \textit{Sphere}, \textit{Torus}, and \textit{General} families, the known response law makes the complete profile available for evaluation. Random target sampling tests separation and budget dependence; \cref{thm:paper-fixed-transfer} supplies the sharpness result analytically. Neural recordings then let us examine covering geometry, held-out prediction, and recovery of a fixed model's hidden state as an ontic-state surrogate.

\subsection{Synthetic Experiment}\label{sec:synthetic-experiments}

\begin{wrapfigure}{l}{.55\textwidth}
  \centering
  \begin{adjustbox}{max width=\linewidth}
\makebox[218.89830pt][c]{\raisebox{-20.00000pt}[87.32524pt][0.00000pt]{\includegraphics[width=258.89830pt,height=127.32524pt]{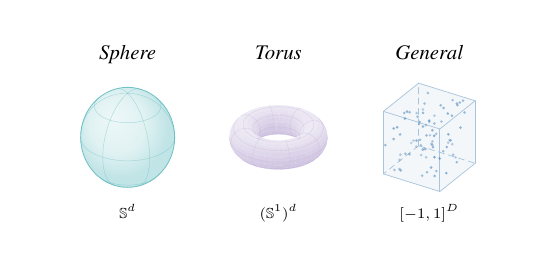}}}

  \end{adjustbox}
  \caption{Synthetic state families. Representative low-dimensional views of the sphere, product of circles and independent box.}
  \label{fig:synthetic-family-shapes}
\end{wrapfigure}

We use three synthetic state families (\cref{fig:synthetic-family-shapes}): \textit{Sphere} is uniform on a unit sphere; \textit{Torus} consists of independent circle factors; \textit{General} has independent uniform coordinates in a high-dimensional box. A common observation law embeds them into the same query universe. We jointly train an \textit{encoder} and \textit{predictor} on input--target observations, then recover the state through a deterministic ground-truth (GT) map, following~\cref{thm:paper-general-bounds}. Here the coordinate structure determines the random-sampling budget; the experiment tests how target separation transfers prediction accuracy to recovery.

For a latent vector $x\in[-1,1]^r$ in one of the three families, define
\begin{equation}\label{eq:synthetic-gt-law}
  f_x(j,k)=\tanh\bigl(a_{jk}s_jx_{i(j)}+c_{jk}\bigr),
\end{equation}
where $i(j)$ selects a coordinate, $s_j\in\{-1,1\}$, and $a_{jk}>0,c_{jk}\in\Rbb$ are fixed channel coefficients.
Each family uses the same coefficient law. Repeating low-dimensional coordinates across the query universe changes the geometry of the state family while retaining this observation mechanism.

Given predicted target observations $y\in\Rbb^P$, the ideal deterministic GT map returns
\begin{equation}\label{eq:synthetic-gt-oracle}
  \widehat f(y)\in\operatorname*{arg\,min}_{g\in\Fcal}d_P\bigl(g(P),y\bigr).
\end{equation}
The known observation law converts target-error tolerances into coordinate intervals. A deterministic feasibility calculation approximates this map (\cref{sec:synthetic-details}) and supplies the complete recovered profile, including responses outside the target set.

\mypara{Observability check.}
To apply~\cref{thm:paper-general-bounds}, we need to control target ambiguity. Complete coordinate coverage gives an explicit constant before training.

\begin{proposition}[Coordinate Observability]\label{prop:synthetic-observability}
  For the family in~\cref{eq:synthetic-gt-law}, suppose $P$ observes every latent coordinate. Let $\ell_i$ be the largest $a_{jk}\operatorname{sech}^2(a_{jk}+|c_{jk}|)$ among target channels observing coordinate $i$, and let $A_i$ be the largest $a_{jk}$ among all channels observing it. Then, for all $f,g\in\Fcal$,
  \begin{equation}\label{eq:synthetic-main-observability}
    \alpha_P=\min_i\frac{\ell_i}{A_i}>0,\qquad d_P(f,g)\ge\alpha_Pd_{\Ucal}(f,g).
  \end{equation}
\end{proposition}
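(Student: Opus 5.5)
The plan is to compare, one latent coordinate at a time, the largest response difference on the target set with the largest response difference over the whole query universe. Write $f=f_x$ and $g=f_y$ with $x,y\in[-1,1]^r$ and set $\delta_i\coloneqq|x_i-y_i|$. Each channel $(j,k)$ depends only on the coordinate $x_{i(j)}$. So the ontic distance splits into a maximum over coordinates $i$ of the largest channel difference among channels observing $i$, and likewise for $d_P$ with channels restricted to $P$. It therefore suffices to show, for each coordinate $i$,
\begin{equation*}
  \max_{(j,k)\in P,\ i(j)=i}|f(j,k)-g(j,k)|\ \ge\ \ell_i\delta_i
  \quad\text{and}\quad
  \sup_{(j,k)\in\Ucal,\ i(j)=i}|f(j,k)-g(j,k)|\ \le\ A_i\delta_i .
\end{equation*}
Given these, we get $d_P(f,g)\ge\max_i\ell_i\delta_i\ge\alpha_P\max_iA_i\delta_i\ge\alpha_P d_{\Ucal}(f,g)$.

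Both inequalities come from the mean value theorem applied to $t\mapsto\tanh(a_{jk}s_jt+c_{jk})$ on $[-1,1]$. Its derivative is $a_{jk}s_j\operatorname{sech}^2(a_{jk}s_jt+c_{jk})$.
\begin{itemize}
  \item \emph{Upper bound.} Since $\operatorname{sech}^2\le1$, the absolute derivative is at most $a_{jk}$. Every channel observing $i$ is therefore $A_i$-Lipschitz in $x_i$.
  \item \emph{Lower bound.} For $|t|\le1$ we have $|a_{jk}s_jt+c_{jk}|\le a_{jk}+|c_{jk}|$. Since $\operatorname{sech}^2$ is even and decreasing in $|u|$, the absolute derivative is at least $a_{jk}\operatorname{sech}^2(a_{jk}+|c_{jk}|)$. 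Picking the target channel that attains $\ell_i$ gives a difference of at least $\ell_i\delta_i$.
  \item \emph{Positivity.} $\alpha_P>0$ because $P$ observes every coordinate, so each $\ell_i$ is positive, while each $A_i$ is finite.
\end{itemize}

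The main obstacle is finiteness of $A_i$ when the query universe is infinite. A supremum over infinitely many coefficients $a_{jk}$ could be infinite, and then the ratio $\ell_i/A_i$ would degenerate. The statement defines $A_i$ as a largest value, so I would read it as a maximum over a bounded coefficient law. I would state this explicitly, noting that boundedness of the states in \cref{def:ontic-state-space} does not by itself bound the slopes.

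I would also handle the edge case of coordinates with $\delta_i=0$; these contribute nothing to either side. Finally, I would note that the "largest" in $\ell_i$ needs only a single target channel, so finiteness of $P$ makes it a genuine maximum.
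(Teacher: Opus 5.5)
Your proposal is correct and follows the paper's proof: both use two-sided mean-value bounds on each $\tanh$ channel to get $d_P(f,g)\ge\max_i\ell_i|x_i-x_i'|$ and $d_{\Ucal}(f,g)\le\max_iA_i|x_i-x_i'|$, and then compare them through $\ell_i\ge\alpha_PA_i$. Your caveat about the finiteness of $A_i$ is harmless in the paper's setting, because the synthetic query universe is finite and the coefficients satisfy $a_{jk}\in[.5,1.5)$, so $A_i$ is a genuine maximum with $A_i\ge\ell_i>0$.
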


This inequality gives $\omega_P(t)\le t/\alpha_P$, so~\cref{thm:paper-general-bounds} yields $\Rcal_Q^\star(e)\le2\Lcal_{Q,P}(b;e)/\alpha_P$. The observation law and target set determine this global constant; predictive training determines the target error. Together they control differences in the complete profile. The budget calculation and covering constants appear in~\cref{tab:synthetic-budget-diagnostic,sec:synthetic-details}.

\mypara{Error transfer.}
With target separation established, \cref{fig:synthetic-gt-main} follows the prediction-to-recovery relation. The left panel compares finite-test errors with the exact-map benchmark. The middle panel gives the sufficient separation budget under the coordinate-sampling law. The right panel extends the recovery comparison to the full known family through the conditional numerical audit below.

\begin{wraptable}{r}{.55\textwidth}
  \centering
  \caption{Conditional whole-domain bounds for the ideal GT map: mean $\pm$ sample standard deviation across paired runs. Arithmetic conditions: \cref{sec:synthetic-uniform}.}
  \label{tab:synthetic-uniform}
  \small
  \setlength{\tabcolsep}{2pt}
  \begin{tabular*}{\linewidth}{@{\extracolsep{\fill}}lrrr@{}}
    \toprule
    Quantity & \textit{Sphere} & \textit{Torus} & \textit{General} \\
    \midrule
    $\alpha_P$ & $.2308\pm.0050$ & $.1831\pm.0160$ & $.0800\pm.0000$ \\
    $\underline L$ & $.1020\pm.0181$ & $.0043\pm.0008$ & $.0078\pm.0019$ \\
    $\overline L$ & $.1031\pm.0183$ & $.0059\pm.0008$ & $.0093\pm.0023$ \\
    $2\overline L/\alpha_P$ & $.8952\pm.1719$ & $.0644\pm.0107$ & $.2333\pm.0570$ \\
    \bottomrule
  \end{tabular*}
\end{wraptable}

\mypara{Whole-domain audit.}
A test set samples prediction error; the theoretical quantity is a supremum over the state family. Where~\cref{prop:record-recovery} controls this supremum through observation coverage, the known coordinatewise law permits a scalar-domain audit. For each target, its error is a scalar function of the observed latent coordinate. A fixed grid and a sensitivity bound computed from the saved weights enclose its error between grid points. Under the arithmetic-error assumptions in~\cref{sec:synthetic-uniform}, the grid audit gives a conditional upper bound throughout the known family. Applying the ideal GT map yields~\cref{tab:synthetic-uniform}.

Every paired run at the audited separating budgets yields an ideal-map bound below the full-profile diameter (\cref{tab:synthetic-uniform}). The bound is evaluated for each trained \textit{predictor} using the protocol in~\cref{sec:synthetic-uniform}.

For joint queries, \cref{sec:synthetic-mixed} derives a global prediction-to-recovery certificate from calibration records at known latent coordinates.

\begin{figure*}[t]
  \centering
  \begin{minipage}[t]{0.48\textwidth}
    \vspace{0pt}
    \centering
\makebox[173.30862pt][c]{\raisebox{-20.00000pt}[78.39769pt][0.00000pt]{\includegraphics[width=213.30862pt,height=118.39769pt]{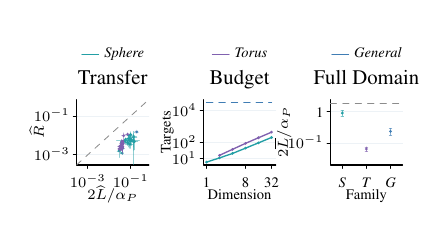}}}

    \par
    \caption{Prediction-to-recovery transfer. \emph{Left:} test errors and exact-map benchmark. \emph{Middle:} budgets for $95\%$ separation. \emph{Right:} conditional whole-domain bounds. Dashed lines: independent-family budget and profile diameter. Bars: repetition standard deviations. See~\cref{tab:synthetic-budget-diagnostic,tab:synthetic-uniform}.}
    \label{fig:synthetic-gt-main}
  \end{minipage}\hfill
  \begin{minipage}[t]{0.48\textwidth}
    \vspace{0pt}
    \centering
\makebox[184.36829pt][c]{\raisebox{-20.00000pt}[78.39769pt][0.00000pt]{\includegraphics[width=224.36829pt,height=118.39769pt]{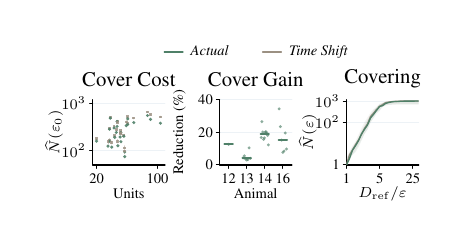}}}

    \par
    \caption{HC-3 covering geometry. \emph{Left:} fixed-radius cover sizes; bars span shift controls. \emph{Middle:} session reductions relative to median shift controls, grouped by animal; bars mark animal medians. \emph{Right:} median covering curves with session interquartile bands. See~\cref{sec:hc3-covering-details}.}
    \label{fig:hc3-geometry}
  \end{minipage}
\end{figure*}

\subsection{Real Experiment}\label{sec:real-experiments}

In neural recordings, we examine covering complexity, held-out target prediction, and recovery of a fixed reference model's hidden coordinates. These coordinates define the surrogate state whose preservation we evaluate.

\par\ifdim\dimexpr\pagegoal-\pagetotal\relax<16\baselineskip\newpage\fi
\begin{wraptable}{l}{.55\textwidth}
  \centering
  \caption{Diagnostic-block greedy covers at the fixed radius in~\cref{tab:hc3-cover-protocol}. Variance-retention ranks use reference-derived directions.}
  \label{tab:hc3-geometry-summary}
  \small
  \setlength{\tabcolsep}{4pt}
  \renewcommand{\arraystretch}{1.12}
  \begin{tabular*}{\linewidth}{@{\extracolsep{\fill}}lrr@{}}
    \toprule
    Measure & Median & Range \\
    \midrule
    Greedy-cover size & 243 & 75--554 \\
    \quad Fraction of observations & 31.7\% & 7.3\%--71.2\% \\
    \quad Reduction vs. shift & 15.2\% & 3.0\%--34.3\% \\
    \addlinespace[4pt]
    Rank for 90\% variance & 33 & 14--63 \\
    Rank for 95\% variance & 37 & 15--73 \\
    \bottomrule
  \end{tabular*}
\end{wraptable}

\mypara{Observation geometry.}
The HC-3 hippocampal--entorhinal recordings~\citep{mizuseki2014neurosharing} span animals and behavioral contexts. Chronological blocks separate preprocessing, reference geometry, diagnostic geometry, and prediction evaluation (\cref{sec:hc3-covering-details}).

We cover the diagnostic block with observed representatives in the maximum-coordinate metric (\cref{fig:hc3-geometry}). At a fixed resolution, every session needs a smaller greedy cover than its median time-shift control, with a median reduction of $15.2\%$. These controls preserve each unit's marginal distribution and circular autocorrelation while changing alignment between units. The comparison holds within every selected animal and across shorter and longer counting windows (\cref{tab:hc3-geometry-summary,tab:hc3-cover-animal,tab:hc3-cover-window}). Covering curves measure the finite-record analog of the approximation complexity in~\cref{eq:state-covering-growth}. Variance-retention ranks measure concentration along reference-derived covariance directions.

\begin{wraptable}{r}{.55\textwidth}
  \centering
  \caption{Cover-size reduction within each selected animal. Values are session medians relative to the matched time-shift median at the fixed radius. Every selected session contributes.}
  \label{tab:hc3-cover-animal}
  \small
  \begin{tabular*}{\linewidth}{@{\extracolsep{\fill}}lrrrr@{}}
    \toprule
    Animal & Sessions & $.5$\,s & $1$\,s & $2$\,s \\
    \midrule
    ec012 & 1 & 12.3\% & 12.6\% & 15.0\% \\
    ec013 & 7 & 4.7\% & 4.1\% & 4.9\% \\
    ec014 & 10 & 14.0\% & 18.9\% & 19.1\% \\
    ec016 & 7 & 15.3\% & 15.2\% & 13.2\% \\
    \bottomrule
  \end{tabular*}
\end{wraptable}

Data and controls use matched radii and sample sizes (\cref{sec:hc3-covering-details}). \Cref{tab:hc3-cover-animal} compares counting windows within each animal.

\mypara{Target prediction.}
We next ask whether a learned representation predicts a disjoint target set. An \textit{encoder--predictor} learns from the same sessions' training blocks and is evaluated on later held-out observations. At the larger target budgets, prediction improves on a constant \textit{predictor} derived from the training block (\cref{fig:hc3-local-recovery}, left).

\mypara{Reference state recovery.}
The trained \textit{encoder} above supplies the reference model: its full hidden-coordinate vector defines each observation's surrogate state. A smaller model learns the same prediction task independently, using the same observations and chronological split. We then recover the reference state by local interpolation in the smaller model's learned space: reference states at nearby training inputs are averaged with distance-based weights. The reference states enter the recovery map after predictive training.

\par\ifdim\dimexpr\pagegoal-\pagetotal\relax<17\baselineskip\newpage\fi
\begin{wrapfigure}{l}{.55\textwidth}
  \centering
  \begin{adjustbox}{max width=\linewidth}
\makebox[211.35010pt][c]{\raisebox{-20.00000pt}[99.62103pt][0.00000pt]{\includegraphics[width=251.35010pt,height=139.62103pt]{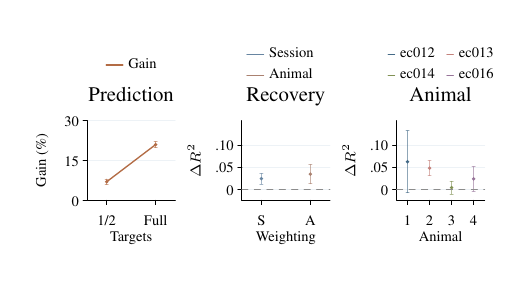}}}

  \end{adjustbox}
  \caption{Prediction and reference-state recovery. \emph{Left:} prediction gain. \emph{Middle:} recovery gain with session (S) or animal (A) weighting. \emph{Right:} animal gains. Repetition means and standard deviations; see~\cref{tab:hc3-prediction,tab:local-summary}.}
  \label{fig:hc3-local-recovery}
\end{wrapfigure}

The recovery score measures reconstruction of the surrogate's full coordinates on held-out observations. Predictive training improves reconstruction of these reference coordinates relative to the same small \textit{encoder} at initialization, with a session-weighted gain of $.0239\pm.0118$ in $R^2$ (\cref{fig:hc3-local-recovery,tab:local-summary}). The mean gain remains positive within each animal and each paired seed group, and after leaving out any animal. The protocol and cohort summaries appear in~\cref{sec:hc3-local-details}.

\par\raggedbottom\WFclear\vspace*{8pt}
\section{Related Work}

\mypara{Prediction and state learning.}
Predictive state models describe dynamics through action-conditioned future observations~\citep{LittmanSuttonSingh2001}. Auxiliary prediction and context learning link predictive objectives to downstream performance~\citep{Lee2021ProvableSSL,Zhai2025Contextures}. \citet{Zhang2025WorldModels} recover generating latent variables up to coordinate negations and permutations as tasks increase, under a low-degree task distribution and preference for low-degree realizations. Our target is the complete query-response profile; the fixed-family bound quantifies the cost of one target set serving all \textit{encoders}.

\mypara{Identifiability.}
Identifiability depends on the learning setup~\citep{Locatello2019}. Auxiliary variables~\citep{Khemakhem2020Identifiable}, contrastive objectives with generative assumptions~\citep{Zimmermann2021Contrastive}, and multiple modalities~\citep{Daunhawer2023Multimodal} can identify factors up to transformations. Our recovery target is specified by query responses before choosing latent coordinates.

\mypara{Optimal recovery and sampling.}
Optimal recovery studies uncertainty under partial observations~\citep{MicchelliRivlin1977}. Here, states sharing an \textit{encoder} output form a set whose diameter limits recovery; prediction constrains that set through its targets. Structured model classes also yield recovery and sampling guarantees~\citep{Binev2024OptimalLearning,Krieg2026Sampling}.

\mypara{Geometry and stable approximation.}
Embeddings~\citep{Sauer1991Embedology} and random manifold projections~\citep{Baraniuk2009Manifolds} relate geometry to measurement requirements. Stable manifold widths link approximation to metric entropy~\citep{Cohen2022StableApproximation}. Here, low-dimensional covering controls the distinctions to resolve; observability supplies the queries.

\mypara{Neural geometry and computation.}
Population geometry relates to computation~\citep{Chung2021Geometry,Jazayeri2021Dimensions}, cross-task generalization~\citep{Wakhloo2026OptimalCoding}, and cross-animal decoding~\citep{Safaie2023Preserved}. Intrinsic-dimension estimation characterizes the degrees of freedom in neural activity~\citep{Fadanni2026Dimensions}. These studies motivate our geometric assumptions, which connect covering complexity to query budgets.

\mypara{Neural state learning.}
LFADS infers population dynamics~\citep{Pandarinath2018Inferringsingle}; the Neural Latents Benchmark evaluates held-out activity and behavioral readouts~\citep{Pei2021NLB}. Multi-context, multi-task, and foundation models learn across recordings~\citep{Ye2023NDT2,Azabou2025POYO,Wang2025Foundation}. CEBRA uses behavioral or temporal context~\citep{Schneider2023CEBRA}, DPAD prioritizes behaviorally relevant dynamics~\citep{Sani2024DPAD}, and MARBLE represents local manifold dynamics~\citep{Gosztolai2025MARBLE}. These objectives reward different distinctions; we ask when preserving them suffices to recover the complete response profile.

\begin{samepage}
  \section{Conclusion}

  We studied when learning to predict a system enables recovery of its ontic state. Defining that state as a complete response profile lets us relate prediction to the distinctions a learned representation preserves. Query ambiguity makes this connection quantitative, yielding recovery guarantees in general and low-dimensional settings. In the latter, we establish query-budget bounds whose order is sharp on a fixed state family when one target set serves every \textit{encoder}. These results give a precise sense in which predicting observations can become a way of knowing the state that produces them. We hope that our work offers theoretical support for using deep learning models as computational surrogates for complex systems, including those studied in neuroscience.

\end{samepage}

\clearpage
\makeatletter
\begin{center}
  \LARGE\scshape
  \@title\\(Appendix)
\end{center}
\makeatother
\vspace{6pt}
\appendix
\raggedbottom
\addtocontents{toc}{\protect\setcounter{tocdepth}{3}}

\noindent\textbf{\large Contents of the Appendix}\par
\vspace{0.5em}
\begingroup
  \setlength{\parskip}{0pt}
  \hypersetup{linkcolor=blue}
  \makeatletter
  \@starttoc{toc}
  \makeatother
\endgroup
\clearpage

\section{State, Query, and Observation}\label{sec:appendix-observation}

We begin with observation ambiguity, then derive the prediction--recovery bounds and their sharpness. The training and noisy-observation results extend this chain to finite records and perturbed measurements. Experimental details follow the main-text order: synthetic recovery, joint-query calibration, and neural observation geometry and surrogate-state recovery.

\subsection{Distance and Restriction}

Distances contract when we restrict the query set. For query sets $Q\subseteq Q'\subseteq\Ucal$ and states $f,g\in\Fcal$,
\begin{equation}\label{eq:observation-contraction}
  d_Q(f(Q),g(Q))\le d_{Q'}(f(Q'),g(Q'))\le d_{\Ucal}(f,g).
\end{equation}
Each inequality follows by taking the supremum over a subset of the same observation coordinates. In particular, every observation map $f\mapsto f(Q)$ is $1$-Lipschitz and hence continuous.

\begin{lemma}[Ontic Metric]\label{lem:ontic-metric-space}
  The function $d_{\Ucal}:\Fcal\times\Fcal\to[0,\infty)$ is a metric on $\Fcal$. Hence $(\Fcal,d_{\Ucal})$ is a metric space.
\end{lemma}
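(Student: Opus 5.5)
We verify the metric axioms for $d_{\Ucal}(f,g)=\sup_{q\in\Ucal}|f(q)-g(q)|$ on $\Fcal$. The first step is to check that $d_{\Ucal}$ is well defined as a map into $[0,\infty)$. By \cref{def:ontic-state-space}, every $f\in\Fcal$ is bounded, so there are constants with $|f(q)|\le M_f$ and $|g(q)|\le M_g$ for all $q$. Then $|f(q)-g(q)|\le M_f+M_g$ for every $q\in\Ucal$. Since $\Ucal$ is nonempty, the supremum is taken over a nonempty set of nonnegative reals bounded above. It is therefore a finite nonnegative number.

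Next come the three axioms, each obtained pointwise and then passed through the supremum.

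\emph{Identity of indiscernibles.} If $d_{\Ucal}(f,g)=0$, then $|f(q)-g(q)|\le 0$ for every $q$. Hence $f(q)=g(q)$ on all of $\Ucal$. Since ontic states are functions on $\Ucal$, this gives $f=g$. The converse, $d_{\Ucal}(f,f)=0$, is immediate.

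\emph{Symmetry.} This follows from $|f(q)-g(q)|=|g(q)-f(q)|$ at every $q$.

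\emph{Triangle inequality.} For $f,g,h\in\Fcal$ and each $q\in\Ucal$,
\begin{equation*}
|f(q)-h(q)|\le|f(q)-g(q)|+|g(q)-h(q)|\le d_{\Ucal}(f,g)+d_{\Ucal}(g,h).
\end{equation*}
The right-hand side does not depend on $q$. Taking the supremum over $q$ on the left gives $d_{\Ucal}(f,h)\le d_{\Ucal}(f,g)+d_{\Ucal}(g,h)$.

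There is no real obstacle here. The only point requiring care is identity of indiscernibles. It holds because \cref{def:ontic-state-space} models states as functions on the whole universe $\Ucal$, so two states with the same full response profile are literally equal. This is exactly the identification of indistinguishable configurations built into the definition, and it is what separates $d_{\Ucal}$ from the pseudometrics $d_Q$ with $Q\subsetneq\Ucal$ noted after \cref{def:ontic-observation-distances}.
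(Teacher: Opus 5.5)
Your proof is correct and follows the paper's argument essentially step for step. Boundedness gives a finite supremum, and nonnegativity, symmetry and identity of indiscernibles follow from the absolute value together with the fact that states are functions on $\Ucal$; the triangle inequality is proved pointwise and then passed through the supremum. You also note explicitly that $\Ucal$ is nonempty, a point the paper leaves implicit.
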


\begin{proof}
  Boundedness of $f,g\in\Fcal$ gives
  \begin{equation}
    d_{\Ucal}(f,g)\le\sup_{q\in\Ucal}|f(q)|+\sup_{q\in\Ucal}|g(q)|<\infty.
  \end{equation}
  Nonnegativity and symmetry follow from absolute value. Moreover, $d_{\Ucal}(f,g)=0$ holds exactly when $f(q)=g(q)$ at every query, which means $f=g$ as functions. Finally, for every $q\in\Ucal$,
  \begin{equation}
    |f(q)-g(q)|\le|f(q)-h(q)|+|h(q)-g(q)| \le d_{\Ucal}(f,h)+d_{\Ucal}(h,g).
  \end{equation}
  Taking the supremum over $q$ proves the triangle inequality.
\end{proof}

\subsection{A Common Readout}

States sharing the same input must receive the same output. Their target spread limits the accuracy of a common readout.

\begin{lemma}[Fiberwise Error Bound]\label{lem:fiberwise-readout}
  Let $\Fcal$ be nonempty and $(\mathcal V,d)$ a metric space. The map $T:\Fcal\to\mathcal Y$ specifies the input available to a readout, and $G:\Fcal\to\mathcal V$ specifies its target. Define the largest target diameter among states with the same input by
  \begin{equation}
    \Delta\coloneqq\sup_{\substack{f,g\in\Fcal\\T(f)=T(g)}}d(G(f),G(g)).
  \end{equation}
  Then
  \begin{equation}\label{eq:fiberwise-error-bounds}
    \frac{\Delta}{2}\le\boxed{\inf_{c:\mathcal Y\to\mathcal V}\sup_{f\in\Fcal}d(c(T(f)),G(f))}\le\Delta.
  \end{equation}
  If $(\mathcal V,d)=(\Rbb^P,d_P)$ with $P$ nonempty and finite and $G(\Fcal)$ bounded, the coordinatewise midpoint attains the lower bound $\Delta/2$.
\end{lemma}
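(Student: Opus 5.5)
The plan is to prove the two inequalities in \cref{eq:fiberwise-error-bounds} separately, using only the triangle inequality, and then check that the coordinatewise midpoint attains the lower bound. Throughout, write $\epsilon(c)\coloneqq\sup_{f\in\Fcal}d(c(T(f)),G(f))$ for a readout $c:\mathcal Y\to\mathcal V$. If $\Delta=\infty$, the upper bound is trivial. The lower-bound argument below then forces $\epsilon(c)=\infty$ for every $c$, so we may treat $\Delta$ as finite where needed.

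\emph{Lower bound.} Fix any readout $c$ and any pair $f,g\in\Fcal$ with $T(f)=T(g)$. Both states receive the same output $v\coloneqq c(T(f))=c(T(g))$. The triangle inequality gives
\begin{equation*}
d(G(f),G(g))\le d(G(f),v)+d(v,G(g))\le 2\epsilon(c).
\end{equation*}
Taking the supremum over such pairs yields $\Delta\le2\epsilon(c)$. Taking the infimum over $c$ then gives $\Delta/2\le\inf_c\epsilon(c)$.

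\emph{Upper bound.} We construct a selection readout. For each $y\in T(\Fcal)$, choose (by the axiom of choice) some $f_y\in\Fcal$ with $T(f_y)=y$, and set $c(y)\coloneqq G(f_y)$. For $y\notin T(\Fcal)$, set $c(y)\coloneqq G(f_0)$ for an arbitrary fixed $f_0\in\Fcal$; this is possible because $\Fcal$ is nonempty. Then for every $f$, the states $f$ and $f_{T(f)}$ share an input, so
\begin{equation*}
d(c(T(f)),G(f))=d(G(f_{T(f)}),G(f))\le\Delta.
\end{equation*}
Hence $\inf_c\epsilon(c)\le\epsilon(c)\le\Delta$.

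\emph{Midpoint attainment.} Now take $\mathcal V=\Rbb^P$ with $d_P$, and assume $G(\Fcal)$ is bounded. For $y\in T(\Fcal)$ and each $p\in P$, define
\begin{equation*}
M_p(y)\coloneqq\sup\{G(f)_p: f\in\Fcal,\ T(f)=y\},\qquad m_p(y)\coloneqq\inf\{G(f)_p: f\in\Fcal,\ T(f)=y\}.
\end{equation*}
Both are finite by boundedness. Set $c(y)_p\coloneqq\tfrac12(M_p(y)+m_p(y))$, and define $c$ arbitrarily off $T(\Fcal)$. For any $f$ with $T(f)=y$, we have $m_p(y)\le G(f)_p\le M_p(y)$, so
\begin{equation*}
|c(y)_p-G(f)_p|\le\tfrac12\bigl(M_p(y)-m_p(y)\bigr).
\end{equation*}

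The key observation is that $M_p(y)-m_p(y)$ is the supremum of $|G(f)_p-G(g)_p|$ over pairs in the fiber of $y$. That supremum is at most the supremum of $d_P(G(f),G(g))$ over the same pairs, which is at most $\Delta$. Since $P$ is finite, maximizing over $p$ gives $d_P(c(T(f)),G(f))\le\Delta/2$. Combined with the lower bound, the midpoint readout attains $\Delta/2$.

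The main subtlety is exactly this coordinatewise step. The midpoint works in the sup-metric because the fiber's spread in each coordinate is controlled by its $d_P$-diameter. In a general metric space no such center exists, which is why only the factor-two upper bound holds there.
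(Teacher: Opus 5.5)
Your proof is correct and follows the paper's argument essentially step for step. You use the same selection readout (a representative per fiber, with a fixed default off $T(\Fcal)$) for the upper bound, and the same triangle inequality through a common output for the lower bound. You also bound the coordinatewise midpoint error by half of each coordinate's fiber spread, which is at most $\Delta$.
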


\begin{proof}
  \emph{Choose a common output.} For each attainable input $y\in T(\Fcal)$, let $S_y\coloneqq\{f\in\Fcal:T(f)=y\}$ be its fiber. Choose $h_y\in S_y$ and define $c(y)=G(h_y)$. Assign a fixed element of $\mathcal V$ at unattainable inputs. For any $f\in S_y$,
  \begin{equation}
    d(c(T(f)),G(f))=d(G(h_y),G(f))\le\Delta.
  \end{equation}
  Taking the supremum over $f\in\Fcal$ gives the general upper bound.

  The state-valued case $G(f)=f$ is shown below for three fibers.
  \begin{center}
    \makebox[209.57780pt][c]{\raisebox{-20.00000pt}[68.52885pt][0.00000pt]{\includegraphics[width=249.57780pt,height=108.52885pt]{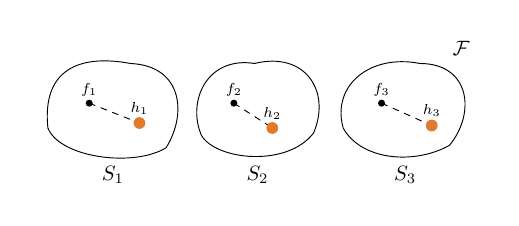}}}

  \end{center}

  \emph{Compare any common output.} If $T(f)=T(g)$, every readout $c$ gives the same output to both states. The triangle inequality gives
  \begin{equation}
    d(G(f),G(g))\le d(G(f),c(T(f)))+d(c(T(g)),G(g)) \le2\sup_{h\in\Fcal}d(c(T(h)),G(h)).
  \end{equation}
  Taking the supremum over pairs and then the infimum over readouts gives the lower bound $\Delta/2$.

  \emph{Use coordinate midpoints.} When $(\mathcal V,d)=(\Rbb^P,d_P)$ and $G(\Fcal)$ is bounded, define the coordinatewise midpoint
  \begin{equation}\label{eq:fiberwise-midpoint}
    [c^\star(y)](p)\coloneqq\frac12\Bigl(\inf_{g\in S_y}[G(g)](p) +\sup_{g\in S_y}[G(g)](p)\Bigr),\qquad p\in P.
  \end{equation}
  The endpoints are finite by boundedness. Set $c^\star(y)=0\in\Rbb^P$ at unattainable inputs. For any $f\in S_y$ and $p\in P$,
  \begin{equation}
    \bigl|[c^\star(y)](p)-[G(f)](p)\bigr| \le\frac12\sup_{g,h\in S_y}\bigl|[G(g)](p)-[G(h)](p)\bigr| \le\frac{\Delta}{2}.
  \end{equation}
  Taking the maximum over $p\in P$ and the supremum over $f\in\Fcal$ shows that $c^\star$ attains the lower bound.
\end{proof}

\begin{lemma}[Exact Decoding]\label{lem:exact-decoding}
  For nonempty $\Fcal$, the following are equivalent: $\Rcal_Q^\star(e)=0$; the map $f\mapsto e(f(Q))$ is injective; and some \textit{state decoder} satisfies $(a\circ e)(f(Q))=f$ for every $f\in\Fcal$.
\end{lemma}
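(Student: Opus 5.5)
The plan is to prove the cycle (i)$\Rightarrow$(ii)$\Rightarrow$(iii)$\Rightarrow$(i), where (i) is $\Rcal_Q^\star(e)=0$, (ii) is injectivity of $T(f)\coloneqq e(f(Q))$ on $\Fcal$, and (iii) is the existence of an exact \textit{state decoder}. The main tool is \cref{lem:fiberwise-readout}, applied with input map $T:\Fcal\to\Zcal$, target $G(f)=f$, and $(\mathcal V,d)=(\Fcal,d_{\Ucal})$. By \cref{lem:ontic-metric-space}, this is a metric space, and $\Rcal_Q^\star(e)$ is exactly the infimum appearing in \cref{eq:fiberwise-error-bounds}. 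Readouts $c:\Zcal\to\Fcal$ range over the whole class $\Acal$, so no extra restriction arises.

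\emph{(i)$\Rightarrow$(ii).} Suppose $\Rcal_Q^\star(e)=0$. The lower bound of \cref{lem:fiberwise-readout} gives $\Delta\le 2\Rcal_Q^\star(e)=0$, where
\begin{equation*}
  \Delta=\sup\{d_{\Ucal}(f,g):f,g\in\Fcal,\ T(f)=T(g)\}.
\end{equation*}
Hence $T(f)=T(g)$ forces $d_{\Ucal}(f,g)=0$, and so $f=g$ because $d_{\Ucal}$ is a metric. Thus $T$ is injective. Note that this step needs no attainment of the infimum; it only uses the inequality.

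\emph{(ii)$\Rightarrow$(iii).} If $T$ is injective, define $a(z)$ to be the unique $f\in\Fcal$ with $T(f)=z$ when $z\in T(\Fcal)$. For $z\notin T(\Fcal)$, set $a(z)$ to a fixed element of $\Fcal$, which exists since $\Fcal$ is nonempty. Then $(a\circ e)(f(Q))=a(T(f))=f$ for every $f\in\Fcal$. Equivalently, this is the upper-bound construction in \cref{lem:fiberwise-readout} with every fiber a singleton, so $\Delta=0$.

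\emph{(iii)$\Rightarrow$(i).} An exact decoder gives $\Rcal_Q(a;e)=\sup_f d_{\Ucal}(f,f)=0$. Since $\Rcal_Q^\star(e)\ge0$ trivially, we get $\Rcal_Q^\star(e)=0$.

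Every step here is routine. The only point needing care is (i)$\Rightarrow$(ii): the infimum $\Rcal_Q^\star(e)$ might not be attained, so we cannot simply pick a decoder with zero error. Routing the argument through the fiber-diameter inequality $\Delta\le2\Rcal_Q^\star(e)$ handles this without any compactness assumption. Note also that boundedness of $\Fcal$ is not needed, because the lower-bound half of \cref{lem:fiberwise-readout} holds even when the quantities involved are infinite.
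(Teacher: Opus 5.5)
Your proposal is correct and follows essentially the same route as the paper. Both arguments get injectivity from $\Rcal_Q^\star(e)=0$ through the lower bound of \cref{lem:fiberwise-readout}, so no attainment of the infimum is needed. Both then build the exact decoder by inverting on attainable learned states and using a fixed state of $\Fcal$ elsewhere, and both note that the converse direction is immediate.
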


\begin{proof}
  If $\Rcal_Q^\star(e)=0$, \cref{lem:fiberwise-readout} gives $d_{\Ucal}(f,g)=0$ whenever $e(f(Q))=e(g(Q))$, so the map is injective. Its inverse on attainable learned states, extended by a fixed member of $\Fcal$ elsewhere, is an exact \textit{decoder}. Conversely, an exact \textit{decoder} has zero error, so $\Rcal_Q^\star(e)=0$.
\end{proof}

\subsection{Ambiguity and Recovery}

The common-readout construction now applies to all states compatible with an observation vector. Compactness will ensure that their ambiguity vanishes as separating queries accumulate.

\begin{assumption}[Compactness]\label{ass:compactness}
  The ontic state space $(\Fcal,d_{\Ucal})$ is nonempty and compact.
\end{assumption}

\begin{lemma}[Ambiguity and Recovery]\label{lem:ambiguity-recovery}
  Assume $\Fcal$ is nonempty. For every query set $Q\subseteq\Ucal$,
  \begin{equation}
    \frac{1}{2}\omega_Q(0)\le\boxed{\Ecal_Q^\star}\le\omega_Q(0).
  \end{equation}
\end{lemma}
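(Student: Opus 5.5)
The plan is to apply \cref{lem:fiberwise-readout} directly, with the input map $T(f)\coloneqq f(Q)\in\Rbb^Q$ and the target map $G(f)\coloneqq f\in\Fcal$. The target space is the metric space $(\Fcal,d_{\Ucal})$ from \cref{lem:ontic-metric-space}. An observation decoder $a_Q\in\Acal_Q$ is exactly a readout $c:\Rbb^Q\to\Fcal$ in the sense of that lemma. Its recovery error $\Ecal_Q(a_Q)$ is the quantity $\sup_{f\in\Fcal}d(c(T(f)),G(f))$. Hence $\Ecal_Q^\star$ coincides with the boxed infimum in \cref{eq:fiberwise-error-bounds}, taken over all maps $c:\Rbb^Q\to\Fcal$.

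Next, identify the fiber diameter. By definition,
\begin{equation*}
\Delta=\sup\{d_{\Ucal}(f,g):f,g\in\Fcal,\ f(Q)=g(Q)\}.
\end{equation*}
Since $d_Q$ is a metric on observation vectors, the condition $f(Q)=g(Q)$ is equivalent to $d_Q(f(Q),g(Q))\le0$. So $\Delta=\omega_Q(0)$ exactly, as in \cref{eq:paper-ambiguity}. The two bounds $\tfrac12\Delta\le\Ecal_Q^\star\le\Delta$ from \cref{lem:fiberwise-readout} then become the claimed inequalities.

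The only point needing care is the possibility that $\omega_Q(0)=\infty$, since $\Fcal$ is not assumed bounded here. The lemma's proof does not use finiteness of $\Delta$, so both inequalities still hold in $[0,\infty]$, and I will remark on this. Nonemptiness of $\Fcal$ is used to assign a fixed element of $\Fcal$ to unattainable inputs, which makes the readout a genuine map into $\Fcal$. The upper-bound construction chooses a representative $h_y$ in each fiber; this uses the axiom of choice, which is harmless.

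I do not expect a real obstacle. The proof is a direct specialization of \cref{lem:fiberwise-readout}, and the one substantive step is checking that the supremum defining $\Delta$ matches the $r=0$ case of query ambiguity.
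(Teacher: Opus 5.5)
Your proposal is correct and takes the same route as the paper. It applies \cref{lem:fiberwise-readout} with $T(f)=f(Q)$, $G(f)=f$ and $(\mathcal V,d)=(\Fcal,d_{\Ucal})$, and identifies the fiber diameter with $\omega_Q(0)$. Your remark that both bounds still hold in $[0,\infty]$ when $\Fcal$ is unbounded is a valid refinement that the paper leaves implicit.
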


\begin{proof}
  By~\cref{lem:ontic-metric-space}, $(\Fcal,d_{\Ucal})$ is a metric space. Apply~\cref{lem:fiberwise-readout} with $T(f)=f(Q)$, $G(f)=f$ and $(\mathcal V,d)=(\Fcal,d_{\Ucal})$. Its fiber diameter is $\Delta=\omega_Q(0)$, so~\cref{eq:fiberwise-error-bounds} gives both bounds.
\end{proof}

\begin{proof}[Proof of~\cref{lem:vanishing-ambiguity}]
  Suppose the conclusion fails. There are $\varepsilon>0$, increasing indices $m_k$, and states $f_k,g_k\in\Fcal$ such that
  \begin{equation}
    d_{P_{m_k}}\bigl(f_k(P_{m_k}),g_k(P_{m_k})\bigr)\le r_{m_k},\qquad d_{\Ucal}(f_k,g_k)\ge\varepsilon.
  \end{equation}
  Compactness gives a common subsequence with $f_k\to f$ and $g_k\to g$. Continuity of the metric gives $d_{\Ucal}(f,g)\ge\varepsilon$.

  Fix any stage $m$. Eventually $m_k\ge m$, so nestedness and~\cref{eq:observation-contraction} give
  \begin{equation}
    d_{P_m}\bigl(f_k(P_m),g_k(P_m)\bigr)\le r_{m_k}\to0.
  \end{equation}
  Taking the limit yields $f(P_m)=g(P_m)$. This holds at every stage, so state separation forces $f=g$, contradicting $d_{\Ucal}(f,g)\ge\varepsilon$.
\end{proof}

Before introducing a learned representation, we characterize when the observations themselves suffice for recovery.

\begin{theorem}[Separation and Recovery]\label{thm:asymptotic-recovery}
  Assume $(\Fcal,d_{\Ucal})$ is nonempty and compact~(\cref{ass:compactness}) and let $(Q_p)_{p\ge1}$ be a query sequence. Then
  \begin{equation}
    \text{State separation}\quad\Longleftrightarrow\quad\Ecal_{Q_p}^\star\xrightarrow{p\to\infty}0.
  \end{equation}
\end{theorem}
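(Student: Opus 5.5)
The plan is to pass between recovery error and zero-tolerance ambiguity using \cref{lem:ambiguity-recovery}, which gives $\frac12\omega_{Q_p}(0)\le\Ecal_{Q_p}^\star\le\omega_{Q_p}(0)$ for every $p$. Under this sandwich, $\Ecal_{Q_p}^\star\to0$ is equivalent to $\omega_{Q_p}(0)\to0$. So it suffices to prove that state separation holds if and only if $\omega_{Q_p}(0)\to0$.

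\emph{Forward direction.} Assume the sequence separates states. Apply \cref{lem:vanishing-ambiguity} with $P_m=Q_m$ and the constant tolerances $r_m=0$. This gives $\omega_{Q_p}(0)\to0$, and the upper half of the sandwich then gives $\Ecal_{Q_p}^\star\to0$. One point needs checking: $\omega_{Q_p}(0)$ is finite. This holds because a compact metric space is bounded, so each ambiguity is at most $\operatorname{diam}\Fcal<\infty$.

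\emph{Reverse direction.} Suppose $\Ecal_{Q_p}^\star\to0$ but separation fails. Then there are $f\ne g$ in $\Fcal$ with $f(Q_p)=g(Q_p)$ for every $p$. For each $p$ the pair $(f,g)$ is admissible in the supremum defining $\omega_{Q_p}(0)$, so $\omega_{Q_p}(0)\ge d_{\Ucal}(f,g)>0$, where positivity uses \cref{lem:ontic-metric-space}. The lower half of the sandwich gives $\Ecal_{Q_p}^\star\ge\frac12 d_{\Ucal}(f,g)$ for all $p$, contradicting convergence to $0$. This direction does not need compactness.

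The only nontrivial ingredient is the compactness argument, and it is already packaged in \cref{lem:vanishing-ambiguity}. The main thing to verify is that its hypotheses match: the query sequence must be nested, as \cref{def:observation-sequence} requires, and the tolerances $r_m=0$ must be allowed, which they are since they are nonnegative and tend to $0$. Beyond that, the proof is bookkeeping with the two inequalities of \cref{lem:ambiguity-recovery}.
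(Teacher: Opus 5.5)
Your proposal is correct and follows the paper's proof essentially verbatim. The forward direction applies \cref{lem:vanishing-ambiguity} with zero tolerance together with the upper half of \cref{lem:ambiguity-recovery}, and the converse uses a non-separated pair $f\ne g$ to get $\Ecal_{Q_p}^\star\ge\frac12 d_{\Ucal}(f,g)>0$ at every stage; your extra checks (finiteness of $\omega_{Q_p}(0)$, and that the converse does not need compactness) are valid points the paper leaves implicit.
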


\begin{proof}
  If the queries separate states, apply~\cref{lem:vanishing-ambiguity} with zero tolerance and then~\cref{lem:ambiguity-recovery}:
  \begin{equation}
    0\le\Ecal_{Q_p}^\star\le\omega_{Q_p}(0)\to0.
  \end{equation}
  Conversely, if two distinct states $f,g$ agree on every $Q_p$, then~\cref{lem:ambiguity-recovery} gives
  \begin{equation}
    \Ecal_{Q_p}^\star\ge\frac12\omega_{Q_p}(0)\ge\frac12d_{\Ucal}(f,g)>0
  \end{equation}
  at every stage. Thus vanishing recovery error requires state separation.
\end{proof}

The diagram shows candidate sets under state separation.
\begin{figure}[H]
  \centering
  \noindent\makebox[\linewidth][c]{\hspace*{2.5bp}\makebox[188.59280pt][c]{\raisebox{-20.00000pt}[107.24794pt][0.00000pt]{\includegraphics[width=228.59280pt,height=147.24794pt]{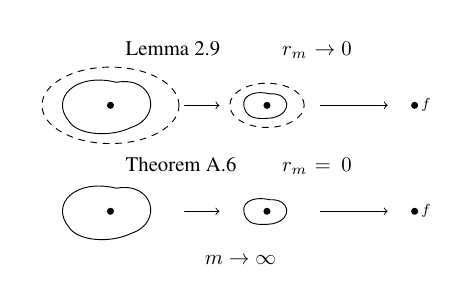}}}
\hspace*{-2.5bp}}
  \caption{Schematic candidate sets for a fixed $f$: exact observations (solid) and tolerance $r_m$ (dashed). Both have vanishing diameter under compactness and state separation, with $r_m\to0$ for the tolerance sets.}
  \label{fig:asymptotic-recovery-relations}
\end{figure}

\section{Prediction Setup}
\label{sec:appendix-prediction-setup}

We now compare prediction and recovery from the learned state $e(f(Q))$, using the error criteria in~\cref{def:prediction-recovery}. In the figures, $z_i=e(f_i(Q))$.

\begin{proof}[Proof of~\cref{cor:input-ambiguity}]
  Any \textit{state decoder} $a$ can also recover from the original observations: first apply $e$, then $a$. This gives an \textit{observation decoder} with the same error, so
  \begin{equation}
    \Ecal_Q^\star\le\Ecal_Q(a\circ e)=\Rcal_Q(a;e).
  \end{equation}
  Taking the infimum over $a$ gives $\Ecal_Q^\star\le\Rcal_Q^\star(e)$. The remaining bound $\Ecal_Q^\star\ge\omega_Q(0)/2$ follows from~\cref{lem:ambiguity-recovery}.
\end{proof}

The recovery criterion is $\Rcal_{Q_m}^\star(e_m)\to0$: for every desired accuracy, each \textit{encoder} at a sufficiently late stage admits a \textit{decoder} achieving that accuracy uniformly over states and ontic queries. The \textit{decoder} may depend on the stage.

\begin{proof}[Proof of~\cref{lem:optimal-prediction-error}]
  Apply~\cref{lem:fiberwise-readout} with $T(f)=e(f(Q))$, $G(f)=f(P)$ and $(\mathcal V,d)=(\Rbb^P,d_P)$. Since $\Fcal$ is bounded and restriction contracts distances, $G(\Fcal)$ is bounded. The coordinatewise midpoint therefore attains the common lower bound, giving~\cref{eq:optimal-prediction-fibers}.
\end{proof}

\section{Prediction and Recovery}\label{sec:prediction-state-recovery}

We prove~\cref{thm:paper-general-bounds,thm:paper-general-equivalence} first, followed by~\cref{thm:paper-low-bounds,thm:paper-low-equivalence}. In each case, fixing the \textit{encoder} gives an exact-recovery corollary.

\subsection{General Case}

The \textit{predictor} assigns a common target vector to states merged by the \textit{encoder}. Their target spread controls their full-state spread through query ambiguity.

\begin{proof}[Proof of~\cref{thm:paper-general-bounds}]
  We apply~\cref{lem:fiberwise-readout} to ontic states mapped to the same learned state, taking $T(f)=e(f(Q))$, $G(f)=f$ and $(\mathcal V,d)=(\Fcal,d_{\Ucal})$.

  \emph{Lower bound.} Restricting a decoded state to the target queries gives a \textit{predictor}, as illustrated below.
  \begin{center}
    \noindent\makebox[\linewidth][c]{
      \makebox[202.41420pt][c]{\raisebox{-20.00000pt}[90.00508pt][0.00000pt]{\includegraphics[width=242.41420pt,height=130.00508pt]{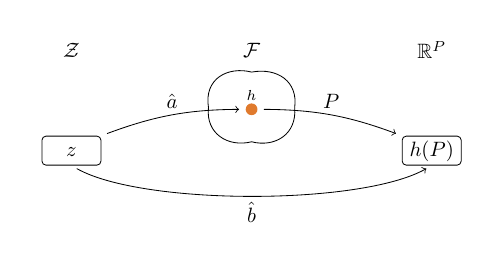}}}

    }
  \end{center}
  Target observations differ by at most the full state distance. Combining this with~\cref{lem:optimal-prediction-error} and the lower bound in~\cref{lem:fiberwise-readout} gives
  \begin{equation}
    \Lcal_{Q,P}^\star(e)=\frac12\sup_{\substack{f,g\in\Fcal\\e(f(Q))=e(g(Q))}}d_P(f(P),g(P)) \le\frac12\sup_{\substack{f,g\in\Fcal\\e(f(Q))=e(g(Q))}}d_{\Ucal}(f,g) \le\Rcal_Q^\star(e).
  \end{equation}

  \emph{Upper bound.} By~\cref{lem:optimal-prediction-error}, any pair with $e(f(Q))=e(g(Q))$ has target distance at most $2\Lcal_{Q,P}^\star(e)$. The definition of query ambiguity and~\cref{lem:fiberwise-readout} give
  \begin{equation}
    \Rcal_Q^\star(e)\le\sup_{\substack{f,g\in\Fcal\\e(f(Q))=e(g(Q))}}d_{\Ucal}(f,g) \le\omega_P\bigl(2\Lcal_{Q,P}^\star(e)\bigr).
  \end{equation}
  For a \textit{predictor} $b$ with finite error, the same bound follows from $\Lcal_{Q,P}^\star(e)\le\Lcal_{Q,P}(b;e)$ and the monotonicity of $\omega_P$.
\end{proof}

\begin{proof}[Proof of~\cref{thm:paper-general-equivalence}]
  If $\Lcal_{Q_m,P_m}^\star(e_m)\to0$, apply~\cref{lem:vanishing-ambiguity} with tolerance $2\Lcal_{Q_m,P_m}^\star(e_m)$. By~\cref{thm:paper-general-bounds},
  \begin{equation}
    0\le\Rcal_{Q_m}^\star(e_m)\le\omega_{P_m}\bigl(2\Lcal_{Q_m,P_m}^\star(e_m)\bigr)\to0.
  \end{equation}
  Conversely, $0\le\Lcal_{Q_m,P_m}^\star(e_m)\le\Rcal_{Q_m}^\star(e_m)$ gives vanishing prediction error whenever recovery error vanishes.
\end{proof}

For a fixed \textit{encoder}, vanishing prediction error gives exact state recovery.

\begin{corollary}[General Exact Recovery]\label{cor:general-state-recovery}
  Under~\cref{ass:compactness}, fix a query set $Q$ and an \textit{encoder} $e:\Rbb^Q\to\Zcal$ into a nonempty learned state space. Let $(P_m)_{m\ge1}$ be a query sequence separating states, with $P_m\subseteq\Ucal\setminus Q$. If
  \begin{equation}
    \Lcal_{Q,P_m}^\star(e)\xrightarrow{m\to\infty}0,
  \end{equation}
  then there exists a \textit{state decoder} $a\in\Acal$ such that
  \begin{equation}
    (a\circ e)(f(Q))=f,\qquad\forall f\in\Fcal.
  \end{equation}
\end{corollary}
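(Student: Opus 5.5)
The plan is to reduce the statement to \cref{lem:exact-decoding}. That lemma says an exact \textit{state decoder} exists exactly when the map $f\mapsto e(f(Q))$ is injective on $\Fcal$. So it suffices to show that if $f,g\in\Fcal$ satisfy $e(f(Q))=e(g(Q))$, then $f=g$.

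Fix such a pair $f,g$. For each $m$, the fiber formula in \cref{lem:optimal-prediction-error} applies, since compactness gives boundedness and $P_m\subseteq\Ucal\setminus Q$. Taking the supremum over pairs sharing a learned state, it gives
\begin{equation}
  d_{P_m}\bigl(f(P_m),g(P_m)\bigr)\le2\Lcal_{Q,P_m}^\star(e).
\end{equation}
If the query sets are nested, I would fix a stage $k$ and use \cref{eq:observation-contraction}: for every $m\ge k$, $d_{P_k}(f(P_k),g(P_k))\le d_{P_m}(f(P_m),g(P_m))\le2\Lcal_{Q,P_m}^\star(e)$. Letting $m\to\infty$ gives $f(P_k)=g(P_k)$ for every $k$, and state separation (\cref{def:state-separation}) then forces $f=g$. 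Alternatively, one can bound
\begin{equation}
  d_{\Ucal}(f,g)\le\omega_{P_m}\bigl(2\Lcal_{Q,P_m}^\star(e)\bigr),
\end{equation}
which tends to $0$ by \cref{lem:vanishing-ambiguity} under \cref{ass:compactness}; this is the same argument that \cref{thm:paper-general-equivalence} applies to a constant encoder sequence.

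The only point needing care is nestedness. The corollary says ``query sequence'', and \cref{def:observation-sequence} builds nestedness into that term, so the contraction step is legitimate. Without nestedness one would have to argue from separation directly, and the argument could fail. Once injectivity holds, \cref{lem:exact-decoding} yields the decoder explicitly: take the inverse of $f\mapsto e(f(Q))$ on attainable learned states, and send all other points of $\Zcal$ to an arbitrary fixed element of $\Fcal$, which is nonempty. Compactness is needed only for boundedness in \cref{lem:optimal-prediction-error} and, in the alternative route, for \cref{lem:vanishing-ambiguity}.
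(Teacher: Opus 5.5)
Your proof is correct, but your main argument takes a more direct route than the paper.

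\textbf{The paper's route.} It argues at the level of recovery error. It applies \cref{thm:paper-general-equivalence} to the constant sequence $(Q,e)$, which rests on \cref{lem:vanishing-ambiguity} and therefore on a compactness and subsequence argument. It concludes that the stage-independent quantity $\Rcal_Q^\star(e)$ must be zero. It then uses the chain ``$\Rcal_Q^\star(e)=0\Rightarrow$ injective $\Rightarrow$ exact decoder'' from \cref{lem:exact-decoding}.

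\textbf{Your route.} You go straight to injectivity, one merged pair at a time. You use the fiber bound $d_{P_m}(f(P_m),g(P_m))\le2\Lcal_{Q,P_m}^\star(e)$, then nestedness with \cref{eq:observation-contraction}, then separation. This is exactly the second half of the proof of \cref{lem:vanishing-ambiguity}, applied to a constant pair. Because nothing needs to be extracted from a subsequence, compactness drops out entirely.

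Your own accounting is in fact slightly too generous to compactness. The fiber inequality you need is only the lower half of \cref{lem:fiberwise-readout}. That half holds for any nonempty family without boundedness, read in the extended reals, and the hypothesis $\Lcal_{Q,P_m}^\star(e)\to0$ makes it finite eventually. So your argument shows the corollary for any nonempty $\Fcal$, given a nested separating target sequence.

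\textbf{What each route buys.} The paper's route is economical given the existing machinery. It also passes through the uniform quantitative bound $\Rcal_{Q_m}^\star(e_m)\le\omega_{P_m}\bigl(2\Lcal_{Q_m,P_m}^\star(e_m)\bigr)\to0$, which is genuinely needed when the encoder varies with $m$. For a fixed encoder, exact recovery is a pointwise property (injectivity), and your argument exploits exactly that. The alternative you sketch, via $\omega_{P_m}$ and \cref{lem:vanishing-ambiguity}, is essentially the paper's proof.
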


\begin{proof}
  By~\cref{thm:paper-general-equivalence}, the recovery error tends to zero. Since $Q$ and $e$ are fixed, $\Rcal_Q^\star(e)$ is independent of the stage and therefore equals zero. Apply~\cref{lem:exact-decoding} to obtain an exact \textit{decoder}.
\end{proof}

Embedding theory relates state distinguishability to the geometry preserved by observations~\citep{Sauer1991Embedology}. Here, exact state recovery has a topological counterpart when the state space is compact and the map to learned states is continuous.

\begin{corollary}[Topology Preservation]\label{cor:manifold-recovery}
  Under~\cref{ass:compactness}, fix a query set $Q$ and an \textit{encoder} $e:\Rbb^Q\to\Zcal=\Rbb^\ell$ with a continuous map to learned states $f\mapsto e(f(Q))$. If there exists a \textit{state decoder} $a\in\Acal$ such that
  \begin{equation}
    (a\circ e)(f(Q))=f,\qquad\forall f\in\Fcal,
  \end{equation}
  then this map is a homeomorphism onto its image. In particular, if $\Fcal$ is a topological manifold, possibly with boundary, its image under this map is a manifold of the same dimension.
\end{corollary}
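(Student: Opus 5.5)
Write $\Phi:\Fcal\to\Rbb^\ell$ for the map $\Phi(f)\coloneqq e(f(Q))$, which is continuous by hypothesis. The plan is to apply the classical fact that a continuous bijection from a compact space onto a Hausdorff space is a homeomorphism. Then I would invoke invariance of domain for the dimension statement.

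\emph{Step 1: injectivity.} If $\Phi(f)=\Phi(g)$, then applying the exact \textit{state decoder} gives
\begin{equation}
  f=(a\circ e)(f(Q))=a(\Phi(f))=a(\Phi(g))=g.
\end{equation}
So $\Phi$ is a bijection onto its image $\Phi(\Fcal)$, and the restriction $a|_{\Phi(\Fcal)}$ is its inverse. (Equivalently, cite \cref{lem:exact-decoding}.) Note that $a$ itself need not be continuous, so continuity of the inverse must come from compactness rather than from $a$.

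\emph{Step 2: continuity of the inverse.} The space $(\Fcal,d_{\Ucal})$ is compact by \cref{ass:compactness}, and it is a metric space by \cref{lem:ontic-metric-space}. The image $\Phi(\Fcal)\subseteq\Rbb^\ell$ is Hausdorff. Take any closed $K\subseteq\Fcal$. It is compact, so $\Phi(K)$ is compact in $\Rbb^\ell$ and hence closed in $\Phi(\Fcal)$. Thus $\Phi$ is a closed map onto its image. A closed continuous bijection is a homeomorphism, so $(\Phi|^{\Phi(\Fcal)})^{-1}=a|_{\Phi(\Fcal)}$ is continuous.

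\emph{Step 3: manifold statement.} Homeomorphisms preserve the property of being a topological manifold, possibly with boundary. They also preserve its dimension, since dimension is a topological invariant by invariance of domain. Hence if $\Fcal$ is an $n$-manifold, so is $\Phi(\Fcal)$, with the subspace topology from $\Rbb^\ell$.

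The main subtlety is Step 2. One has to make sure that continuity of the inverse is derived from compactness and the Hausdorff property of the target, not from any regularity of the decoder $a$. Everything else is immediate.
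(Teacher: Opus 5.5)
Your proposal is correct and follows essentially the same route as the paper: injectivity from the exact decoder (as in \cref{lem:exact-decoding}), continuity of the inverse because compact images of closed sets are closed, so the continuous bijection is a closed map, and transport of manifold charts through the homeomorphism. Invariance of domain is not needed here, since composing each chart of $\Fcal$ with the continuous inverse directly yields charts of the same dimension and boundary structure on the image; citing it does no harm.
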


\begin{proof}
  \emph{Injectivity.} By~\cref{lem:exact-decoding}, the map $f\mapsto e(f(Q))$ is injective. It is therefore a continuous bijection onto its image.

  \emph{Continuity of the inverse.} Every closed subset of the compact space $\Fcal$ is compact. Its image is compact by continuity, and therefore closed in $\Rbb^\ell$. The bijection is thus a closed map, so its inverse is continuous. This proves that it is a homeomorphism onto its image.

  \emph{Manifold structure.} Composing each local chart of $\Fcal$ with this inverse gives a local chart on the image, with the same dimension and boundary structure.
  \begin{center}
    \makebox[155.68117pt][c]{\raisebox{-20.00000pt}[68.35419pt][0.00000pt]{\includegraphics[width=195.68117pt,height=108.35419pt]{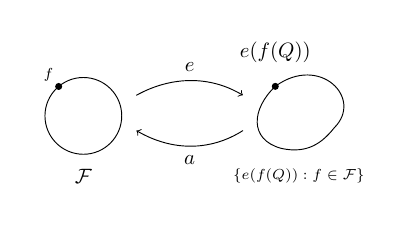}}}

  \end{center}
\end{proof}

\subsection{Low-Dimensional Case}

The general case controls recovery through target ambiguity. In the low-dimensional case, covering and observability let us bound that ambiguity using finitely many queries, yielding the recovery and convergence results in~\cref{thm:paper-low-bounds,thm:paper-low-equivalence}.

\begin{lemma}[Covering Queries]\label{lem:covering-queries}
  Let $\Fcal$ be nonempty and let $(\Fcal,Q)$ satisfy~\cref{ass:low-dimensional-manifold} with constants $C,d,\alpha$. Fix a covering radius $0<\varepsilon\le1$ and a query allowance $\tau>0$.

  There is a finite representative set $\Fcal_\varepsilon\subseteq\Fcal$, with $N=|\Fcal_\varepsilon|\le C\varepsilon^{-d}$, such that every state in $\Fcal$ is within $d_{\Ucal}$-distance $\varepsilon$ of a representative.

  There also exists a target set $P\subseteq\Ucal\setminus Q$, with $1\le|P|\le\max\{1,N(N-1)/2\}$, that distinguishes these representatives: for every $f,g\in\Fcal_\varepsilon$,
  \begin{equation}\label{eq:cover-center-observability}
    d_P\bigl(f(P),g(P)\bigr)\ge\alpha d_{\Ucal}(f,g)-\tau.
  \end{equation}
  Equivalently,
  \begin{equation}\label{eq:cover-center-distance}
    d_{\Ucal}(f,g)\le\frac{1}{\alpha}d_P\bigl(f(P),g(P)\bigr)+\frac{\tau}{\alpha}.
  \end{equation}
  The representatives and targets are selected independently of the \textit{encoder}.
\end{lemma}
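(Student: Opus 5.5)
The plan is to build the representatives directly from the covering bound and then pick one target query per pair of representatives.

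\emph{Step 1: representatives.} By~\cref{eq:state-covering-growth} with $0<\varepsilon\le1$, there is a cover of $\Fcal$ by at most $C\varepsilon^{-d}$ closed $d_{\Ucal}$-balls of radius $\varepsilon$ centered in $\Fcal$. Since $\Fcal$ is nonempty, the cover is nonempty. Its centers form $\Fcal_\varepsilon$, with $N=|\Fcal_\varepsilon|\le C\varepsilon^{-d}$ and $1\le N<\infty$, and every state lies within $\varepsilon$ of some center. Because the cover is fixed by $\Fcal$ alone, this choice does not involve any \textit{encoder}.

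\emph{Step 2: one target per pair.} For each unordered pair $\{f,g\}\subseteq\Fcal_\varepsilon$ with $f\ne g$, apply~\cref{eq:target-observability}: the supremum over $p\in\Ucal\setminus Q$ of $|f(p)-g(p)|$ is at least $\alpha d_{\Ucal}(f,g)$. This supremum is finite because states are bounded, so the strict approximation property of the supremum with slack $\tau>0$ gives a query $p_{fg}\in\Ucal\setminus Q$ with
\begin{equation}
  |f(p_{fg})-g(p_{fg})|>\alpha d_{\Ucal}(f,g)-\tau.
\end{equation}
The allowance $\tau$ is what makes this step work: we cannot assume the supremum is attained, since $\Ucal\setminus Q$ may be infinite. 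This is the only delicate point in the argument.

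\emph{Step 3: assemble the target set.} Let $P$ be the set of all chosen $p_{fg}$. If $N=1$, there are no pairs; in that case we take $P=\{p_0\}$ for an arbitrary $p_0\in\Ucal\setminus Q$, which exists by the standing assumption $\Ucal\setminus Q\ne\varnothing$.

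\begin{itemize}
  \item \emph{Size.} We have $1\le|P|\le\max\{1,N(N-1)/2\}$, since distinct pairs may share a query but there are at most $N(N-1)/2$ of them.
  \item \emph{Pairs of distinct representatives.} For $f\ne g$ in $\Fcal_\varepsilon$, we have $p_{fg}\in P$, so $d_P(f(P),g(P))\ge|f(p_{fg})-g(p_{fg})|\ge\alpha d_{\Ucal}(f,g)-\tau$. This is~\cref{eq:cover-center-observability}.
  \item \emph{Coincident representatives.} When $f=g$, the right side is $-\tau<0$, so the inequality holds trivially.
  \item \emph{Equivalent form.} Dividing by $\alpha>0$ and rearranging gives~\cref{eq:cover-center-distance}.
\end{itemize}

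\emph{Independence from the encoder.} Both $\Fcal_\varepsilon$ and $P$ are determined by $\Fcal$, $Q$, $\varepsilon$, and $\tau$ only, so they are selected independently of the \textit{encoder}.
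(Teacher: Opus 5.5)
Your proposal is correct and follows essentially the same route as the paper's proof. Both take the centers of an $\varepsilon$-cover of size at most $C\varepsilon^{-d}$, use the positive allowance $\tau$ to select one near-maximizing query outside $Q$ for each unordered pair of distinct centers, and fall back on an arbitrary singleton target set when $N=1$.
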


\begin{proof}
  \emph{Choose the representatives.} By~\cref{eq:state-covering-growth}, take an $\varepsilon$-cover with centers $\Fcal_\varepsilon\subseteq\Fcal$ and $1\le N=|\Fcal_\varepsilon|\le C\varepsilon^{-d}$.

  \emph{Distinguish each pair.} For each unordered pair $\{f,g\}\subseteq\Fcal_\varepsilon$ of distinct representatives, observability~\cref{eq:target-observability} and $\tau>0$ let us select $p_{\{f,g\}}\in\Ucal\setminus Q$ such that
  \begin{equation}
    |f(p_{\{f,g\}})-g(p_{\{f,g\}})|\ge\alpha d_{\Ucal}(f,g)-\tau.
  \end{equation}
  The positive allowance permits this choice even if no query attains the supremum.

  \emph{Collect the queries.} If $N\ge2$, let $P=\{p_{\{f,g\}}:f,g\in\Fcal_\varepsilon,\ f\ne g\}$. Then
  \begin{equation}
    1\le|P|\le\frac{N(N-1)}{2}.
  \end{equation}
  For distinct $f,g\in\Fcal_\varepsilon$, their selected query belongs to $P$, so
  \begin{equation}
    d_P\bigl(f(P),g(P)\bigr)\ge |f(p_{\{f,g\}})-g(p_{\{f,g\}})| \ge\alpha d_{\Ucal}(f,g)-\tau.
  \end{equation}
  For $f=g$, both distances are zero. This proves~\cref{eq:cover-center-observability}; rearranging gives~\cref{eq:cover-center-distance}.

  If $N=1$, take any singleton $P\subseteq\Ucal\setminus Q$, which exists by~\cref{ass:low-dimensional-manifold}; the distance bounds are immediate.
\end{proof}

We next extend the representative bound to arbitrary states.

\begin{lemma}[Decoding a Cover]\label{lem:cover-decoding}
  Let $\Fcal_\varepsilon\subseteq\Fcal$ be a finite $\varepsilon$-cover, and let a nonempty finite target set $P$ satisfy~\cref{eq:cover-center-distance} with constants $\alpha>0$ and $\tau\ge0$. For $y\in\Rbb^P$, choose a representative $g\in\argmin_{h\in\Fcal_\varepsilon}d_P(y,h(P))$. Then, for every $f\in\Fcal$,
  \begin{equation}\label{eq:cover-decoder-pointwise}
    d_{\Ucal}(f,g)\le\frac2\alpha d_P(y,f(P))+\frac{(2+\alpha)\varepsilon+\tau}{\alpha}.
  \end{equation}
\end{lemma}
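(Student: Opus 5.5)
We pass from the chosen representative $g$ to $f$ through the representative nearest $f$, and use the triangle inequality twice: once in the ontic distance and once in the target observation distance. Fix $f\in\Fcal$ and $y\in\Rbb^P$. Since $\Fcal_\varepsilon$ is an $\varepsilon$-cover, choose $h\in\Fcal_\varepsilon$ with $d_{\Ucal}(f,h)\le\varepsilon$. Then
\begin{equation}
  d_{\Ucal}(f,g)\le d_{\Ucal}(f,h)+d_{\Ucal}(h,g)\le\varepsilon+d_{\Ucal}(h,g).
\end{equation}
Both $h$ and $g$ are representatives, so~\cref{eq:cover-center-distance} applies:
\begin{equation}
  d_{\Ucal}(h,g)\le\frac1\alpha d_P\bigl(h(P),g(P)\bigr)+\frac\tau\alpha.
\end{equation}

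Next we bound the target distance between the two representatives through $y$. The triangle inequality in $d_P$ gives $d_P(h(P),g(P))\le d_P(h(P),y)+d_P(y,g(P))$. The minimizing choice of $g$ gives $d_P(y,g(P))\le d_P(y,h(P))$, so
\begin{equation}
  d_P\bigl(h(P),g(P)\bigr)\le2\,d_P\bigl(y,h(P)\bigr).
\end{equation}
The restriction contraction~\cref{eq:observation-contraction} gives $d_P(h(P),f(P))\le d_{\Ucal}(h,f)\le\varepsilon$. Hence $d_P(y,h(P))\le d_P(y,f(P))+\varepsilon$.

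Chaining these estimates yields
\begin{equation}
  d_{\Ucal}(f,g)\le\varepsilon+\frac{2}{\alpha}\bigl(d_P(y,f(P))+\varepsilon\bigr)+\frac\tau\alpha
  =\frac2\alpha d_P\bigl(y,f(P)\bigr)+\frac{(2+\alpha)\varepsilon+\tau}{\alpha},
\end{equation}
which is~\cref{eq:cover-decoder-pointwise}.

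There is no real obstacle; the only point needing care is the order of the comparisons. Comparing $g$ with $h$ through $y$ produces the factor $2$. The $\varepsilon$ picked up when passing from $h(P)$ to $f(P)$ is then multiplied by $2/\alpha$, and together with the initial $\varepsilon$ this gives the constant $(2+\alpha)/\alpha$. Finally, the argmin exists because $\Fcal_\varepsilon$ is finite and nonempty, so no tolerance is needed in choosing $g$.
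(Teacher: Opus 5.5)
Your proposal is correct and follows the paper's proof essentially line for line. Both pass through the cover point nearest $f$, apply \cref{eq:cover-center-distance} to that point and $g$, use the minimizing property of $g$ to get the factor $2$, and use the restriction contraction to move from the cover point to $f(P)$.
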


\begin{proof}
  Take $f_\varepsilon\in\Fcal_\varepsilon$ with $d_{\Ucal}(f,f_\varepsilon)\le\varepsilon$. By~\cref{eq:cover-center-distance}, the minimizing property of $g$, and restriction of the full metric,
  \begin{equation}
    \begin{aligned}
      d_{\Ucal}(f,g)&\le d_{\Ucal}(f,f_\varepsilon)+d_{\Ucal}(f_\varepsilon,g) \le\varepsilon+\frac1\alpha d_P(f_\varepsilon(P),g(P))+\frac\tau\alpha\\
      &\le\varepsilon+\frac2\alpha d_P(y,f_\varepsilon(P))+\frac\tau\alpha \le\frac2\alpha d_P(y,f(P)) +\frac{(2+\alpha)\varepsilon+\tau}{\alpha}.
    \end{aligned}
  \end{equation}
\end{proof}

To prove~\cref{thm:paper-low-bounds}, we fit this construction to the prescribed query budget and decode to a representative whose target observations are closest to the prediction.

\begin{proof}[Proof of~\cref{thm:paper-low-bounds}]
  \emph{Decode to a representative.} For radius $\varepsilon$ and allowance $\tau$, specified below, \cref{lem:covering-queries} supplies a finite representative set $\Fcal_\varepsilon$ and a target query set $P$. We now use these sets to define a \textit{decoder}. Fix any \textit{encoder} $e$ and finite-error \textit{predictor} $b$. Given a learned state $z$, compare the predicted target vector $b(z)$ with each representative's target vector $g(P)$, and return a closest representative:
  \begin{equation}
    a(z)\in\argmin_{g\in\Fcal_\varepsilon}d_P\bigl(b(z),g(P)\bigr),
  \end{equation}
  resolving ties by a fixed ordering.

  The diagram shows the returned representative $g=a(z)$, whose target readings are closest to $b(z)$.
  \begin{center}
    \noindent\makebox[\linewidth][c]{
      \makebox[192.39635pt][c]{\raisebox{-20.00000pt}[108.81720pt][0.00000pt]{\includegraphics[width=232.39635pt,height=148.81720pt]{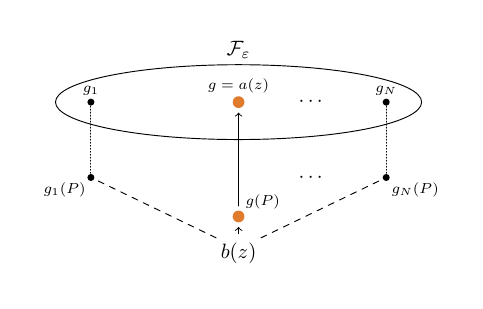}}}

    }
  \end{center}

  \emph{Bound the recovery error.} Apply~\cref{lem:cover-decoding} with $y=(b\circ e)(f(Q))$ and $g=(a\circ e)(f(Q))$:
  \begin{equation}
    d_{\Ucal}(f,(a\circ e)(f(Q)))\le\frac2\alpha d_P((b\circ e)(f(Q)),f(P)) +\frac{(2+\alpha)\varepsilon+\tau}{\alpha}.
  \end{equation}

  \emph{Set the query budget.} The radius controls the number of representatives; the allowance absorbs the approximation of separating queries. Let $\delta_m=(C^2/(2m))^{1/(2d)}\le1$. Choose $0<\varepsilon<\delta_m$ sufficiently close to $\delta_m$ that
  \begin{equation}
    N\le C\varepsilon^{-d}<\lfloor C\delta_m^{-d}\rfloor+1 =\lfloor\sqrt{2m}\rfloor+1,
  \end{equation}
  hence $N\le\lfloor\sqrt{2m}\rfloor$. Set $\tau=(2+\alpha)(\delta_m-\varepsilon)>0$. By~\cref{lem:covering-queries},
  \begin{equation}
    |P|\le\max\left\{1,\frac{N(N-1)}{2}\right\} \le\max\left\{1,\frac{N^2}{2}\right\}\le m.
  \end{equation}
  These choices are independent of $e$. Taking the supremum of the recovery-error bound gives
  \begin{equation}
    \begin{aligned}
      \Rcal_Q^\star(e)&\le\Rcal_Q(a;e) \le\frac{2}{\alpha}\Lcal_{Q,P}(b;e)+\frac{(2+\alpha)\varepsilon+\tau}{\alpha}\\
      &=\frac{2}{\alpha}\Lcal_{Q,P}(b;e) +\frac{(2+\alpha)\varepsilon+(2+\alpha)(\delta_m-\varepsilon)}{\alpha} =\frac{2}{\alpha}\Lcal_{Q,P}(b;e)+\frac{2+\alpha}{\alpha}\delta_m.
    \end{aligned}
  \end{equation}
  Taking the infimum over $b$ proves the upper bound. The lower bound follows from~\cref{thm:paper-general-bounds}, since the covering assumption makes $\Fcal$ bounded.
\end{proof}

The bound in~\cref{thm:paper-low-bounds} now gives the convergence and rate equivalences. These target sets need not be nested: each budget has its own uniform recovery bound.

\begin{proof}[Proof of~\cref{thm:paper-low-equivalence}]
  By~\cref{thm:paper-low-bounds}, for every admissible budget,
  \begin{equation}\label{eq:low-equivalence-sandwich}
    0\le\Lcal_{Q_m,P_m}^\star(e_m)\le\Rcal_{Q_m}^\star(e_m) \le\frac{2}{\alpha}\Lcal_{Q_m,P_m}^\star(e_m) +\frac{2+\alpha}{\alpha}\left(\frac{C^2}{2m}\right)^{1/(2d)}.
  \end{equation}
  \emph{Convergence.}

  \emph{Left-to-right.} If $\Lcal_{Q_m,P_m}^\star(e_m)\to0$, both terms in the upper bound tend to zero, so $\Rcal_{Q_m}^\star(e_m)\to0$.

  \emph{Right-to-left.} If $\Rcal_{Q_m}^\star(e_m)\to0$, the lower bound gives $\Lcal_{Q_m,P_m}^\star(e_m)\to0$.

  \emph{Rate.}

  \emph{Left-to-right.} Assume $\Lcal_{Q_m,P_m}^\star(e_m)=O(m^{-1/(2d)})$. By definition, there is a constant $c\ge0$, independent of $m$, such that $\Lcal_{Q_m,P_m}^\star(e_m)\le c\,m^{-1/(2d)}$ for all sufficiently large $m$. Substituting this into~\cref{eq:low-equivalence-sandwich} gives
  \begin{equation}
    \begin{aligned}
      \Rcal_{Q_m}^\star(e_m)&\le\frac2\alpha\Lcal_{Q_m,P_m}^\star(e_m) +\frac{2+\alpha}{\alpha}\left(\frac{C^2}{2m}\right)^{1/(2d)}\\
      &\le\frac{2c}{\alpha}m^{-1/(2d)} +\frac{2+\alpha}{\alpha}\left(\frac{C^2}{2}\right)^{1/(2d)}m^{-1/(2d)}\\
      &=\left[\frac{2c}{\alpha}+\frac{2+\alpha}{\alpha}\left(\frac{C^2}{2}\right)^{1/(2d)}\right] \cdot m^{-1/(2d)}.
    \end{aligned}
  \end{equation}
  The bracket is independent of $m$, so $\Rcal_{Q_m}^\star(e_m)=O(m^{-1/(2d)})$.

  \emph{Right-to-left.} Assume $\Rcal_{Q_m}^\star(e_m)=O(m^{-1/(2d)})$. There is a constant $c\ge0$, independent of $m$, such that, for all sufficiently large $m$,
  \begin{equation}
    0\le\Lcal_{Q_m,P_m}^\star(e_m)\le\Rcal_{Q_m}^\star(e_m)\le c\,m^{-1/(2d)}.
  \end{equation}
  Thus $\Lcal_{Q_m,P_m}^\star(e_m)=O(m^{-1/(2d)})$, completing~\cref{eq:paper-low-rate-equivalence}.
\end{proof}

For a fixed \textit{encoder}, the selected targets give the same recovery conclusion.

\begin{corollary}[Low-Dimensional Exact Recovery]\label{cor:low-dimensional-state-recovery}
  Under the assumptions of~\cref{thm:paper-low-bounds}, fix an \textit{encoder} $e:\Rbb^Q\to\Zcal$ into a nonempty learned state space. For each integer $m\ge C^2/2$, choose $P_m$ as in that theorem for the same input set $Q$. If
  \begin{equation}
    \Lcal_{Q,P_m}^\star(e)\xrightarrow{m\to\infty}0,
  \end{equation}
  then there exists a \textit{state decoder} $a\in\Acal$ such that
  \begin{equation}
    (a\circ e)(f(Q))=f,\qquad\forall f\in\Fcal.
  \end{equation}
\end{corollary}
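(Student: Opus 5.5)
The argument follows the pattern of \cref{cor:general-state-recovery}, with \cref{thm:paper-low-bounds} in place of \cref{thm:paper-general-equivalence}. We show that $\Rcal_Q^\star(e)=0$ and then invoke \cref{lem:exact-decoding}.

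First, note that the encoder and input set do not depend on $m$. For each integer $m\ge C^2/2$, \cref{thm:paper-low-bounds} applied to the fixed pair $(\Fcal,Q)$ and the fixed encoder $e$ gives
\begin{equation*}
  \Rcal_Q^\star(e)\le\frac{2}{\alpha}\Lcal_{Q,P_m}^\star(e)+\frac{2+\alpha}{\alpha}\left(\frac{C^2}{2m}\right)^{1/(2d)}.
\end{equation*}
The left-hand side is a single number, independent of $m$. The right-hand side tends to zero: the first term vanishes by hypothesis, and the second vanishes because $d>0$. Letting $m\to\infty$ therefore yields $\Rcal_Q^\star(e)\le0$. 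Since recovery error is nonnegative, $\Rcal_Q^\star(e)=0$.

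Next, \cref{lem:exact-decoding} applies because $\Fcal$ is nonempty. It shows that $f\mapsto e(f(Q))$ is injective, and that inverting this map on attainable learned states (and sending all other learned states to a fixed member of $\Fcal$) gives a state decoder $a\in\Acal$ with $(a\circ e)(f(Q))=f$ for every $f\in\Fcal$.

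Compactness is not needed here, unlike in the general case. The finite-budget bound already supplies a uniform allowance that vanishes as $m$ grows.

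The only point requiring care is that \cref{thm:paper-low-bounds} allows the target set to vary with $m$, and these sets need not be nested. This causes no difficulty, because the theorem's upper bound holds for every encoder, in particular for the fixed $e$, at each admissible budget. No consistency across the sets $P_m$ is required, since $\Rcal_Q^\star(e)$ does not involve the targets at all.
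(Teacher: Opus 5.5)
Your proof is correct and is essentially the paper's argument. The paper cites \cref{thm:paper-low-equivalence} with $Q_m=Q$ and $e_m=e$ to get $\Rcal_Q^\star(e)\to0$, hence $\Rcal_Q^\star(e)=0$ because it does not depend on $m$, and then applies \cref{lem:exact-decoding}; you simply inline the bound from \cref{thm:paper-low-bounds} on which that equivalence rests.
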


\begin{proof}
  By~\cref{thm:paper-low-equivalence}, the recovery error tends to zero. Since $Q$ and $e$ are fixed, $\Rcal_Q^\star(e)$ is independent of the stage and therefore equals zero. Apply~\cref{lem:exact-decoding} to obtain an exact \textit{decoder}.
\end{proof}

\subsection{Budget Sharpness}\label{sec:sharpness-topology}\label{sec:query-budget-sharpness}

We first prove the fixed-family result supporting~\cref{thm:paper-fixed-transfer}, then give a finite-query construction for comparison.

\mypara{A fixed family at every budget.}
A multiscale arc places the obstruction inside a fixed family: finer blocks contain more distinctions, while their response amplitudes decrease. The \textit{encoder} merges a pair whose distinguishing query was omitted, and the radius of that pair gives the recovery cost.

We construct the fixed family in~\cref{lem:multiscale-arc}, find a pair missed by the target set in~\cref{lem:omitted-pair}, and merge that pair in~\cref{lem:two-state-fiber}. Combining their bounds proves~\cref{lem:fixed-family-budget}.

\begin{lemma}[Multiscale Arc]\label{lem:multiscale-arc}
  For every $d>1$, there exist a countable query universe $\Ucal$, a singleton input set $Q$, and a compact state family $\Fcal$ homeomorphic to $[0,1]$, of diameter $1$. The input observation map is injective, and $(\Fcal,Q)$ satisfies~\cref{ass:low-dimensional-manifold} with $C=C_d$ and $\alpha=1$.
  The covering constant is
  \begin{equation}
    C_d\coloneqq1+\frac{12}{1-2^{1/d-1}}.
  \end{equation}
\end{lemma}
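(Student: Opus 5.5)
The plan is to build the arc explicitly as a Hölder curve $t\mapsto f_t$, $t\in[0,1]$, in $\ell^\infty(\Ucal)$. Its metric behaves like $|s-t|^{1/d}$, so that covering numbers scale as $\varepsilon^{-d}$ while the topology stays that of $[0,1]$.

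\emph{Construction.} Take the countable universe
\[
\Ucal=\{q_0,p_0\}\cup\{(k,j):k\ge1,\ 0\le j<2^k\},
\qquad Q=\{q_0\}.
\]
Define $f_t(q_0)=f_t(p_0)=t$, so the input map is injective and $p_0$ alone identifies the state. For the block queries set
\[
f_t(k,j)=a_k\,\phi(2^kt-j),\qquad a_k=c\,2^{-k/d},
\]
where $\phi$ is the unit tent on $[0,1]$ with peak $1/2$ at $1/2$, and $c\le1$ is a small constant fixed later. Because $p_0$ duplicates $q_0$ and $\Ucal\setminus Q$ contains every other query, the sup over $\Ucal\setminus Q$ equals $d_{\Ucal}$, which gives observability with $\alpha=1$. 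Every block response lies in $[0,c/2]$, so with $c\le1$ no block difference exceeds $1$. The pair $f_0,f_1$ differs by exactly $1$ at $q_0$, so the diameter is $1$.

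\emph{Hölder estimate.} Fix $s<t$ with $2^{-j-1}<|s-t|\le2^{-j}$. On level $k$ the block function is $a_k2^k$-Lipschitz and bounded by $a_k/2$, so its contribution is at most $\min\{a_k2^k|s-t|,\,a_k/2\}$.
\begin{itemize}
\item For $k\le j$, use the Lipschitz bound. The terms $c\,2^{k(1-1/d)}|s-t|$ form a geometric sequence whose largest term is at $k=j$, at most $c\,2^{-j/d}$. (If one sums the levels, the factor $1/(1-2^{1/d-1})$ appears here; this is where $d>1$ is used.)
\item For $k>j$, the amplitude bound gives at most $c\,2^{-j/d}/2$.
\item The base queries contribute $|s-t|\le|s-t|^{1/d}$.
\end{itemize}
Since $2^{-j}<2|s-t|$, this yields
\[
d_{\Ucal}(f_s,f_t)\le L\,|s-t|^{1/d}
\]
with an explicit $L$ depending only on $c$ and $d$. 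Two consequences follow. First, $t\mapsto f_t$ is continuous. It is also injective because of $q_0$. A continuous injection of the compact space $[0,1]$ into a metric space is a homeomorphism onto its image, so $\Fcal=\{f_t\}$ is compact and homeomorphic to $[0,1]$, and all of its members are bounded functions.

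\emph{Covering.} For $0<\varepsilon\le1$, take centers $f_{t_i}$ on a uniform grid of spacing $h=(\varepsilon/L)^d$. Every $t$ is within $h$ of a grid point, so every state is within $\varepsilon$ of a center. The number of centers is at most $1+1/h=1+L^d\varepsilon^{-d}\le(1+L^d)\varepsilon^{-d}$. It remains to check that $1+L^d\le C_d=1+12/(1-2^{1/d-1})$.

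\emph{Main obstacle.} The delicate step is the constant bookkeeping in the Hölder bound. I would choose $c$ and the grid (possibly a grid adapted to dyadic endpoints, $t_i=i2^{-n}$ with $n$ chosen from $\varepsilon$) so that the resulting count is exactly of the form $1+12/(1-2^{1/d-1})$. This probably means bounding the level contributions by summation, giving $\sum_{k\le j}c\,2^{k(1-1/d)-j}\le c\,2^{-j/d}/(1-2^{1/d-1})$, rather than by the maximum. The rounding factors ($2$ from dyadic rounding, $1/2$ from the tent height) then have to be tracked against the numerator $12$. If that calibration fails, I would retain the qualitative construction and instead adjust the amplitudes $a_k$ until the counting constant matches.
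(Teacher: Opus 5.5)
Your construction is correct and proves the lemma as stated, but it is a genuinely different construction from the paper's. You build a Faber--Schauder-type tent curve and get a single global H\"older modulus. In the sup metric the level contributions are maximized, not summed. So for $2^{-j-1}<|s-t|\le2^{-j}$ you get $d_{\Ucal}(f_s,f_t)\le L|s-t|^{1/d}$ with $L=\max\{1,c\,2^{1/d}\}$. A parameter grid of spacing $(\varepsilon/L)^d$ then gives $\Ncal_{\Fcal}(\varepsilon)\le(1+L^d)\varepsilon^{-d}\le3\varepsilon^{-d}$ for $c\le1$.

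Because \cref{ass:low-dimensional-manifold} only asks for an upper bound, and $C_d>25$ for every $d>1$, this already finishes the covering claim. The ``main obstacle'' you describe does not exist. You should simply state $L$ and the inequality $1+L^d\le C_d$ rather than tune $c$, the grid, or the amplitudes to reproduce $C_d$. The factor $1/(1-2^{1/d-1})$ never enters your argument. Your treatment of the diameter, injectivity, observability via the duplicate $p_0$, compactness and the homeomorphism is fine.

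The paper instead builds a polygonal arc from blocks. Block $j$ has $N_j=8\cdot2^{j-1}$ scaled simplex vertices with amplitude $s_j=\frac12\,2^{-(j-1)/d}$, a private query $p^j_{ab}$ for every vertex pair, height queries on the connectors, and a slowly varying parameter query $t/2$. It covers each straight segment of length $2s_j$ by $\lceil s_j/\varepsilon\rceil$ balls and places the tail in one ball about $f_1$. It then sums the geometric series $\sum_{j\le J}N_js_j$, which is where $12/(1-2^{1/d-1})$ comes from.

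Your route is shorter and gives a far better constant. The covering exponent is still exactly $d$, since the $2^k$ level-$k$ tent peaks are $a_k/2$-separated.

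The paper's heavier design is not needed for this lemma; it is chosen for what follows. \cref{lem:omitted-pair} and \cref{lem:two-state-fiber} reuse its block notation. They rely on each nonadjacent vertex pair having an exclusive query with gap $2s_j$, while every other query differs by at most $s_j$ and the parameter by at most $s_j/4$. In your tent family a pair of states differs at comparable magnitudes on several levels. For example, a level-$k$ peak and its left endpoint also differ by $2^{1/d-1}\,a_k/2$ on level $k-1$. So your family would not plug into the sharpness argument of \cref{thm:paper-fixed-transfer} without new work, even though it fully suffices here.
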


\begin{proof}
  \raggedbottom
  \emph{Construct the family.} Fix $d>1$. For $j\ge1$, put
  \begin{align}
    N_j&=8\cdot2^{j-1}, & s_j&=\frac12\,2^{-(j-1)/d},\\
    \ell_j&=2^{-j-1}, & T_j&=1-2^{1-j}.
  \end{align}
  Let $v_1,\ldots,v_{N_j}$ denote the standard basis in $\Rbb^{N_j}$. On $[T_j,T_j+\ell_j]$, the vector $\lambda^j(t)$ follows the polygonal path $v_1\to\cdots\to v_{N_j}$, spending equal time on each edge; the vectors in other blocks are zero. On $[T_j+\ell_j,T_{j+1}]$, join this block directly to the next: set $\lambda^j(t)=w v_{N_j}$ and $\lambda^{j+1}(t)=(1-w)v_1$, where $w$ decreases linearly from $1$ to $0$. At $t=1$, all block vectors are zero. These intervals cover $[0,1)$ because $T_{j+1}-T_j=2\ell_j$.

  Take $Q=\{q_0\}$. The remaining queries are a duplicate parameter query $p_0$, a height query $p^j_{\mathrm h}$ for each block, and a private query $p^j_{ab}$ for each $1\le a<b\le N_j$. Define the state $f_t$, for $0\le t\le1$, by
  \begin{align}
    f_t(q_0)=f_t(p_0)&=t/2,\\
    f_t(p^j_{ab})&=s_j\bigl(\lambda_a^j(t)-\lambda_b^j(t)\bigr),\\
    f_t(p^j_{\mathrm h})&=2s_j\bigl(1-\|\lambda^j(t)\|_1\bigr).
  \end{align}
  Let $\Fcal=\{f_t:0\le t\le1\}$.

  \emph{Geometry and observability.} All responses are bounded. The parameter query makes $t\mapsto f_t$ injective, and its duplicate gives target observability with $\alpha=1$. Each finite part of the path is continuous in the full response metric. From block $j$ onward, the distance to $f_1$ is at most $2s_j$: active private coordinates have magnitude at most $s_j$, height differences are at most $2s_j$, and the parameter difference is at most $2^{-j}\le s_j$. Thus the path is also continuous at $1$. Compactness and injectivity make it a homeomorphism onto $\Fcal$. Every coordinate has range at most $1$, and two vertices in the first block differ by $2s_1=1$, so the diameter is exactly $1$.

  \noindent\begin{minipage}{\linewidth}
    The shrinking blocks accumulate at the endpoint.
    \begin{center}
      \makebox[147.37401pt][c]{\raisebox{-20.00000pt}[38.81730pt][0.00000pt]{\includegraphics[width=187.37401pt,height=78.81730pt]{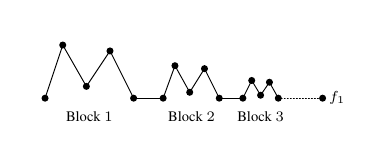}}}

    \end{center}
  \end{minipage}

  \emph{Covering bound.} Each edge within block $j$ is a straight response-space segment of length $2s_j$, as is its outgoing connector. Indeed, a private coordinate attains this length on a block edge, and the height query $p^j_{\mathrm h}$ attains it on the connector; the parameter increment is smaller. The $N_j-1$ block edges and one outgoing connector give $N_j$ closed segments per block, including their endpoints. For $0<\varepsilon<1$, let $J$ be the last index with $2s_J>\varepsilon$. The tail beginning at block $J+1$ lies in the closed $\varepsilon$-ball about $f_1$. A segment of length $2s_j$ is covered by $\lceil s_j/\varepsilon\rceil$ closed balls centered on the segment. Consequently,
  \begin{equation}
    \begin{aligned}
      \Ncal_{\Fcal}(\varepsilon)&\le1+\sum_{j\le J}N_j\left(\frac{s_j}{\varepsilon}+1\right) \le1+\frac3\varepsilon\sum_{j\le J}N_js_j,\\
      \sum_{j\le J}N_js_j&\le\frac{N_Js_J}{1-2^{1/d-1}} \le\frac{4\varepsilon^{1-d}}{1-2^{1/d-1}}.
    \end{aligned}
  \end{equation}
  This gives the stated $C_d\varepsilon^{-d}$ bound. At $\varepsilon=1$, one ball suffices. Moreover, the $N_j$ vertices of block $j$ are mutually $2s_j$ apart. At radius $s_j/2$, each covering ball contains at most one of them. Since $N_j=8(2s_j)^{-d}$, the covering exponent is exactly $d$.
\end{proof}

The next two lemmas use the family and block notation constructed in the proof of~\cref{lem:multiscale-arc}.

\begin{lemma}[Omitted Pair]\label{lem:omitted-pair}
  For every integer $m\ge1$ and nonempty $P\subseteq\Ucal\setminus Q$ with $|P|\le m$, some block $j$ contains two nonadjacent vertices with parameters $t_i,t_k$ such that
  \begin{equation}
    d_P(f_{t_i}(P),f_{t_k}(P))\le s_j,\qquad s_j\ge\frac12m^{-1/(2d)}.
  \end{equation}
\end{lemma}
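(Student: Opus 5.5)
The plan is to choose the block scale $j$ from the budget, and then use pigeonhole to find a vertex pair in that block whose private query is missing from $P$. I will use the notation from the proof of~\cref{lem:multiscale-arc}. Let $t_1<\dots<t_{N_j}$ be the parameters at which $\lambda^j(t)=v_i$. At such a vertex every other block vector is zero, since the vertex lies inside $[T_j,T_j+\ell_j]$ and not on a connector.

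\emph{Step 1: the response difference between two vertices of block $j$.} Take vertices $i\ne k$ of block $j$ and compare $f_{t_i}$ and $f_{t_k}$ query by query.
\begin{itemize}
  \item Parameter query $p_0$: the difference is $|t_i-t_k|/2\le\ell_j/2=2^{-j-2}$.
  \item Height queries: $\|\lambda^j\|_1=1$ at both vertices, and every other block vector vanishes. So every height query gives the same value for both states.
  \item Private queries $p^{j'}_{ab}$ with $j'\ne j$: these are $0$ for both states.
  \item Private queries $p^j_{ab}$ of block $j$: the value is $s_j(\delta_{ai}-\delta_{bi})$. The difference is $2s_j$ exactly when $\{a,b\}=\{i,k\}$. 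It is $s_j$ when $\{a,b\}$ meets $\{i,k\}$ in one index, and $0$ otherwise.
\end{itemize}
Since $d>1$, we have $s_j=\tfrac12 2^{-(j-1)/d}\ge\tfrac12 2^{-(j-1)}\ge2^{-j-2}$. Hence, whenever $p^j_{ik}\notin P$ (indices ordered so $i<k$), we get $d_P(f_{t_i}(P),f_{t_k}(P))\le s_j$.

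\emph{Step 2: choice of scale.} Let $j$ be the largest index with $2^{j-1}\le\sqrt m$. This index exists because $j=1$ qualifies for every $m\ge1$. Then $s_j=\tfrac12(2^{j-1})^{-1/d}\ge\tfrac12 m^{-1/(2d)}$, which is the required lower bound. Maximality gives $2^{j}>\sqrt m$, so $N_j=8\cdot2^{j-1}>4\sqrt m$; also $N_j\ge8$.

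\emph{Step 3: pigeonhole over nonadjacent pairs.} Adjacent vertices are pairs $\{i,i+1\}$. Block $j$ therefore has $(N_j-1)(N_j-2)/2$ nonadjacent pairs, each with its own private query $p^j_{ik}$. For $N_j\ge8$ we have $(N_j-1)(N_j-2)\ge\tfrac{21}{32}N_j^2>\tfrac{21}{2}m$. So the number of nonadjacent pairs exceeds $m\ge|P|$. Some nonadjacent pair $\{i,k\}$ thus has $p^j_{ik}\notin P$, and Step 1 gives the bound.

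The main obstacle is the bookkeeping in Step 1. It must be checked that no query outside block $j$'s private queries separates two vertices by more than $s_j$. The delicate point is that the parameter query is dominated by $s_j$, and this is where $d>1$ is used. The counting in Step 3 is routine once the scale is fixed as the largest $j$ with $2^{j-1}\le\sqrt m$.
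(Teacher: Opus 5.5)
Your proof is correct and follows the paper's argument. It uses the same query-by-query gap bookkeeping: height queries and other blocks contribute nothing, non-matching private queries of block $j$ contribute at most $s_j$, and the parameter gap is at most $\ell_j/2$. It also uses the same pigeonhole over the $(N_j-1)(N_j-2)/2$ nonadjacent private queries of one block.

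The only difference is the choice of block. The paper takes the smallest $j$ with $(N_j-1)(N_j-2)/2>m$ and derives $N_j\le8\sqrt m$, hence $s_j\ge\frac12m^{-1/(2d)}$, from minimality. You take the largest $j$ with $N_j\le8\sqrt m$ and derive the pigeonhole count from maximality, which is equally valid. Your parameter estimate needs only $d\ge1$, not $d>1$.
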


\begin{proof}
  Fix $m$ and $P$. Choose the smallest $j$ with $(N_j-1)(N_j-2)/2>m$. This is the number of nonadjacent vertex pairs in block $j$, so some pair $i<k$, with $k-i\ge2$, has $p^j_{ik}\notin P$. Write $t_i,t_k$ for their parameter values. Their private query $p^j_{ik}$ has response gap $2s_j$. Every other private query has gap at most $s_j$, every height query has gap zero, and the parameter gap is at most $\ell_j/2\le s_j/4$. Hence
  \begin{equation}
    d_P(f_{t_i}(P),f_{t_k}(P))\le s_j.
  \end{equation}
  When $j=1$, $N_j=8\le8\sqrt m$. Otherwise minimality of $j$ gives $(N_{j-1}-1)(N_{j-1}-2)/2\le m$. Hence $N_{j-1}\le\sqrt{2m}+2$ and $N_j=2N_{j-1}\le8\sqrt m$. It follows that
  \begin{equation}\label{eq:fixed-family-scale}
    \frac{s_j}{3}=\frac16(N_j/8)^{-1/d}\ge\frac16m^{-1/(2d)}.
  \end{equation}
\end{proof}

\begin{lemma}[Two-State Fiber]\label{lem:two-state-fiber}
  Fix two nonadjacent vertices $f_{t_i},f_{t_k}$ with $i<k$ in block $j$ and a nonempty finite target set $P\subseteq\Ucal\setminus Q$. There is a globally Lipschitz \textit{encoder} $e:\Rbb^Q\to\Rbb^2$ whose only non-singleton state fiber is this pair, with
  \begin{align}
    \Lcal_{Q,P}^\star(e)&=\frac12d_P(f_{t_i}(P),f_{t_k}(P)),\\
    \Rcal_Q^\star(e)&\ge\frac43s_j.
  \end{align}
\end{lemma}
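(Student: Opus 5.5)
The plan is to build the \textit{encoder} explicitly as a planar curve parametrized by the single input value $x=f_t(q_0)=t/2\in[0,\tfrac12]$. This curve traces a closed loop exactly on the parameter interval between the two chosen vertices. The two identities then follow from \cref{lem:optimal-prediction-error} and a direct lower bound on how well any single state of $\Fcal$ can serve both merged vertices.

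\emph{Construction.} Write $x_i=t_i/2<x_k=t_k/2$ and $L=x_k-x_i>0$, and define $e:\Rbb\to\Rbb^2$ piecewise:
\begin{equation*}
e(x)=
\begin{cases}
(x,0), & x\le x_i,\\
\bigl(x_i+r\sin\theta(x),\ r(1-\cos\theta(x))\bigr), & x_i\le x\le x_k,\\
(x-L,0), & x\ge x_k,
\end{cases}
\end{equation*}
where $\theta(x)=2\pi(x-x_i)/L$ and $r>0$ is fixed. The three pieces agree at the junctions, and each piece is Lipschitz, so $e$ is globally Lipschitz. Points on the open loop have positive second coordinate. The left ray has first coordinate $<x_i$ and the right ray has first coordinate $>x_i$, both with second coordinate $0$. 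Hence $e(x)=e(x')$ with $x\ne x'$ happens only for $\{x,x'\}=\{x_i,x_k\}$. The input observation map $t\mapsto t/2$ is injective by \cref{lem:multiscale-arc}, so the only non-singleton fiber of $f\mapsto e(f(Q))$ is $\{f_{t_i},f_{t_k}\}$. \Cref{lem:optimal-prediction-error} then gives $\Lcal_{Q,P}^\star(e)=\tfrac12 d_P(f_{t_i}(P),f_{t_k}(P))$ immediately, since the supremum over fibers reduces to this one pair.

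\emph{Recovery lower bound.} Any \textit{state decoder} $a$ must return one state $h=a(e(x_i))\in\Fcal$ for both merged vertices. Therefore
\begin{equation*}
\Rcal_Q^\star(e)\ge\inf_{h\in\Fcal}\max\{d_{\Ucal}(h,f_{t_i}),d_{\Ucal}(h,f_{t_k})\}.
\end{equation*}
The bound $\Delta/2=s_j$ from \cref{lem:fiberwise-readout} is too weak, because the midpoint of the two vertices is not on the arc. So I will show that every $h=f_t$ lies at distance at least $\tfrac43 s_j$ from one of the two vertices, by cases on where $t$ falls.

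\begin{enumerate}
\item \emph{$h$ is outside block $j$ and its connectors.} Then $\lambda^j(t)=0$, so the height query gives $|h(p^j_{\mathrm h})-f_{t_i}(p^j_{\mathrm h})|=2s_j$, since both vertices have $\|\lambda^j\|_1=1$.
\item \emph{$h$ is on a connector touching block $j$.} Then $\lambda^j(t)=w v_{N_j}$ or $(1-w)v_1$, and the height gap is $2s_j(1-w)$. The private queries pairing $i$ or $k$ with the active index have gaps of order $s_j(1+w)$. Balancing these two terms should give at least $\tfrac43 s_j$.
\item \emph{$h$ is on a block-$j$ edge $v_a\to v_{a+1}$, with weights $\mu,1-\mu$.} Since $k-i\ge2$, at most one of $\lambda_i,\lambda_k$ is nonzero. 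If $\lambda_i=0$, the query $p^j_{\cdot}$ pairing $i$ with a heavier active index $a$ gives a gap $s_j(1+\max\{\mu,1-\mu\})\ge\tfrac32 s_j$ from $f_{t_i}$. If instead $\lambda_i>0$, then $\lambda_k=0$, and the same argument applies to $f_{t_k}$ using $p^j_{ik}$ or another private query involving $k$. The subcases where the active index coincides with $i$ or $k$ are handled by comparing $s_j(\lambda_i-\lambda_k)$ against $\pm s_j$, which is what produces the constant $\tfrac43$.
\end{enumerate}

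The main obstacle is this case analysis, especially the connector case and edges adjacent to $v_i$ or $v_k$. There the height, private and parameter coordinates must be balanced to extract exactly $\tfrac43 s_j$; the parameter coordinate contributes at most $s_j/4$ and can be dropped from the lower bound. I would first minimize the maximum of the two relevant gaps as a function of the single edge weight, check that the minimum is $\tfrac43 s_j$, and then take the infimum over $h$.
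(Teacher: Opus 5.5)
Your proposal is correct and follows essentially the same route as the paper: the same circular loop encoder closing exactly between $t_i/2$ and $t_k/2$, \cref{lem:optimal-prediction-error} for the prediction identity, and the same three-region lower bound on any common decoded state. That bound is $\frac32 s_j$ inside block $j$, $s_j\max\{1+w,2(1-w)\}\ge\frac43 s_j$ on a connector, and $2s_j$ elsewhere.

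One correction to your commentary: the constant $\frac43$ arises only on the connectors, at $w=1/3$. For the incoming connector you must first reparametrize so that the block vector is $w v_1$; as written, its height gap is $2s_jw$, not $2s_j(1-w)$. Inside block $j$, your dichotomy $\lambda_i=0$ or $\lambda_k=0$ already gives $\frac32 s_j$ in every subcase, including edges at $v_i$ or $v_k$. The query $p^j_{ik}$ alone yields only $s_j$, so drop the remark about comparing $s_j(\lambda_i-\lambda_k)$ with $\pm s_j$.
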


\begin{proof}
  \emph{A two-state fiber.} Put $u_i=t_i/2$ and $u_k=t_k/2$. Define
  \begin{equation}
    e(u)=\!\!
    \begin{cases}
      (u-u_i,0),&u\le u_i,\\
      (\sin\!\frac{2\pi(u-u_i)}{u_k-u_i},1-\cos\!\frac{2\pi(u-u_i)}{u_k-u_i}),&u_i\le u\le u_k,\\
      (u-u_k,0),&u\ge u_k.
    \end{cases}
  \end{equation}
  \begin{center}
    \makebox[128.48730pt][c]{\raisebox{-20.00000pt}[57.21136pt][0.00000pt]{\includegraphics[width=168.48730pt,height=97.21136pt]{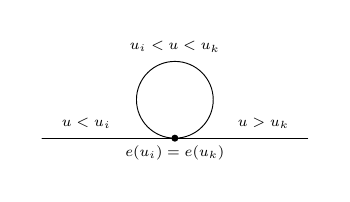}}}

  \end{center}
  The circular part has positive second coordinate in its interior, while the two tails lie on opposite halves of the horizontal axis. Thus the only merged input pair is $u_i,u_k$. The pieces agree at their endpoints, and the \textit{encoder} is globally Lipschitz with constant at most $\max\{1,2\pi/(u_k-u_i)\}$. By~\cref{lem:optimal-prediction-error},
  \begin{equation}
    \Lcal_{Q,P}^\star(e)=\frac12d_P(f_{t_i}(P),f_{t_k}(P)).
  \end{equation}

  \emph{The admissible recovery center.} Every singleton fiber can be decoded exactly. It remains to bound the radius of the pair about any $g\in\Fcal$.

  For $g=f_t$ in block $j$, the vector $\lambda^j(t)$ has mass one on at most two adjacent vertices. Exchange $i,k$ if necessary so that $\lambda_i^j(t)=0$, and take a largest coordinate $\lambda_b^j(t)\ge1/2$. Then $b\ne i$, and the private query indexed by $\{i,b\}$ gives response difference
  \begin{equation}
    s_j\bigl|1-\lambda_i^j(t)+\lambda_b^j(t)\bigr|=s_j\bigl(1+\lambda_b^j(t)\bigr)\ge\frac32s_j.
  \end{equation}
  On an adjacent connector, the block vector is $w$ times an endpoint, giving private-coordinate error at least $s_j(1+w)$ and height error $2s_j(1-w)$. Elsewhere, the block mass is zero and the height error is $2s_j$. Thus
  \begin{equation}
    \frac1{s_j}\max\{d_{\Ucal}(g,f_{t_i}),d_{\Ucal}(g,f_{t_k})\}\ge\begin{cases}
      3/2,&\text{in block }j,\\
      \max\{1+w,2(1-w)\},&\text{on a connector},\\
      2,&\text{elsewhere}
    \end{cases}\ge\frac43.
  \end{equation}
  The connector bound follows by minimizing over $0\le w\le1$, with equality at $w=1/3$. Taking the infimum over common decoded states gives the recovery bound.
\end{proof}

The three lemmas now give the fixed-family lower bound.

\begin{lemma}[Fixed-Family Sharpness]\label{lem:fixed-family-budget}
  For every $d>1$, there exist a fixed countable query universe $\Ucal$, a singleton input set $Q$, and a compact state family $\Fcal$ homeomorphic to $[0,1]$, of diameter $1$. The input observation map is injective, and $(\Fcal,Q)$ satisfies~\cref{ass:low-dimensional-manifold} with $C=C_d$ and $\alpha=1$.

  For every integer $m\ge1$ and every nonempty $P\subseteq\Ucal\setminus Q$ with $|P|\le m$, there is a globally Lipschitz \textit{encoder} $e:\Rbb^Q\to\Rbb^2$ such that
  \begin{equation}\label{eq:fixed-family-budget}
    \Rcal_Q^\star(e)-2\Lcal_{Q,P}^\star(e)\ge\frac16m^{-1/(2d)}.
  \end{equation}
\end{lemma}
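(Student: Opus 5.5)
The plan is to take the family, universe and input set from \cref{lem:multiscale-arc}. That lemma already supplies every structural claim in the first paragraph of the statement: the homeomorphism with $[0,1]$, diameter $1$, injectivity of the input observation map, and \cref{ass:low-dimensional-manifold} with $C=C_d$, $\alpha=1$. What remains is the budget inequality~\cref{eq:fixed-family-budget}. I would prove it by chaining the two block lemmas for a fixed $m\ge1$ and a fixed nonempty target set $P\subseteq\Ucal\setminus Q$ with $|P|\le m$.

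First apply \cref{lem:omitted-pair} to $(m,P)$. It yields a block $j$ and two nonadjacent vertices $f_{t_i},f_{t_k}$ in that block with
\[
  d_P\bigl(f_{t_i}(P),f_{t_k}(P)\bigr)\le s_j,
  \qquad
  \frac{s_j}{3}\ge\frac16 m^{-1/(2d)},
\]
the second inequality being~\cref{eq:fixed-family-scale}. Then feed this pair and the same $P$ into \cref{lem:two-state-fiber}. This gives a globally Lipschitz \textit{encoder} $e:\Rbb^Q\to\Rbb^2$ whose only merged fiber is $\{f_{t_i},f_{t_k}\}$, and which satisfies $\Lcal_{Q,P}^\star(e)=\tfrac12 d_P(f_{t_i}(P),f_{t_k}(P))\le s_j/2$ and $\Rcal_Q^\star(e)\ge\tfrac43 s_j$.

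Subtracting gives
\[
  \Rcal_Q^\star(e)-2\Lcal_{Q,P}^\star(e)\ge\frac43 s_j-s_j=\frac{s_j}{3}\ge\frac16 m^{-1/(2d)},
\]
which is exactly~\cref{eq:fixed-family-budget}.

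The order of quantifiers is the one the statement requires: $P$ is fixed first, and the \textit{encoder} is built from the pair that $P$ omits. The construction therefore serves every admissible target set.

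The only points needing care are bookkeeping. The block $j$ selected in \cref{lem:omitted-pair} must be one whose $s_j$ carries the scale bound~\cref{eq:fixed-family-scale}, including the case $j=1$. The nonadjacency hypothesis of \cref{lem:two-state-fiber} must be met, and the omitted-pair lemma provides this ($k-i\ge2$). I expect the substantive difficulty to lie in the constant $4/3$ of the two-state fiber lemma rather than here. Since that lemma is taken as given, the main obstacle in this step is just checking that its constant $4/3$ and the factor $2$ in front of $\Lcal^\star$ combine to leave a margin of $s_j/3$, and they do.
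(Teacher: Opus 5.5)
Your proposal is correct and matches the paper's proof: it takes the family from \cref{lem:multiscale-arc}, the omitted nonadjacent pair from \cref{lem:omitted-pair}, and the merging encoder from \cref{lem:two-state-fiber}, then combines $\Rcal_Q^\star(e)\ge\frac43 s_j$ with $2\Lcal_{Q,P}^\star(e)=d_P(f_{t_i}(P),f_{t_k}(P))\le s_j$ to get the margin $s_j/3\ge\frac16 m^{-1/(2d)}$. The quantifier order you point out, with the target set fixed first and the encoder built from the pair it omits, is the same one the paper uses.
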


\begin{proof}
  \Cref{lem:multiscale-arc} supplies the fixed family and its geometric properties. For any admissible $m,P$, take the pair from~\cref{lem:omitted-pair} and the \textit{encoder} from~\cref{lem:two-state-fiber}. Then
  \begin{equation}
    \Rcal_Q^\star(e)-2\Lcal_{Q,P}^\star(e)\ge\frac43s_j-d_P(f_{t_i}(P),f_{t_k}(P)) \ge\frac{s_j}{3}\ge\frac16m^{-1/(2d)}.
  \end{equation}
\end{proof}

\mypara{The uniform transfer residual.}
For the fixed family above, the uniform transfer residual in~\cref{def:uniform-transfer-residual} measures the recovery cost of choosing targets before the \textit{encoder}. The lower bound is supplied by~\cref{lem:fixed-family-budget}; the upper bound follows from~\cref{thm:paper-low-bounds}.

\begin{proof}[Proof of~\cref{thm:paper-fixed-transfer}]
  Use the fixed family in~\cref{lem:fixed-family-budget}, and fix an integer $m\ge C_d^2/2$.

  \emph{Lower bound.} For every admissible target set $P$, \cref{lem:fixed-family-budget} supplies a globally Lipschitz \textit{encoder} witnessing
  \begin{equation}
    \sup_{e\in\mathcal E_{\mathrm{Lip}}}\bigl[\Rcal_Q^\star(e)-2\Lcal_{Q,P}^\star(e)\bigr]\ge\frac16m^{-1/(2d)}.
  \end{equation}
  Taking the infimum over $P$ gives the lower bound for $\mathcal T_m(\Fcal,Q)$.

  \emph{Upper bound.} Apply~\cref{thm:paper-low-bounds} with $C=C_d$ and $\alpha=1$. It supplies one admissible target set $P$ for which every \textit{encoder} satisfies
  \begin{equation}
    \Rcal_Q^\star(e)-2\Lcal_{Q,P}^\star(e)\le3\left(\frac{C_d^2}{2m}\right)^{1/(2d)}.
  \end{equation}
  Taking the supremum over $\mathcal E_{\mathrm{Lip}}$ for this $P$, then the infimum over all admissible target sets, gives the upper bound.

  \emph{$1$-Lipschitz encoders.} If $e$ has a Lipschitz bound $K\ge1$, then $e/K$ is $1$-Lipschitz. Composing a readout with $z\mapsto Kz$ transfers it from $e$ to $e/K$; $z\mapsto z/K$ reverses this correspondence. Predictions and recovered states are preserved, so both optimal errors are unchanged. Every residual value is thus attained in the smaller class; inclusion gives the reverse inequality between suprema.
\end{proof}

\mypara{Recovery and sampling.}
Classical optimal recovery describes the best readout from fixed information by the radius of its consistency sets~\citep[Chapters~1 and~3]{Traub1983Information}. Here the information is $e(f(Q))$, and the same consistency set determines two errors: its radius in the admissible state family and its radius in the target observation space. The fixed arc makes their difference persist at every finite target budget.

Optimal sampling instead chooses queries together with a reconstruction map~\citep[Sections~2 and~9.1]{Krieg2026Sampling}. Write the arc as $\Fcal=\{f_t:0\le t\le1\}$, with parameter query $p_0\in\Ucal\setminus Q$ satisfying $f_t(p_0)=t/2$. This query determines the state exactly. With $a(y)=f_{2y}$ for $0\le y\le1/2$, extended by $f_0$ elsewhere,
\begin{equation}\label{eq:fixed-family-exact-sampling}
  \inf_{a:\Rbb\to\Fcal}\sup_{0\le t\le1}d_{\Ucal}\bigl(a(f_t(p_0)),f_t\bigr)=0.
\end{equation}
The residual in~\cref{eq:paper-transfer-residual} measures the budget needed to transfer prediction accuracy to recovery uniformly across \textit{encoders}. On the constructed family, a residual tolerance $\varepsilon$ requires a budget of order $\varepsilon^{-2d}$, while direct reconstruction from the parameter query is exact.

Uniform-norm discretization gives a related view of how finitely many queries preserve distinctions in a linear function space~\citep[Theorems~1.1 and~1.2]{Kosov2024UniformDiscretization}. Its criterion is multiplicative norm preservation over that space. The criterion here couples prediction and state-valued recovery on a fixed nonlinear compact family. The covering bound controls the number of states needed at a given resolution; the pairwise witnesses in the construction determine the target budget.

\mypara{Encoder regularity and target order.}
\Cref{thm:paper-fixed-transfer} gives a common \textit{encoder} regularity bound. Keeping this bound fixed, we compare targets selected before and after the \textit{encoder}.

\begin{corollary}[Target Selection Order]\label{cor:regularity-target-order}
  On the fixed state family of~\cref{lem:fixed-family-budget}, let $\mathcal E_1$ consist of the $1$-Lipschitz maps $e:\Rbb^Q\to\Rbb^2$. For every $e\in\mathcal E_1$, there is a query $p\in\Ucal\setminus Q$ such that $\Rcal_Q^\star(e)\le2\Lcal_{Q,\{p\}}^\star(e)$. Consequently, for every integer $m\ge1$,
  \begin{equation}\label{eq:encoder-dependent-transfer}
    \sup_{e\in\mathcal E_1}\inf_{\substack{P\subseteq\Ucal\setminus Q\\1\le|P|\le m}}\bigl[\Rcal_Q^\star(e)-2\Lcal_{Q,P}^\star(e)\bigr]=0.
  \end{equation}
\end{corollary}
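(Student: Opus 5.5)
The plan is to reduce the claim to the elementary fact that recovery is controlled by the largest diameter of a fiber of the map to learned states. I will then show that, on the multiscale arc, this diameter is attained at a single query. For a fixed $e\in\mathcal E_1$, write $\Delta$ for the largest $d_{\Ucal}$-diameter of a fiber of $f\mapsto e(f(Q))$. By~\cref{lem:fiberwise-readout} with $T(f)=e(f(Q))$ and $G(f)=f$, we have $\Rcal_Q^\star(e)\le\Delta$.

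\emph{Step 1: rewrite both sides.} For a singleton $P=\{p\}$, \cref{lem:optimal-prediction-error} gives
\begin{equation}
2\Lcal_{Q,\{p\}}^\star(e)=D(p)\coloneqq\sup_{\substack{f,g\in\Fcal\\ e(f(Q))=e(g(Q))}}|f(p)-g(p)|.
\end{equation}
Exchanging the two suprema in the definition of $\Delta$ gives $\Delta=\sup_{p\in\Ucal}D(p)$. Here $D(q_0)=0$ on merged pairs, so $\Delta=\sup_{p\in\Ucal\setminus Q}D(p)$. It therefore suffices to show that this supremum is attained at some $p\in\Ucal\setminus Q$. Then $\Rcal_Q^\star(e)\le\Delta=D(p)=2\Lcal_{Q,\{p\}}^\star(e)$.

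\emph{Step 2: attainment, which is the main obstacle.} The query universe is countably infinite, so a supremum of $D$ over it need not be a maximum in general. If $\Delta=0$, any $p\in\Ucal\setminus Q$ works, for instance $p_0$. Suppose instead $\Delta>0$. From the construction in~\cref{lem:multiscale-arc}, every block-$j$ query has responses of range at most $2s_j$: private queries lie in $[-s_j,s_j]$ and height queries lie in $[0,2s_j]$. Hence $D(p)\le2s_j$ for such $p$. Because $s_j\to0$, only finitely many blocks contain queries with $D(p)>\Delta/2$. Each block has finitely many queries, so the set $\{p:D(p)>\Delta/2\}$ is finite, since the only other target query is $p_0$. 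This set is also nonempty, because $\Delta$ is a supremum exceeding $\Delta/2$. The supremum is therefore a maximum over this finite set. This argument never uses the Lipschitz bound, so the first claim should hold for every encoder. The restriction to $\mathcal E_1$ only matches the class appearing in~\cref{thm:paper-fixed-transfer}.

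\emph{Step 3: the identity~\cref{eq:encoder-dependent-transfer}.} For each $e\in\mathcal E_1$ and each $m\ge1$, the singleton found in Step 2 is admissible. So the inner infimum is at most $\Rcal_Q^\star(e)-2\Lcal_{Q,\{p\}}^\star(e)\le0$, and the supremum is at most $0$.

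For the reverse inequality, take the $1$-Lipschitz injective encoder $e(u)=(u,0)$. The input map $f_t\mapsto t/2$ is injective, so every fiber is a singleton. Then $\Rcal_Q^\star(e)=0$ by~\cref{lem:exact-decoding}, and $\Lcal_{Q,P}^\star(e)=0$ for every $P$ by~\cref{lem:optimal-prediction-error}. The inner infimum equals $0$, so the supremum is at least $0$, and equality follows.
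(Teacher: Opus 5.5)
Step 1 contains a false claim: $D(q_0)=0$ does not hold on merged pairs. Merging means $e(f(Q))=e(g(Q))$, not $f(Q)=g(Q)$. Because the input observation map is injective, every merged pair of distinct states has $f(q_0)\neq g(q_0)$. For the encoder of \cref{lem:two-state-fiber}, for instance, $D(q_0)=|t_i-t_k|/2>0$. In fact $D(q_0)>0$ exactly when $\Delta>0$, which is the only case that matters.

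The identity $\Delta=\sup_{p\in\Ucal\setminus Q}D(p)$ is still true, for a different reason. The duplicate parameter query $p_0\in\Ucal\setminus Q$ satisfies $f(p_0)=f(q_0)$ for every state, so $D(p_0)=D(q_0)$, and removing $q_0$ leaves the supremum unchanged. The paper uses this same fact: the differences at $q_0$ and $p_0$ coincide. You need it, since otherwise the maximum you extract in Step 2 could sit at the inadmissible query $q_0$.

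With that repair your proof is correct, and it reaches attainment by a different route from the paper. The paper uses continuity of $f\mapsto e(f(Q))$, which comes from the Lipschitz bound, together with compactness of $\Fcal$ to obtain a pair attaining the maximal fiber diameter. It then observes that a single pair differs on only finitely many queries (those of its active blocks, plus $q_0,p_0$), so one $p\in\Ucal\setminus Q$ realizes that pair's distance. You instead exchange the suprema and use the uniform decay $D(p)\le 2s_j$ over block $j$ to show that $\sup_p D(p)$ is a maximum over a finite set of queries.

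Your route needs no maximizing pair, hence no regularity of $e$. As you note, it therefore proves the singleton-target claim for arbitrary encoders. The paper's argument is local to one pair but relies on continuity of the encoder. Step 3 coincides with the paper's argument.
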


\begin{proof}
  For a fixed $e\in\mathcal E_1$, put
  \begin{equation}
    \Delta_e\coloneqq\max_{\substack{f,g\in\Fcal\\e(f(Q))=e(g(Q))}}d_{\Ucal}(f,g).
  \end{equation}
  The maximum exists: the map $f\mapsto e(f(Q))$ is continuous, so its equality set is a nonempty closed subset of $\Fcal\times\Fcal$. By~\cref{lem:fiberwise-readout}, a state-valued representative \textit{decoder} gives $\Rcal_Q^\star(e)\le\Delta_e$.

  Each state on the arc has nonzero block mass in at most two blocks. For any pair, differences in private and height responses are confined to the finite union of their active blocks. The remaining differing coordinates are $q_0,p_0$, whose differences coincide. A query outside $Q$ therefore attains the full distance of the pair. Apply this to a pair attaining $\Delta_e$, and call its query $p$. The largest $p$-response gap among pairs sharing \textit{encoder} output equals $\Delta_e$: this pair attains it, and every such gap is bounded by the full-state gap. By~\cref{lem:optimal-prediction-error},
  \begin{equation}
    \Rcal_Q^\star(e)\le\Delta_e=2\Lcal_{Q,\{p\}}^\star(e).
  \end{equation}
  When $\Delta_e=0$, the same conclusion holds with any available query. Each \textit{encoder} thus has a singleton target giving a nonpositive residual. Conversely, $e(u)=(u,0)$ belongs to $\mathcal E_1$, and the input observation identifies every state. For this \textit{encoder} both optimal errors are zero for every target set. Taking the indicated infimum and supremum proves~\cref{eq:encoder-dependent-transfer}.
\end{proof}

\mypara{A finite-query comparison.}
Allowing the family to depend on the budget yields a complementary construction: the \textit{encoder} is linear, the query universe is finite, and the exponent range includes $d=1$. The same order is attained at prediction-error coefficient $2$ and $\alpha=1$.

\begin{proposition}[Budget Sharpness]\label{prop:query-budget-sharpness}
  For every $d\ge1$, there is a constant $c_d>0$ such that, for each integer $m\ge1$, there exist a finite query universe $\Ucal$, a query set $Q$, and a compact connected one-dimensional state manifold $\Fcal$ of diameter $1$, such that $(\Fcal,Q)$ satisfies~\cref{ass:low-dimensional-manifold} with $C=3$ and $\alpha=1$. The input observation map is injective. For every nonempty $P\subseteq\Ucal\setminus Q$ with $|P|\le m$, there is a linear \textit{encoder} into a Euclidean learned state space such that
  \begin{equation}
    \Rcal_Q^\star(e)\ge2\Lcal_{Q,P}^\star(e)+c_d m^{-1/(2d)}.
  \end{equation}
\end{proposition}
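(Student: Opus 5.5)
We will construct, for each budget $m$, a single-block version of the multiscale arc from \cref{lem:multiscale-arc}. The block size and amplitude are tuned to $m$, so the finite query universe forces an omitted pair. Concretely, fix $d\ge1$ and $m$, and choose $N$ as the smallest even integer with $(N-1)(N-2)/2>m$, so $N\asymp\sqrt{m}$. Set the amplitude $s=\tfrac12$ so that the diameter is $1$. Let $\lambda(t)$ trace the polygonal path $v_1\to\cdots\to v_N$ in $\Rbb^N$ for $t\in[0,1]$.

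The queries are:
\begin{itemize}
  \item an input query $q_0$ with $f_t(q_0)=t/2$ (or a rescaled parameter);
  \item a duplicate parameter query $p_0$;
  \item private queries $p_{ab}$ with $f_t(p_{ab})=s(\lambda_a(t)-\lambda_b(t))$.
\end{itemize}
Then $\Ucal$ is finite, the input map is injective, and $p_0$ gives $\alpha=1$. The state family is a compact arc, hence a connected one-dimensional manifold with boundary. To obtain a closed curve, one can instead close the path $v_N\to v_1$.

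The covering condition with $C=3$ and exponent $d$ is where the amplitude, not just $N$, must depend on $m$. Since $C=3$ is fixed while $N\to\infty$, I would take the amplitude $s=s_m$ small enough that $N$ edges of length $2s$ are coverable by $3\varepsilon^{-d}$ balls. Then I would append a long "spine" segment (e.g.\ a parameter coordinate ranging over an interval of length $1$) to keep the diameter equal to $1$. The spine is a single segment of length $1$ and needs about $1/(2\varepsilon)+1\le 2\varepsilon^{-1}\le 2\varepsilon^{-d}$ balls. The block contributes roughly $N(s/\varepsilon+1)$ balls, which is at most $\varepsilon^{-d}$ for $\varepsilon\ge s$ provided $Ns^{d}\lesssim1$. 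For $\varepsilon<s$ one needs $N s/\varepsilon\le\varepsilon^{-d}$, which holds when $d\ge1$ and $Ns\le s^{1-d}$, i.e.\ $Ns^d\le1$. So I would choose $s\asymp N^{-1/d}\asymp m^{-1/(2d)}$, with constants adjusted so the total stays below $3\varepsilon^{-d}$ on $(0,1]$. This bookkeeping, making $C=3$ uniform in $m$ including the range $\varepsilon\in[s,1]$ where the block collapses into a few balls, is the step I expect to be the main obstacle. If it fails, I would attach the block at the spine's endpoint and cover it by a single ball of radius $2s$ when $\varepsilon\ge2s$.

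Given $P$ with $|P|\le m$, the counting argument of \cref{lem:omitted-pair} yields nonadjacent vertices $i<k$ with $p_{ik}\notin P$. This gives $d_P(f_{t_i}(P),f_{t_k}(P))\le s+|t_i-t_k|/2$. We reduce the parameter gap by compressing the block's parameter range to length $\ll s$, so this quantity is at most $\tfrac54 s$. Then we take the linear encoder $e(u)=\bigl(\phi(u)\bigr)$, where $\phi$ is a linear functional in $\Rbb^Q$ merging $f_{t_i}(Q)$ and $f_{t_k}(Q)$.

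Linearity with injective merging only on that pair forces a richer input. So I would use $Q$ consisting of $q_0$ plus a second input query $q_1$ whose response bends the input curve. The choice is arranged so that a linear projection $\Rbb^Q\to\Rbb$ identifies exactly $f_{t_i}(Q)$ and $f_{t_k}(Q)$ and is injective elsewhere on the curve. Concretely, take $q_1$ to equal a tent function of $t$ on $[t_i,t_k]$-type geometry so that projecting along the chord $f_{t_k}(Q)-f_{t_i}(Q)$ has that single collision. Since the collision location depends on $P$, and the family may not, I would instead choose the input curve strictly convex in $\Rbb^2$ (e.g.\ $(t,t^2)$). For any chord, projection orthogonal to it merges each level set in at most two points. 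That gives fibers of size at most two, and all merged pairs must be handled.

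For those fibers I would reuse the radius bound of \cref{lem:two-state-fiber}: any common center is at distance at least $\tfrac43 s$ from one of $f_{t_i},f_{t_k}$. Meanwhile, \cref{lem:optimal-prediction-error} computes $\Lcal^\star$ as half the largest target gap over all merged pairs. To keep other merged pairs harmless, restrict the convex input curve so that merged pairs other than $(i,k)$ have parameter gap $<|t_k-t_i|$ and hence target gap not exceeding that of $(i,k)$ up to $O(s)$ terms. Combining the two estimates then gives $\Rcal^\star\ge\tfrac43 s\ge 2\Lcal^\star+c\,s$ after tuning constants, i.e.\ the claim with $c_d\asymp m^{1/(2d)}s$ bounded below.
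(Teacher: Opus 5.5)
Your proposal has a genuine gap at the encoder step, and it cannot be fixed by tuning constants. Take the two-query input curve $\gamma(t)=(t,t^2)$. The functional orthogonal to the chord through $\gamma(t_i),\gamma(t_k)$ merges every pair with $t+t'=t_i+t_k$, not only $\{t_i,t_k\}$. The same happens for any input curve in $\Rbb^2$: if $\phi(\gamma(t_i))=\phi(\gamma(t_k))$, the intermediate value theorem applied to $\phi\circ\gamma$ on $[t_i,t_k]$ produces a continuum of merged pairs sweeping across the intermediate vertices.

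With equally spaced vertices, as in \cref{lem:multiscale-arc}, the merged pairs include the nested vertex pairs $(v_{i+1},v_{k-1}),(v_{i+2},v_{k-2}),\ldots$. They also include pairs whose parameter gap exceeds $|t_k-t_i|$. You only know that one private query is omitted, and $P$ may contain $p_{i+1,k-1}$ for every omitted $(i,k)$; for instance, let $P$ omit only pairs with both indices odd and $k-i\ge4$. That query has response gap $2s$ on the merged pair $(v_{i+1},v_{k-1})$. So \cref{lem:optimal-prediction-error} gives $2\Lcal_{Q,P}^\star(e)\ge2s>\frac43s$, and your final step $\frac43s\ge2\Lcal_{Q,P}^\star(e)+cs$ is false.

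The recovery side cannot absorb this. Within the compressed block every fiber has $d_{\Ucal}$-diameter at most about $2s$, so \cref{lem:fiberwise-readout} caps its contribution to $\Rcal_Q^\star(e)$ at about $2s$. Restricting the curve so that other merged pairs have smaller parameter gap is also unavailable. The family is fixed before $P$, hence before $(i,k)$. And a small parameter gap does not control private-query gaps anyway: ``up to $O(s)$ terms'' hides exactly the order-$s$ margin the argument needs.

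The missing idea is to make the input rich enough that the linear encoder's kernel is exactly the direction the omitted query detects. The paper uses a polygonal cycle with $N=\lceil\sqrt{2m}\rceil+3$ and $s=\frac12N^{-1/d}$, and records every vertex weight as an input, $f_\lambda(q_k)=s\lambda_k$ for $1\le k\le N$. It reaches diameter $1$ by lifting one edge through $H(\lambda)=2\min\{\lambda_1,\lambda_2\}$, which is read by both $q_0$ and the target $p_0$. For an omitted nonadjacent $p_{ij}$, the encoder replaces $x(q_i),x(q_j)$ by their sum and keeps every other coordinate. Merged states then satisfy $\lambda-\mu=t(v_i-v_j)$ with $|t|\le1$ and equal lift. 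So $p_0$ sees no difference, and every private query other than $p_{ij}$ sees at most $s|t|$. Hence $\Lcal_{Q,P}^\star(e)\le s/2$, while the vertex pair forces $\Rcal_Q^\star(e)\ge3s/2$, leaving residual $s/2\ge\frac{1}{4\cdot6^{1/d}}m^{-1/(2d)}$. Many pairs are still merged, but each differs only along the unobserved direction.

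The covering step you flagged as the main obstacle is routine by comparison. With $Ns^d=2^{-d}$, treat $\varepsilon<s$, $s\le\varepsilon<2s$ and $\varepsilon\ge2s$ separately, and add at most $2\varepsilon^{-1}$ balls for the lifted edge (or your spine).
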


\begin{proof}
  \emph{State space.} Fix $d\ge1$ and $m\ge1$, and set
  \begin{equation}
    N\coloneqq\lceil\sqrt{2m}\rceil+3, \qquad s\coloneqq\frac12N^{-1/d}.
  \end{equation}
  Let $v_1,\ldots,v_N$ be the standard basis of $\Rbb^N$, with $v_{N+1}=v_1$. On the polygonal cycle $\bigcup_{i=1}^N[v_i,v_{i+1}]$, let $H(\lambda)=2\min\{\lambda_1,\lambda_2\}$. This continuous function is zero off the edge $[v_1,v_2]$ and reaches $1$ at its midpoint. Take $Q=\{q_0,q_1,\ldots,q_N\}$ and $\Ucal\setminus Q=\{p_0\}\cup\{p_{ij}:1\le i<j\le N\}$. For each $\lambda$ on the cycle, define, for $1\le i\le N$ and $1\le i<j\le N$,
  \begin{equation}
    f_\lambda(q_i)\coloneqq s\lambda_i, \qquad f_\lambda(p_{ij})\coloneqq s(\lambda_i-\lambda_j).
  \end{equation}
  Also set $f_\lambda(q_0)=f_\lambda(p_0)=H(\lambda)$, lifting one edge while leaving all vertices unchanged. Let $\Fcal$ consist of these states, and write $f_i=f_{v_i}$. The map $\lambda\mapsto f_\lambda$ is continuous, with continuous inverse $\lambda_i=f_\lambda(q_i)/s$. Thus $\Fcal$ is homeomorphic to the cycle, a compact connected one-dimensional manifold, and the input observation map is injective. Since $\sum_i\lambda_i=\sum_i\mu_i=1$, we have
  \begin{equation}\label{eq:cycle-state-distance}
    d_{\Ucal}(f_\lambda,f_\mu)=\max\bigl\{s[\max_k(\lambda_k-\mu_k)-\min_k(\lambda_k-\mu_k)], |H(\lambda)-H(\mu)|\bigr\}.
  \end{equation}
  The target queries capture both terms, so target observability holds with $\alpha=1$. Since $2s\le1$ and $H$ ranges from $0$ to $1$, the diameter is exactly $1$.

  \emph{Covering bound.} Each unlifted edge has length $2s$ in $d_{\Ucal}$. For $0<\varepsilon<s$, the unlifted part can be covered by at most $N\lceil s/\varepsilon\rceil$ balls centered along these edges. Since $d\ge1$,
  \begin{equation}
    N\lceil s/\varepsilon\rceil\le\frac{2Ns}{\varepsilon} \le2Ns^d\varepsilon^{-d}=2^{1-d}\varepsilon^{-d}\le\varepsilon^{-d}.
  \end{equation}
  For $s\le\varepsilon<2s$, the $N$ vertices cover the unlifted part and $N=(2s)^{-d}\le\varepsilon^{-d}$. For $2s\le\varepsilon\le1$, one ball centered at a vertex covers that part, and $1\le\varepsilon^{-d}$. The lifted edge consists of two segments of length $1$ each, so it admits a $1$-Lipschitz parametrization on an interval of length $2$. It can therefore be covered by $\lceil1/\varepsilon\rceil\le2\varepsilon^{-1}$ additional balls centered along it. Consequently,
  \begin{equation}
    \Ncal_{\Fcal}(\varepsilon)\le\varepsilon^{-d}+2\varepsilon^{-1}\le3\varepsilon^{-d}, \qquad 0<\varepsilon\le1.
  \end{equation}

  \emph{Encoder construction.} Fix any allowed $P$. The cycle has $N(N-3)/2>m$ unordered nonadjacent vertex pairs, so some such pair $i<j$ has $p_{ij}\notin P$. Define a linear \textit{encoder} $e:\Rbb^Q\to\Zcal=\Rbb^N$ by
  \begin{equation}
    e(x)\coloneqq\bigl(x(q_0),\,x(q_i)+x(q_j),\,(x(q_k))_{k\in\{1,\ldots,N\}\setminus\{i,j\}}\bigr).
  \end{equation}
  It combines the two corresponding input coordinates and retains all others. In particular, $f_i$ and $f_j$ have the same learned state. The diagram illustrates the nonadjacent pair and its common learned state.
  \begin{center}
    \noindent\makebox[\linewidth][c]{\makebox[169.05710pt][c]{\raisebox{-20.00000pt}[92.03640pt][0.00000pt]{\includegraphics[width=209.05710pt,height=132.03640pt]{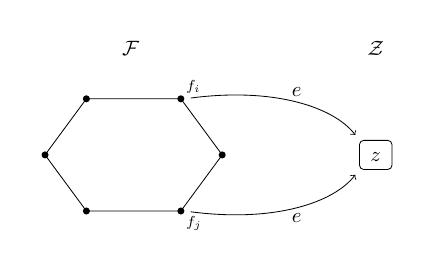}}}
}
  \end{center}

  \emph{Recovery lower bound.} Any \textit{state decoder} must give one output $g=f_\lambda\in\Fcal$ to the common learned state of $f_i,f_j$. A point on the cycle has at most two nonzero coordinates, at adjacent vertices. Since $i,j$ are nonadjacent, at least one of $\lambda_i,\lambda_j$ is zero. Also $\max_k\lambda_k\ge1/2$. By~\cref{eq:cycle-state-distance},
  \begin{equation}
    \max\{d_{\Ucal}(f_i,g),d_{\Ucal}(f_j,g)\}\ge s\bigl(1+\max_k\lambda_k\bigr) \ge\frac{3s}{2}.
  \end{equation}
  Thus every \textit{decoder} has recovery error at least $3s/2$, so $\Rcal_Q^\star(e)\ge3s/2$.

  \emph{Prediction upper bound.} If $e(f_\lambda(Q))=e(f_\mu(Q))$, then $\lambda-\mu=t(v_i-v_j)$ for some $|t|\le1$, and $H(\lambda)=H(\mu)$. The query $p_{ij}$ differs by $2s|t|$; every other pair query differs by at most $s|t|$, and $p_0$ has identical observations. Since $p_{ij}\notin P$, \cref{lem:optimal-prediction-error} gives
  \begin{equation}
    \Lcal_{Q,P}^\star(e)\le\frac{s}{2}.
  \end{equation}
  Combining the two error bounds and using $N\le\sqrt{2m}+4\le6\sqrt m$, we obtain
  \begin{equation}
    \Rcal_Q^\star(e)-2\Lcal_{Q,P}^\star(e)\ge\frac{s}{2}=\frac14N^{-1/d} \ge\frac{1}{4\cdot6^{1/d}}m^{-1/(2d)}.
  \end{equation}
  Taking $c_d=1/(4\cdot6^{1/d})$ completes the proof.
\end{proof}

\section{Fine-Tuning and Recovery}\label{sec:finetuning-state-recovery}

The preceding results start from prediction error over the entire state family. Fine-tuning instead gives an error on training observations. Coverage and prediction stability connect these quantities: a nearby training record controls the prediction at an unobserved state. The general and low-dimensional bounds then transfer that control to recovery.

\subsection{Fine-Tuning Setup}

We work under~\cref{ass:compactness}. Fix disjoint query sets $Q,P$ and maps $e:\Rbb^Q\to\Zcal$, $b:\Zcal\to\Rbb^P$, where $\Zcal$ is nonempty. We first control this pair's prediction error from its errors on the training records.

Use the recorded error and observation coverage radius from~\cref{def:recorded-error-coverage}, together with~\cref{ass:uniform-prediction-stability}. By~\cref{eq:observation-contraction}, $h_n$ is no larger than the coverage radius in the full ontic metric. The asymptotic results use a common stability modulus across the returned outputs.

\begin{lemma}[Record Extension]\label{lem:record-extension}
  Let $(\Fcal,d_{\Ucal})$ be nonempty and compact, and let $f_1,\ldots,f_n\in\Fcal$ with $n\ge1$ index the training records. For disjoint query sets $Q,P$, every pair $(e,b)$ satisfying~\cref{ass:uniform-prediction-stability} obeys
  \begin{equation}\label{eq:empirical-to-uniform-prediction}
    \widehat{\Lcal}_{Q,P,n}(b;e)\le\boxed{\Lcal_{Q,P}(b;e)}\le \widehat{\Lcal}_{Q,P,n}(b;e) +\rho(h_n)+h_n,
  \end{equation}
  where $h_n$ is the observation coverage radius in~\cref{eq:training-coverage}.
\end{lemma}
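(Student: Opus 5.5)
The lower bound is immediate from the definitions. Each recorded state $f_i$ belongs to $\Fcal$, so the maximum over the $n$ records in~\cref{eq:empirical-record-error} is taken over a subset of the states appearing in the supremum that defines $\Lcal_{Q,P}(b;e)$ in~\cref{eq:paper-errors}. Hence $\widehat{\Lcal}_{Q,P,n}(b;e)\le\Lcal_{Q,P}(b;e)$.

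For the upper bound, fix an arbitrary $f\in\Fcal$. Since the minimum in~\cref{eq:training-coverage} runs over finitely many records, it is attained. We therefore choose an index $i$ with
\begin{equation}
  d_{Q\cup P}\bigl(f(Q\cup P),f_i(Q\cup P)\bigr)\le h_n .
\end{equation}
By the contraction~\cref{eq:observation-contraction} applied to $Q\subseteq Q\cup P$ and $P\subseteq Q\cup P$, this gives both $d_Q(f(Q),f_i(Q))\le h_n$ and $d_P(f(P),f_i(P))\le h_n$.

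Next we apply the triangle inequality for the metric $d_P$ on $\Rbb^P$, splitting the error at $f$ into three terms:
\begin{equation}
  d_P\bigl((b\circ e)(f(Q)),f(P)\bigr)\le d_P\bigl((b\circ e)(f(Q)),(b\circ e)(f_i(Q))\bigr)+d_P\bigl((b\circ e)(f_i(Q)),f_i(P)\bigr)+d_P\bigl(f_i(P),f(P)\bigr).
\end{equation}
Each term is controlled separately.
\begin{itemize}
  \item The first term is at most $\rho(d_Q(f(Q),f_i(Q)))$ by~\cref{ass:uniform-prediction-stability}. Since $\rho$ is nondecreasing, this is at most $\rho(h_n)$.
  \item The second term is at most $\widehat{\Lcal}_{Q,P,n}(b;e)$ by the definition of the empirical error.
  \item The third term is at most $h_n$ by the contraction above.
\end{itemize}
Taking the supremum over $f\in\Fcal$ then yields the upper bound in~\cref{eq:empirical-to-uniform-prediction}.

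The only point needing care is that $h_n$ is finite. Compactness of $\Fcal$, together with the contraction~\cref{eq:observation-contraction}, makes every observation distance bounded, so $h_n<\infty$ and $\rho(h_n)$ is well defined. No step is a genuine obstacle: the choice of $i$ relies on attainment of a finite minimum, and monotonicity of $\rho$ handles the inequality $d_Q\le h_n$.
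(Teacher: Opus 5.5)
Your proof is correct and follows the paper's own argument essentially step for step. The lower bound holds because the records lie in $\Fcal$; the upper bound follows from choosing a nearest record, splitting by the triangle inequality into the stability term $\rho(h_n)$, the empirical term, and the target-coverage term $h_n$, and then taking the supremum over $f\in\Fcal$.
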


\begin{proof}
  The empirical error is a maximum over states in $\Fcal$, so it cannot exceed the full-family error. For the upper bound, fix $f\in\Fcal$ and choose a nearest training observation pair. Its generating state $f_i$ satisfies $d_{Q\cup P}(f(Q\cup P),f_i(Q\cup P))\le h_n$. Prediction stability and the triangle inequality give
  \begin{equation}
    \begin{aligned}
      d_P\bigl((b\circ e)(f(Q)),f(P)\bigr)&\le d_P\bigl((b\circ e)(f(Q)),(b\circ e)(f_i(Q))\bigr) +d_P\bigl((b\circ e)(f_i(Q)),f_i(P)\bigr)\\
      &\quad+d_P\bigl(f_i(P),f(P)\bigr) \le\rho(h_n)+\widehat{\Lcal}_{Q,P,n}(b;e)+h_n.
    \end{aligned}
  \end{equation}
  Taking the supremum over $f$ proves the upper bound. Both the empirical maximum and the coverage radius are finite, so the full-family error is finite as well.
\end{proof}

\begin{proof}[Proof of~\cref{prop:record-recovery}]
  \Cref{lem:record-extension} gives the prediction bound. Since the optimal prediction error is no larger, \cref{thm:paper-general-bounds,thm:paper-low-bounds} and monotonicity of $\omega_P$ give the two recovery bounds.
\end{proof}

We now choose the pair by fine-tuning. Let $\Hcal$ be the nonempty class of \textit{encoder--predictor} pairs accessible from the initialization by changing the permitted parameters. The empirical error is defined in~\cref{eq:empirical-record-error}.

\begin{definition}[Fine-Tuning]\label{def:empirical-finetuning}
  The \emph{optimal prediction error in the fine-tuning class} is
  \begin{equation}
    \Lcal_{Q,P}^{\Hcal,\star}\coloneqq\inf_{(e,b)\in\Hcal}\Lcal_{Q,P}(b;e).
  \end{equation}
  Given an optimization tolerance $\eta\ge0$, an \emph{$\eta$-optimal fine-tuning output} is a pair $(\hat e,\hat b)\in\Hcal$ satisfying
  \begin{equation}\label{eq:empirical-finetuning-rule}
    \widehat{\Lcal}_{Q,P,n}(\hat b;\hat e)\le\inf_{(e,b)\in\Hcal}\widehat{\Lcal}_{Q,P,n}(b;e)+\eta.
  \end{equation}
\end{definition}

For every $\eta>0$, an $\eta$-optimal output exists by the definition of the infimum. At $\eta=0$, the rule selects an empirical minimizer whenever the infimum is attained. The quantity $\Lcal_{Q,P}^{\Hcal,\star}$ optimizes jointly over the permitted \textit{encoder--predictor} pairs, while $\Lcal_{Q,P}^\star(e)$ holds the \textit{encoder} fixed.

For a fine-tuning output, the same lemma combines with empirical near-optimality to relate its prediction error to the best error in the model class.

\begin{corollary}[Fine-Tuning Consistency]\label{cor:finetuning-prediction}
  Every $\eta$-optimal fine-tuning output satisfying~\cref{ass:uniform-prediction-stability} obeys
  \begin{equation}\label{eq:finetuning-prediction-bound}
    \Lcal_{Q,P}^{\Hcal,\star}\le\boxed{\Lcal_{Q,P}(\hat b;\hat e)}\le \Lcal_{Q,P}^{\Hcal,\star}+\eta +\rho(h_n)+h_n.
  \end{equation}
  For stages $k\ge1$, allow the training data, query sets, model classes, and learned state spaces to vary, with a common modulus $\rho$ for the returned outputs. Let $(\hat e_k,\hat b_k)$ be $\eta_k$-optimal outputs and let $h_k$ be the corresponding observation coverage radius. If
  \begin{equation}\label{eq:finetuning-consistency-condition}
    \Lcal_{Q_k,P_k}^{\Hcal_k,\star}+\eta_k+h_k\xrightarrow{k\to\infty}0,
  \end{equation}
  then
  \begin{equation}\label{eq:finetuning-prediction-consistency}
    0\le\Lcal_{Q_k,P_k}^\star(\hat e_k)\le\Lcal_{Q_k,P_k}(\hat b_k;\hat e_k)\xrightarrow{k\to\infty}0.
  \end{equation}
\end{corollary}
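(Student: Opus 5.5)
The plan is to obtain \cref{eq:finetuning-prediction-bound} from \cref{lem:record-extension} plus the $\eta$-optimality rule \cref{eq:empirical-finetuning-rule}, and then read off the asymptotic statement stage by stage.

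\emph{Lower bound.} Since $(\hat e,\hat b)\in\Hcal$, we have $\Lcal_{Q,P}^{\Hcal,\star}\le\Lcal_{Q,P}(\hat b;\hat e)$ directly from the definition of the infimum.

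\emph{Upper bound.} Apply the upper half of \cref{lem:record-extension} to the output pair, which satisfies \cref{ass:uniform-prediction-stability}:
\begin{equation}
  \Lcal_{Q,P}(\hat b;\hat e)\le\widehat{\Lcal}_{Q,P,n}(\hat b;\hat e)+\rho(h_n)+h_n .
\end{equation}
By \cref{eq:empirical-finetuning-rule}, the empirical term is at most $\inf_{(e,b)\in\Hcal}\widehat{\Lcal}_{Q,P,n}(b;e)+\eta$. For every pair in $\Hcal$, \emph{with or without} stability, the empirical error is a maximum over recorded states in $\Fcal$. Hence $\widehat{\Lcal}_{Q,P,n}(b;e)\le\Lcal_{Q,P}(b;e)$, which is the trivial half of \cref{lem:record-extension} and needs no stability. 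Taking infima gives $\inf_{\Hcal}\widehat{\Lcal}_{Q,P,n}\le\Lcal_{Q,P}^{\Hcal,\star}$. Chaining these inequalities yields \cref{eq:finetuning-prediction-bound}.

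The point to handle with care is that stability is assumed only for the returned output, not for the whole class. I therefore use the stability-free comparison for the infimum and invoke \cref{ass:uniform-prediction-stability} only for $(\hat e_k,\hat b_k)$. If $\Lcal_{Q,P}^{\Hcal,\star}=\infty$, the bound is trivial, so I may assume it is finite.

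\emph{Asymptotics.} At stage $k$, the bound reads $\Lcal_{Q_k,P_k}(\hat b_k;\hat e_k)\le\Lcal^{\Hcal_k,\star}_{Q_k,P_k}+\eta_k+\rho(h_k)+h_k$, with the common modulus $\rho$. Under \cref{eq:finetuning-consistency-condition}, each of the three nonnegative terms $\Lcal^{\Hcal_k,\star}$, $\eta_k$ and $h_k$ tends to $0$. Because $\rho(t)\to0$ as $t\downarrow0$ and $\rho(0)=0$, also $\rho(h_k)\to0$. The common modulus is exactly what makes this single limit valid. Finally, $0\le\Lcal^\star_{Q_k,P_k}(\hat e_k)\le\Lcal_{Q_k,P_k}(\hat b_k;\hat e_k)$ holds because $\hat b_k\in\Bcal_{P_k}$ and $\Lcal^\star$ is an infimum over predictors for the fixed encoder. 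This gives \cref{eq:finetuning-prediction-consistency}.
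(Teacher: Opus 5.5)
Your proposal is correct and follows essentially the same route as the paper: bound the returned pair's empirical error by $\Lcal_{Q,P}^{\Hcal,\star}+\eta$, using the stability-free comparison $\widehat{\Lcal}_{Q,P,n}\le\Lcal_{Q,P}$ across the class, then apply \cref{lem:record-extension} to the returned pair and pass to the limit through the common modulus $\rho$. The lower bounds come, as you say, from $(\hat e,\hat b)\in\Hcal$ and from $\Lcal_{Q_k,P_k}^\star(\hat e_k)$ being an infimum over predictors for the fixed encoder.
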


\begin{proof}
  For every pair in $\Hcal$, the empirical error is no larger than its full-family error. Hence the fine-tuning rule gives
  \begin{equation}
    \widehat{\Lcal}_{Q,P,n}(\hat b;\hat e)\le\inf_{(e,b)\in\Hcal}\widehat{\Lcal}_{Q,P,n}(b;e)+\eta\le\Lcal_{Q,P}^{\Hcal,\star}+\eta.
  \end{equation}
  Apply~\cref{lem:record-extension} to the returned pair to obtain the upper bounds. The lower bound follows because $(\hat e,\hat b)\in\Hcal$.

  Under~\cref{eq:finetuning-consistency-condition}, all three nonnegative terms tend to zero. In particular, $h_k\to0$, so $\rho(h_k)\to0$. The same bound gives $\Lcal_{Q_k,P_k}(\hat b_k;\hat e_k)\to0$, and the optimal prediction error is no larger.
\end{proof}

\subsection{General Case}

\begin{theorem}[General Fine-Tuning Bound]\label{thm:general-finetuning-recovery}
  Every $\eta$-optimal fine-tuning output satisfying~\cref{ass:uniform-prediction-stability} obeys
  \begin{equation}\label{eq:general-finetuning-recovery}
    \Lcal_{Q,P}^\star(\hat e)\le\boxed{\Rcal_Q^\star(\hat e)}\le \omega_P\!\Bigl(2\bigl[\Lcal_{Q,P}^{\Hcal,\star}+\eta +\rho(h_n)+h_n\bigr]\Bigr).
  \end{equation}
  For stages $k\ge1$, allow the training states, query sets, model classes and learned state spaces to vary, with a common modulus $\rho$ for the returned outputs. Let $(\hat e_k,\hat b_k)$ denote the $\eta_k$-optimal outputs. If $(P_k)_{k\ge1}$ is a query sequence separating states in $\Fcal$ and
  \begin{equation}
    \Lcal_{Q_k,P_k}^{\Hcal_k,\star}+\eta_k+h_k\xrightarrow{k\to\infty}0,
  \end{equation}
  then $\Rcal_{Q_k}^\star(\hat e_k)\to0$. Here $h_k$ is the coverage radius of the training observations on $Q_k\cup P_k$ at stage $k$, defined as in~\cref{eq:training-coverage}.
\end{theorem}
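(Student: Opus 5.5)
The plan is to chain the fine-tuning prediction bound of \cref{cor:finetuning-prediction} with the general prediction--recovery bound of \cref{thm:paper-general-bounds}. For the single-stage inequality, the lower bound $\Lcal_{Q,P}^\star(\hat e)\le\Rcal_Q^\star(\hat e)$ is the lower half of \cref{thm:paper-general-bounds} applied to the encoder $\hat e$. This is legitimate because compactness (\cref{ass:compactness}) makes $(\Fcal,d_{\Ucal})$ bounded. For the upper bound, \cref{cor:finetuning-prediction} shows that the returned predictor satisfies
\begin{equation}
  \Lcal_{Q,P}(\hat b;\hat e)\le\Lcal_{Q,P}^{\Hcal,\star}+\eta+\rho(h_n)+h_n.
\end{equation}
The right-hand side is finite: $\Lcal_{Q,P}^{\Hcal,\star}$ is at most the finite error of $(\hat e,\hat b)$ itself, which \cref{lem:record-extension} guarantees. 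So $\hat b$ is a finite-error predictor, and the ``any predictor'' form of \cref{thm:paper-general-bounds} gives $\Rcal_Q^\star(\hat e)\le\omega_P(2\Lcal_{Q,P}(\hat b;\hat e))$. Monotonicity of $\omega_P$ in its tolerance, immediate from \cref{def:observation-ambiguity} since a larger tolerance enlarges the feasible set of pairs, then yields \cref{eq:general-finetuning-recovery}.

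For the staged statement, set $r_k\coloneqq2\bigl[\Lcal_{Q_k,P_k}^{\Hcal_k,\star}+\eta_k+\rho(h_k)+h_k\bigr]$. The hypothesis gives $h_k\to0$, and since $\rho(t)\to0$ as $t\downarrow0$ with $\rho(0)=0$ and the modulus is common across stages, $\rho(h_k)\to0$. All other terms are nonnegative and tend to zero, so $r_k\to0$. Applying the single-stage bound at each stage gives $0\le\Rcal_{Q_k}^\star(\hat e_k)\le\omega_{P_k}(r_k)$. \Cref{lem:vanishing-ambiguity} then gives $\omega_{P_k}(r_k)\to0$, using compactness of $\Fcal$ and the separating query sequence $(P_k)$.

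The main point to check is that \cref{lem:vanishing-ambiguity} needs a \emph{nested} query sequence, and \cref{def:observation-sequence} builds nestedness into the notion of query sequence. The statement says $(P_k)$ is a query sequence separating states, so nestedness is part of the hypothesis and the lemma applies directly. The only other care needed is that the single-stage bound uses the disjointness of $Q_k$ and $P_k$, which is part of the fine-tuning setup at each stage. No further obstacle is expected; the proof is a composition of earlier results.
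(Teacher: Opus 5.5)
Your proposal is correct and follows the paper's proof: you obtain the single-stage bound the same way, by substituting the fine-tuning prediction bound of \cref{cor:finetuning-prediction} into \cref{thm:paper-general-bounds} and using monotonicity of $\omega_P$. For the staged claim, the paper first shows that the optimal prediction error vanishes and then invokes \cref{thm:paper-general-equivalence}, whereas you apply \cref{lem:vanishing-ambiguity} directly to the tolerances $r_k$; that theorem is itself proved by the same lemma, so your version simply inlines the same argument.
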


\begin{proof}
  Substitute~\cref{eq:finetuning-prediction-bound} into~\cref{thm:paper-general-bounds}, using $\Lcal_{Q,P}^\star(\hat e)\le\Lcal_{Q,P}(\hat b;\hat e)$ and the monotonicity of $\omega_P$.

  For the asymptotic claim, \cref{cor:finetuning-prediction} gives vanishing optimal prediction error. State separation and~\cref{thm:paper-general-equivalence} then give vanishing recovery error.
\end{proof}

\subsection{Low-Dimensional Case}

Using the targets constructed in~\cref{thm:paper-low-bounds} makes the query budget explicit under the same training rule and stability condition.

\begin{theorem}[Low-Dimensional Fine-Tuning Bound]\label{thm:low-dimensional-finetuning-recovery}
  Suppose $(\Fcal,Q)$ satisfies~\cref{ass:low-dimensional-manifold} with constants $C,d,\alpha$. For every integer $m\ge C^2/2$, choose $P\subseteq\Ucal\setminus Q$ as in~\cref{thm:paper-low-bounds}, with $|P|\le m$. Every $\eta$-optimal fine-tuning output satisfying~\cref{ass:uniform-prediction-stability} obeys
  \begin{equation}\label{eq:low-dimensional-finetuning-recovery}
    \Lcal_{Q,P}^\star(\hat e)\le\boxed{\Rcal_Q^\star(\hat e)}\le \frac{2}{\alpha}\bigl[\Lcal_{Q,P}^{\Hcal,\star}+\eta +\rho(h_n)+h_n\bigr] +\frac{2+\alpha}{\alpha}\left(\frac{C^2}{2m}\right)^{\frac{1}{2d}}.
  \end{equation}
  For budgets $m\to\infty$, allow the training states, input sets, model classes, and learned state spaces to vary, selecting each $P_m$ by the same construction. Let $(\hat e_m,\hat b_m)$ denote the $\eta_m$-optimal outputs. If $C,d,\alpha$ and $\rho$ are common across budgets and
  \begin{equation}
    \Lcal_{Q_m,P_m}^{\Hcal_m,\star}+\eta_m+h_m\xrightarrow{m\to\infty}0,
  \end{equation}
  then $\Rcal_{Q_m}^\star(\hat e_m)\to0$. Here $h_m$ is the coverage radius of the training observations on $Q_m\cup P_m$ at budget $m$, defined as in~\cref{eq:training-coverage}.
\end{theorem}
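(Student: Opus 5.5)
The plan mirrors the proof of \cref{thm:general-finetuning-recovery}, with \cref{thm:paper-low-bounds} in place of \cref{thm:paper-general-bounds}. The key point is that the target set $P$ from \cref{thm:paper-low-bounds} is built from a cover of $\Fcal$ and pairwise witnesses only, through \cref{lem:covering-queries}. It is therefore selected before fine-tuning and does not depend on the returned pair $(\hat e,\hat b)$. \Cref{thm:paper-low-bounds} then applies to the learned \textit{encoder} $\hat e$ with this fixed $P$.

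First, the lower bound $\Lcal_{Q,P}^\star(\hat e)\le\Rcal_Q^\star(\hat e)$ is the lower bound of \cref{thm:paper-low-bounds}. For the upper bound, apply the predictor-specific form of that theorem with $b=\hat b$:
\[
\Rcal_Q^\star(\hat e)\le\frac{2}{\alpha}\Lcal_{Q,P}(\hat b;\hat e)+\frac{2+\alpha}{\alpha}\left(\frac{C^2}{2m}\right)^{1/(2d)}.
\]
This requires $\Lcal_{Q,P}(\hat b;\hat e)$ to be finite. That follows from \cref{lem:record-extension}, which needs compactness. Compactness should hold here: the covering bound \cref{eq:state-covering-growth} gives total boundedness, and the standing \cref{ass:compactness} of this section gives the rest. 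Next, substitute the bound of \cref{cor:finetuning-prediction},
\[
\Lcal_{Q,P}(\hat b;\hat e)\le\Lcal_{Q,P}^{\Hcal,\star}+\eta+\rho(h_n)+h_n,
\]
which yields \cref{eq:low-dimensional-finetuning-recovery}.

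For the asymptotic claim, apply the bound at each budget $m$, using the common $C,d,\alpha$ and the common modulus $\rho$. By hypothesis $\Lcal_{Q_m,P_m}^{\Hcal_m,\star}+\eta_m+h_m\to0$. Since $h_m\to0$, the properties of $\rho$ in \cref{ass:uniform-prediction-stability} give $\rho(h_m)\to0$. The bracketed term therefore vanishes. The budget term $(C^2/(2m))^{1/(2d)}$ also vanishes as $m\to\infty$, because $C$ and $d$ are fixed. Hence $\Rcal_{Q_m}^\star(\hat e_m)\to0$.

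An alternative route is to invoke \cref{cor:finetuning-prediction} for vanishing prediction error and then \cref{thm:paper-low-equivalence}. The direct sandwich is simpler. The only delicate point is the order of selection: $P_m$ must be fixed before training, so that \cref{thm:paper-low-bounds} applies uniformly over all \textit{encoders}, including the trained one. The construction guarantees this, so I expect no real obstacle beyond the finiteness and compactness check above.
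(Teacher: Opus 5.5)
Your proof is correct and follows the paper's. You fix $P$ before training via \cref{thm:paper-low-bounds} and substitute the fine-tuning bound of \cref{cor:finetuning-prediction}; for the limit you inline the sandwich rather than citing \cref{cor:finetuning-prediction} and \cref{thm:paper-low-equivalence} as the paper does, but that theorem is proved by exactly this sandwich, so the two arguments coincide.
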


\begin{proof}
  Select $P$ by~\cref{thm:paper-low-bounds}, before the training class and output are specified. Substitute~\cref{eq:finetuning-prediction-bound} into that theorem, using $\Lcal_{Q,P}^\star(\hat e)\le\Lcal_{Q,P}(\hat b;\hat e)$.

  Under the stated convergence condition, \cref{cor:finetuning-prediction} gives vanishing optimal prediction error. Applying~\cref{thm:paper-low-equivalence} to the selected target sets gives vanishing recovery error.
\end{proof}

The bounds identify the contributions of the model class, optimization tolerance, training coverage, and prediction stability. The low-dimensional result also makes the target-query budget explicit: $n$ counts training records, while $m$ counts target queries.

\section{Recovery from Noisy Observations}\label{sec:noisy-recovery}

The construction in~\cref{thm:paper-low-bounds} also applies when a query's response is a conditional mean and its observation is noisy. The state specifies the mean response profile; the measurement law describes how a recording is drawn from it. Prediction error and measurement perturbation then enter the same distance to a covering center. Restricting the construction to a frequently encountered state family gives a probability guarantee and an integrated recovery bound.

Let $\Fcal$ be compact with diameter $D$, and let $G\subseteq\Fcal$ be nonempty and compact. Suppose $(G,Q)$ satisfies~\cref{ass:low-dimensional-manifold} with constants $C_G,d,\alpha$. For an integer $m\ge C_G^2/2$, select $P$ and covering centers by~\cref{thm:paper-low-bounds} on $G$.

Let $\Phi=b\circ e$ be Borel measurable and $K$-Lipschitz on the full input space, with prediction error at most $\ell$ on $G$. Decode to the covering center closest to $b(z)$ on $P$, resolving ties by index. This defines a \textit{state decoder} $a(z)$; the composition $a\circ e$ is measurable because it is a finite selection determined by $\Phi$.

\begin{samepage}
  \begin{lemma}[Recovery under Perturbation]\label{lem:perturbed-recovery}
    Under this setup, if $f\in G$ and $d_Q(x,f(Q))\le\eta$, then
    \begin{equation}\label{eq:perturbed-recovery}
      d_{\Ucal}((a\circ e)(x),f)\le\frac2\alpha(\ell+K\eta) +\frac{2+\alpha}{\alpha}\left(\frac{C_G^2}{2m}\right)^{1/(2d)}.
    \end{equation}
  \end{lemma}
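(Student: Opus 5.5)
The plan is to reduce the perturbed case to \cref{lem:cover-decoding}, applied to the family $G$ with the covering centers and target set $P$ selected by \cref{thm:paper-low-bounds} on $G$. In that construction, the radius $\varepsilon$ and allowance $\tau$ are chosen so that
\begin{equation}
  \frac{(2+\alpha)\varepsilon+\tau}{\alpha}=\frac{2+\alpha}{\alpha}\left(\frac{C_G^2}{2m}\right)^{1/(2d)}.
\end{equation}
The centers lie in $G$ and form an $\varepsilon$-cover of $G$. The separation inequality \cref{eq:cover-center-distance} holds on these centers by \cref{lem:covering-queries}. So every hypothesis of \cref{lem:cover-decoding} is met with $G$ in place of $\Fcal$. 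Its pointwise bound \cref{eq:cover-decoder-pointwise} only uses that $f$ lies within $\varepsilon$ of some center, which holds because $f\in G$.

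Set $y=\Phi(x)=b(e(x))$. By definition, the decoder returns $g=(a\circ e)(x)$, a center minimizing $d_P(y,h(P))$. \Cref{lem:cover-decoding} then gives
\begin{equation}
  d_{\Ucal}(f,g)\le\frac2\alpha d_P\bigl(y,f(P)\bigr)+\frac{2+\alpha}{\alpha}\left(\frac{C_G^2}{2m}\right)^{1/(2d)}.
\end{equation}
It remains to show $d_P(y,f(P))\le\ell+K\eta$. By the triangle inequality,
\begin{equation}
  d_P\bigl(\Phi(x),f(P)\bigr)\le d_P\bigl(\Phi(x),\Phi(f(Q))\bigr)+d_P\bigl(\Phi(f(Q)),f(P)\bigr).
\end{equation}
The first term is at most $K\,d_Q(x,f(Q))\le K\eta$, using the $K$-Lipschitz property of $\Phi$ on the full input space (sup metrics on both sides). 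The second term is at most $\ell$, since the prediction error on $G$ is at most $\ell$ and $f\in G$. Substituting this gives \cref{eq:perturbed-recovery}.

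The only delicate point is to confirm that \cref{lem:cover-decoding} applies to an arbitrary vector $y\in\Rbb^P$ and not only to exact predictions of states. Its statement allows any $y$, so the perturbed input $x$, which need not be the observation of any state, causes no trouble.

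I must also check that the budget computation in \cref{thm:paper-low-bounds} transfers verbatim to $G$, with $C_G$ in place of $C$. This holds because that proof used only the covering bound and observability, both assumed for $(G,Q)$.
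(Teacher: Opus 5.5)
Your proposal is correct and matches the paper's proof. You bound $d_P(\Phi(x),f(P))\le K\eta+\ell$ by the triangle inequality and the Lipschitz property, then apply \cref{lem:cover-decoding} on $G$ with $y=\Phi(x)$ and the radius and allowance fixed in \cref{thm:paper-low-bounds}; this is exactly the paper's argument.
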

\end{samepage}

\begin{proof}
  The triangle inequality and input sensitivity give
  \begin{equation}
    d_P(\Phi(x),f(P))\le d_P(\Phi(x),\Phi(f(Q))) +d_P(\Phi(f(Q)),f(P)) \le K\eta+\ell.
  \end{equation}
  Apply~\cref{lem:cover-decoding} on $G$ with $y=\Phi(x)$ and the radius and allowance fixed in~\cref{thm:paper-low-bounds}. Substitution gives~\cref{eq:perturbed-recovery}.
\end{proof}

The pointwise bound becomes a probability guarantee when both the state and the measurement lie in their specified ranges.

\begin{corollary}[Recovery under Noise]\label{cor:noisy-subfamily-recovery}
  Under the preceding setup, let $F$ have a Borel law $\mu$ with $\mu(G)\ge1-\delta$, where $\delta\in[0,1)$. Suppose the measured input $X$ satisfies, for every $f\in G$,
  \begin{equation}
    \Pr\bigl(d_Q(X,f(Q))\le\eta\mid F=f\bigr)\ge1-\zeta,
  \end{equation}
  where $\eta\ge0$ and $\zeta\in(0,1)$. With
  \begin{equation}
    B\coloneqq\min\left\{D,\frac2\alpha(\ell+K\eta)+\frac{2+\alpha}{\alpha}\left(\frac{C_G^2}{2m}\right)^{1/(2d)}\right\},
  \end{equation}
  the recovery error obeys
  \begin{align}\label{eq:noisy-subfamily-recovery}
    \Pr\bigl(d_{\Ucal}((a\circ e)(X),F)\le B\bigr)&\ge1-\delta-\zeta,\\
    \mathbb E\,d_{\Ucal}((a\circ e)(X),F)^2&\le B^2+D^2(\delta+\zeta).
  \end{align}
\end{corollary}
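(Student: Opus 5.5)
The plan is to define a good event $A$ on which \cref{lem:perturbed-recovery} applies, bound its probability below, and then split the second moment over $A$ and its complement. Let
\[
A\coloneqq\{F\in G\}\cap\{d_Q(X,F(Q))\le\eta\}.
\]
On $A$, \cref{lem:perturbed-recovery} with $f=F$ and $x=X$ gives
\[
d_{\Ucal}((a\circ e)(X),F)\le\frac2\alpha(\ell+K\eta)+\frac{2+\alpha}{\alpha}\Bigl(\frac{C_G^2}{2m}\Bigr)^{1/(2d)}.
\]
The decoded state $(a\circ e)(X)$ is a covering center in $G\subseteq\Fcal$, and $F\in\Fcal$. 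Hence the distance never exceeds the diameter $D$. Combining the two bounds, $d_{\Ucal}((a\circ e)(X),F)\le B$ on $A$, and $\le D$ everywhere.

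For the probability bound, condition on $F$:
\[
\Pr(A)=\int_G\Pr\bigl(d_Q(X,f(Q))\le\eta\mid F=f\bigr)\,\mu(df)\ge(1-\zeta)\mu(G)\ge(1-\zeta)(1-\delta)\ge1-\delta-\zeta.
\]
This gives the first inequality. For the second moment, split over $A$ and $A^c$:
\[
\mathbb E\,d^2\le B^2\Pr(A)+D^2\Pr(A^c)\le B^2+D^2(\delta+\zeta).
\]

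The main obstacle is measurability, so that these probabilities and expectations are well defined. I would handle it as follows.
\begin{itemize}
\item[(i)] The map $a\circ e$ is measurable, as noted in the setup, since it is a finite selection determined by the Borel map $\Phi$. Its finitely many values are covering centers.
\item[(ii)] The map $(x,f)\mapsto d_{\Ucal}(g_i,f)$ is continuous in $f$ for each fixed center $g_i$. Hence $d_{\Ucal}((a\circ e)(X),F)=\sum_i 1\{(a\circ e)(X)=g_i\}\,d_{\Ucal}(g_i,F)$ is measurable.
\item[(iii)] The event $\{d_Q(X,F(Q))\le\eta\}$ is measurable because $f\mapsto f(Q)$ is $1$-Lipschitz by \cref{eq:observation-contraction}, and $G$ is compact, hence Borel.
\end{itemize}
The conditional probability hypothesis is interpreted through a regular conditional law of $X$ given $F$, which is what justifies the integral formula for $\Pr(A)$. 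I would state this interpretation explicitly rather than verify it.
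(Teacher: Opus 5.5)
Your proposal is correct and matches the paper's proof: the same good event $\{F\in G\}\cap\{d_Q(X,F(Q))\le\eta\}$, the bound $\mu(G)(1-\zeta)\ge(1-\delta)(1-\zeta)\ge1-\delta-\zeta$, \cref{lem:perturbed-recovery} combined with the diameter bound on that event, and the split of the second moment over the event and its complement. Your measurability remarks and your explicit use of a regular conditional law fill in points the paper leaves implicit; they do not change the route.
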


\begin{proof}
  The event $F\in G$ and $d_Q(X,F(Q))\le\eta$ has probability at least
  \begin{equation}
    \mu(G)(1-\zeta)\ge(1-\delta)(1-\zeta)\ge1-\delta-\zeta.
  \end{equation}
  On this event, \cref{lem:perturbed-recovery} and the diameter bound give error at most $B$. Off it, the error is at most $D$ because both states lie in $\Fcal$. This proves the probability bound; splitting the squared-error expectation over the two events gives the second bound.
\end{proof}

\mypara{Count observations.}
For a conditional Poisson measurement law, take $X_q=N_q/T$, where $N_q\mid F=f$ has mean $Tf(q)$ and $0\le f(q)\le\Lambda$ on $G$. Here $T$ is the exposure. A valid input-noise radius is
\begin{equation}\label{eq:poisson-input-radius}
  \eta(T,\zeta)=\sqrt{\frac{2\Lambda\log(2|Q|/\zeta)}{T}}+\frac{2\log(2|Q|/\zeta)}{3T}.
\end{equation}
Indeed, the centered Poisson moment-generating function obeys $\log\mathbb E e^{u(N_q-Tf(q))}=Tf(q)(e^u-1-u)$. The bound $e^u-1-u\le u^2/[2(1-u/3)]$ for $0\le u<3$, and the corresponding lower-tail bound, give
\begin{equation}
  \Pr\left(\left|N_q-Tf(q)\right|>\sqrt{2T\Lambda v}+\frac{2v}{3}\,\middle|\,F=f\right)\le2e^{-v}.
\end{equation}
Choose $v=\log(2|Q|/\zeta)$ and take a union bound. This uses each channel's conditional Poisson law and permits dependence between channels. Exposure controls the measurement contribution to recovery, through $K\eta(T,\zeta)$.

\mypara{Prediction sensitivity.}
For a continuously differentiable \textit{predictor} composition, a sufficient full-input constant is
\begin{equation}
  K\ge\sup_{x\in\Rbb^Q}\|J_\Phi(x)\|_{\infty\to\infty}=\sup_{x\in\Rbb^Q}\max_{p\in P}\sum_{q\in Q}\left|\frac{\partial\Phi_p(x)}{\partial x_q}\right|.
\end{equation}
The derivative bound may instead be taken on a convex domain containing every input and intervening segment used in the guarantee.

\mypara{Functional observability.}
Functional observers recover selected state coordinates in structured dynamical networks; algebraic and graph conditions characterize which targets are accessible~\citep{Montanari2022Functional,Montanari2023Target}. Observability also guides the eigenstructure and sparsity of neural state-space architectures~\citep{Gracyk2025Observability,Hamdan2024SparseMamba}. Here, the state is specified by its complete query-response profile, and quantitative target observability compares its full metric with the remaining responses. The query-budget construction uses this comparison to turn target prediction into a state-valued recovery map. Retracing adds cycle consistency to predictive state learning in reinforcement learning~\citep{Yu2022Retracing}; our bounds concern the distinctions preserved by a fixed \textit{encoder} and the error of recovering its response profile.

\makeatletter
\setlength{\@fptop}{0pt}
\setlength{\@fpsep}{12pt}
\setlength{\@fpbot}{0pt plus 1fil}
\setlength{\@dblfptop}{0pt}
\setlength{\@dblfpsep}{12pt}
\setlength{\@dblfpbot}{0pt plus 1fil}
\makeatother

\section{Synthetic Experiment Details}\label{sec:synthetic-details}

The generator specifies the full state family and every query response. This makes recovery error, target separation, and the constants in the error bound directly computable. We describe the generator and inverse map first, then examine the budget dependence and the assumptions responsible for it.

The coordinatewise construction is followed by a second, joint-query construction in~\cref{sec:synthetic-mixed}, which supplies the calibration records for a global recovery certificate.

\subsection{State Family and Query}

The GT observation law is a deterministic function of a sampled latent vector $x$. For \textit{Sphere}, $x$ is uniform on the unit sphere in $\Rbb^{d+1}$; for \textit{Torus}, its coordinates are cosine--sine pairs of independent uniform angles. For \textit{General}, $x$ ranges over the box $[-1,1]^D$ with independent uniform coordinates. A fixed signed coordinate-repetition map expands the latent vector to the common ambient dimension. The observation law is given in~\cref{eq:synthetic-gt-law}.
The coefficients are sampled once and shared across families and repetitions. The map $i(j)$ cycles through the latent coordinates; signs $s_j$ are fixed random signs for the low-dimensional families and positive for the general family. Each coordinate has input and target channels. Inputs contain every input channel, while target sets are nested prefixes of a fixed random permutation of the target pool. The query coefficients are public metadata supplied to the \textit{encoder} and \textit{predictor}. These are structured, separable observation laws; their geometry and full observation profiles can be evaluated exactly.

\begin{table}[htbp]
  \centering
  \caption{Synthetic design. The circle is included once, as the sphere of intrinsic dimension one. The general family has independent ambient coordinates.}
  \label{tab:synthetic-design}
  \small
  \begin{tabular*}{\columnwidth}{@{\extracolsep{\fill}}lr@{}}
    \toprule
    Design & Value \\
    \midrule
    Ambient coordinates $D$ & $16{,}384$ \\
    Channels per coordinate & $4$ ($2$ input, $2$ target) \\
    \textit{Sphere} dimension $d$ & $1,2,4,8,16,32$ \\
    \textit{Torus} dimension $d$ & $2,4,8,16,32$ \\
    Coefficient $a_{jk}$ & $\mathrm{Unif}[.5,1.5)$ \\
    Offset $c_{jk}$ & $\mathrm{Unif}[-.5,.5)$ \\
    Train / validation / test states & $4{,}096/256/256$ \\
    Paired repetitions & $3$ \\
    Budgets $m$ & $2^7,2^9,2^{11},2^{13},2^{15}$ \\
    Configurations & $180$ \\
    \bottomrule
  \end{tabular*}
\end{table}

\begin{table*}[!htbp]
  \centering
  \caption{Random-repetition design. Reported dispersions are sample standard deviations across repetitions of the same condition, with the biological aggregation held fixed. A repetition changes the sources of randomness specified by its protocol.}
  \label{tab:repetition-design}
  \small
  \setlength{\tabcolsep}{2pt}
  \begin{tabular*}{\textwidth}{@{\extracolsep{\fill}}lp{48pt}cp{151pt}@{}}
    \toprule
    Experiment & Random seeds & Runs per case & Pairing / audit \\
    \midrule
    Synthetic sweep / domain audit & $19,29,39$ & $3$ & \textit{General} first repetition: $2^{20}$ grid; others $2^{19}$ \\
    HC-3 prediction / matched recovery & $19,29,39$ & $3$ & Paired partitions, initializations and held-out blocks \\
    \bottomrule
  \end{tabular*}
\end{table*}

Throughout the paper, decimals are rounded to at most four places and omit the leading zero. Exact zeros in decimal summaries appear as $.0000$; nonzero values rounding to zero appear as $.0001$ in magnitude, retaining their sign. All calculations use unrounded values.

\subsection{Prediction and Deterministic Recovery}

A shared coordinatewise MLP encodes input observations and their query coefficients. A second shared MLP combines this learned state with each target channel's coefficients to predict its observation. We minimize the mean, over training states, of the maximum absolute target error. Every configuration is evaluated at the final training epoch specified in advance. Matching pilot configurations enter the paired-repetition design once.

\begin{table}[htbp]
  \centering
  \caption{Synthetic training and deterministic recovery settings. Gradient updates train the \textit{encoder} and \textit{predictor}.}
  \label{tab:synthetic-training}
  \small
  \begin{tabular*}{\columnwidth}{@{\extracolsep{\fill}}lr@{}}
    \toprule
    Training & Value \\
    \midrule
    \textit{Encoder} widths & $6\to64\to64\to16$ \\
    \textit{Predictor} widths & $18\to64\to64\to1$ \\
    Hidden activation & GELU \\
    Optimizer / batch & AdamW / $8$ \\
    Epochs / updates & $200$ / $102{,}400$ \\
    Learning rate, epochs $1$--$100$ & $10^{-3}$ \\
    Learning rate, epochs $101$--$200$ & $10^{-4}$ \\
    Oracle bisection rounds & $44$ \\
    \bottomrule
  \end{tabular*}
\end{table}

The ideal GT map in~\cref{eq:synthetic-gt-oracle} minimizes target discrepancy; the implementation approximates it by bisection.
Monotonicity of each observation channel converts an allowed discrepancy into a latent-coordinate interval. We intersect these intervals and bisect the discrepancy until the resulting box intersects the known latent domain. For a sphere, intersection is decided by the minimum and maximum attainable squared norms; for a torus, the same criterion is applied to each circle factor. A deterministic feasible point then defines the recovered state. The inverse uses the predicted responses and the known latent domain; the test state's latent vector enters the evaluation of recovery error.

For each test state $f$, we record prediction error $d_P((b\circ e)(f(Q)),f(P))$ and recovery error $d_{\Ucal}(\widehat f((b\circ e)(f(Q))),f)$. Recovery is evaluated over every channel of the full query universe. Summaries report mean $\pm$ sample standard deviation over paired repetitions. Random seed sets are listed in~\cref{tab:repetition-design}. Bound checks instead use the maximum over the test states within each run. Seed pairing fixes initialization, state samples, and target order across budgets; seeds jointly change these sources of randomness.

\mypara{Training choices and computation.}\label{sec:synthetic-reproducibility}
Dimensions and target counts form geometric grids spanning the three state families. The shared MLP widths, batch size, and sample counts are fixed design parameters; every reported configuration uses the same optimizer and two-stage schedule. Preliminary convergence checks informed the $200$-epoch endpoint, which is held fixed across the full suite. AdamW uses weight decay $.01$, moments $(.9,.999)$, and numerical stabilizer $10^{-8}$. Training uses Float32; numerical inversion and the domain audit use Float64. The seed table and supplied case manifest specify initialization, state generation, target order, and minibatch permutations.

With the network widths fixed, a batch of $B$ states costs $O(B(D+m))$ time and activation memory: the shared \textit{encoder} processes $D$ coordinates and the \textit{predictor} evaluates $m$ targets. Parameters are shared across coordinates and queries. Each numerical inverse uses $44$ feasibility rounds, each linear in the target count and latent-coordinate count; full-profile scoring is linear in $D$. The uniform audit evaluates each target on the grid in~\cref{tab:uniform-audit-settings}; target and grid batching bound working memory without changing the evaluated domain.

\mypara{Reproduction materials and infrastructure.}\label{sec:reproduction-materials}
The accompanying archive \texttt{ontic-reproducibility.zip} supplies source code, case manifests, dependency versions, original numerical endpoints, and commands for preprocessing, training, evaluation, and aggregation. Figure and table sources are included with the manuscript source. Synthetic data are generated by the supplied response laws and seeds. HC-3 recordings are obtained from their provider under the terms in~\cref{sec:hc3-data-access}. The archive retains all endpoints of the included suites and separates within-condition seed variation from biological aggregation.

Experiments ran in a Linux batch-computing environment. The supplied training configuration uses one NVIDIA A100 (80\,GB); observation-geometry calculations use CPU numerical linear algebra. The reproduction environment is Python 3.10.19, PyTorch 2.10.0, NumPy 1.26.4, SciPy 1.14.1, Matplotlib 3.8.4, and mpmath 1.3.0. The archive includes version-pinned dependencies. Checkpoints retain optimizer, sampler, and random-generator state, with separate latest and previous copies. Resumption uses the same device type and PyTorch version.

The main-text observability inequality and constant $\alpha_P$ are given in~\cref{eq:synthetic-main-observability}. We check coordinate coverage and its budget requirement before applying the transfer bound.

\subsection{Assumption Checks}

\mypara{Observability.}
Write $r$ for the number of ambient latent coordinates: $d+1$ for the sphere, $2d$ for the torus, and $D$ for the box. Let $I(P)=\{i(j):(j,k)\in P\}$. For the observation law in~\cref{eq:synthetic-gt-law},
\begin{equation}\label{eq:synthetic-exact-separation}
  P\text{ separates }\Fcal\quad\Longleftrightarrow\quad I(P)=\{1,\ldots,r\}.
\end{equation}
Full coordinate coverage gives separation by~\cref{prop:synthetic-observability}. Conversely, each latent domain is invariant under changing the sign of one coordinate and contains a state with that coordinate nonzero. Reflecting an unobserved coordinate therefore gives two distinct states with identical target responses. Their full profiles differ because the query universe observes every coordinate.

\begin{proof}[Proof of~\cref{prop:synthetic-observability}]
  \emph{Bound each channel.} Fix $f=f_x$ and $g=f_{x'}$ in $\Fcal$. For every channel $(j,k)$ and $|t|\le1$,
  \begin{equation}
    a_{jk}\operatorname{sech}^2(a_{jk}+|c_{jk}|)\le\left|\frac{d}{dt}\tanh(a_{jk}s_jt+c_{jk})\right|\le a_{jk}.
  \end{equation}
  The mean value theorem, followed by maximization over the selected target channels and over all channels, respectively, gives
  \begin{equation}
    d_P(f,g)\ge\max_i\ell_i|x_i-x_i'|,\qquad d_{\Ucal}(f,g)\le\max_i A_i|x_i-x_i'|.
  \end{equation}

  A single increasing channel turns coordinate separation into response separation:
  \begin{center}
    \makebox[169.24588pt][c]{\raisebox{-20.00000pt}[81.46638pt][0.00000pt]{\includegraphics[width=209.24588pt,height=121.46638pt]{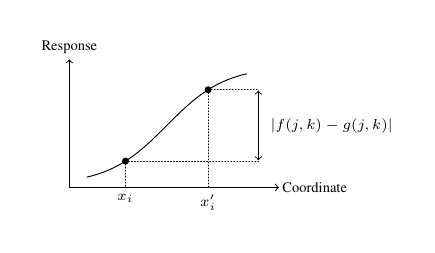}}}

  \end{center}

  \emph{Compare the distances.} Every coordinate has a target channel with positive slope magnitude, so $\ell_i>0$ and $A_i>0$. There are finitely many coordinates, hence $\alpha_P=\min_i\ell_i/A_i>0$. Since $\ell_i\ge\alpha_P A_i$ for every $i$,
  \begin{equation}
    d_P(f,g)\ge\max_i\ell_i|x_i-x_i'| \ge\alpha_P\max_i A_i|x_i-x_i'| \ge\alpha_Pd_{\Ucal}(f,g).
  \end{equation}
  This proves~\cref{eq:synthetic-main-observability}.
\end{proof}

 For each certified target set, we recompute this constant and the test-error transfer. The input and candidate-target pools cover all latent coordinates.

\mypara{Query budget.}
The actual target sampler permits a finite-budget calculation. Let $M$ be the target-pool size, and let $n_i$ count its channels observing latent coordinate $i$. A uniform subset of size $m$ misses coordinate $i$ with probability $\binom{M-n_i}{m}/\binom{M}{m}$. Hence
\begin{equation}\label{eq:synthetic-random-budget}
  \Pr\{P\text{ fails to separate }\Fcal\}\le\min\left\{1,\sum_{i=1}^{r}\frac{\binom{M-n_i}{m}}{\binom{M}{m}}\right\},
\end{equation}
where a binomial coefficient is zero when its lower argument exceeds its upper argument. On the complementary event, the global error transfer holds simultaneously for every \textit{predictor} and every state, using the corresponding $\alpha_P$. In particular, this event is determined before predictive training. Each term in the sum is at most $\exp(-m\min_i n_i/M)$. Thus $m\ge(M/\min_i n_i)\log(r/\eta)$ suffices for failure probability at most $\eta$ whenever this budget is available; querying the full pool always separates. For the repeated-coordinate low-dimensional families, $M/\min_i n_i$ is of the order of $r$, giving a budget of order $r\log(r/\eta)$.

We evaluate the without-replacement bound using the actual channel counts (\cref{tab:synthetic-budget-diagnostic}). The low-dimensional families repeat each latent coordinate across many channels, so their sufficient budgets are much smaller. These are family-specific sampling guarantees; the general covering theorem accommodates arbitrary admissible response laws.

\begin{table}[htbp]
  \centering
  \caption{Smallest budget for which~\cref{eq:synthetic-random-budget} gives at least $95\%$ probability of separation. The pool contains $32{,}768$ queries; $r$ counts latent coordinates, while $d$ is intrinsic dimension. For the largest sphere and torus at budget $512$, failure-probability upper bounds are $4.18\times10^{-6}$ and $.0189$. }
  \label{tab:synthetic-budget-diagnostic}
  \small
  \begin{tabular*}{\columnwidth}{@{\extracolsep{\fill}}lrrr@{}}
    \toprule
    Family & $d$ & $r$ & Sufficient $m$ \\
    \midrule
    \textit{Sphere} & $1$ & $2$ & $6$ \\
    \textit{Sphere} & $2$ & $3$ & $11$ \\
    \textit{Sphere} & $4$ & $5$ & $21$ \\
    \textit{Sphere} & $8$ & $9$ & $45$ \\
    \textit{Sphere} & $16$ & $17$ & $97$ \\
    \textit{Sphere} & $32$ & $33$ & $211$ \\
    \textit{Torus} & $2$ & $4$ & $16$ \\
    \textit{Torus} & $4$ & $8$ & $38$ \\
    \textit{Torus} & $8$ & $16$ & $90$ \\
    \textit{Torus} & $16$ & $32$ & $203$ \\
    \textit{Torus} & $32$ & $64$ & $452$ \\
    \midrule
    \textit{General} & $16,384$ & $16,384$ & $32,711$ \\
    \bottomrule
  \end{tabular*}
\end{table}

\mypara{Covering number.}
The same generator supplies explicit covering constants. Put $A=\max_{jk}a_{jk}$. Angular coordinates for the sphere and torus give, respectively,
\begin{equation}
  \Ncal_{\Fcal}(\varepsilon)\le\left(1+\frac{2\pi d A}{\varepsilon}\right)^d,\qquad \Ncal_{\Fcal}(\varepsilon)\le\left(1+\frac{2\pi A}{\varepsilon}\right)^d.
\end{equation}
Indeed, changing a sphere angle changes its unit vector by at most the angular displacement in Euclidean norm; summing over angles gives the first grid radius. Each sine--cosine factor of a torus is Lipschitz in its own angle, giving the second radius in the maximum-coordinate metric. The observation map is $A$-Lipschitz in that metric. Choosing a nearest grid point in the parameter domain therefore covers every state. For $\varepsilon\le1$, these bounds have the required form $C\varepsilon^{-d}$ on the low-dimensional families. The constants describe a conservative existence construction for these state families. A regular grid on the box gives $\Ncal_{\Fcal}(\varepsilon)\le(1+2A/\varepsilon)^D$.

\subsection{Error Transfer}

When every latent coordinate has a target channel, \cref{eq:synthetic-main-observability} supplies the global observability constant $\alpha_P$.
For the ideal GT map, the true state is a feasible candidate, so the fitted target residual is at most the prediction error. For a numerical output $g\in\Fcal$ whose target residual is within $\xi\ge0$ of the minimum, observability and the triangle inequality give
\begin{equation}\label{eq:numerical-inverse-transfer}
  \alpha_Pd_{\Ucal}(g,f)\le d_P(g(P),y)+d_P(y,f(P)) \le2d_P(y,f(P))+\xi.
\end{equation}
The exact map has $\xi=0$. With exact feasibility tests and feasible-point construction, $44$ bisections of $[0,\|y\|_\infty+1]$, followed by a $10^{-10}$ feasibility expansion, give $\xi\le2^{-44}(\|y\|_\infty+1)+10^{-10}$. For a feasible Float64 output, $\xi$ also includes arithmetic effects on the target residual. Finite-test plots compare the numerical recovery errors with the exact-map benchmark $2\widehat L/\alpha_P$, where $\widehat L$ is the maximum test prediction error. Coordinate coverage gives the separation test in~\cref{eq:synthetic-exact-separation}.

\begin{table}[htbp]
  \centering
  \caption{Numerical recovery compared with the exact-map benchmark on separating target sets. All observed errors lie below the benchmark, computed from maximum test prediction errors and global derivative bounds.}
  \label{tab:synthetic-bound-check}
  \small
  \begin{tabular*}{\columnwidth}{@{\extracolsep{\fill}}lr@{}}
    \toprule
    Quantity & Value \\
    \midrule
    Certified target sets & $162$ \\
    Finite-test recovery bound, range & $.023$--$.274$ \\
    Observed / bound ratio, median & $.075$ \\
    Observed / bound ratio, range & $.020$--$.247$ \\
    \bottomrule
  \end{tabular*}
\end{table}

\subsection{Uniform Prediction and Recovery}\label{sec:synthetic-uniform}

The preceding transfer checks use test states. The coordinatewise architecture also permits a bound over the entire known family. For a target $t=(j,k)$, put $v=s_jx_{i(j)}$ and write $h_t(v)$ for the predicted observation minus $\tanh(a_{jk}v+c_{jk})$. Writing $e_{\rm sh}$ and $b_{\rm sh}$ for the shared MLPs in~\cref{tab:synthetic-training}, the implemented coordinatewise composition is
\begin{equation}
  \begin{aligned}
    h_t(v)={}&b_{\rm sh}\!\Bigl(e_{\rm sh}\bigl(\tanh(a_{j1}v+c_{j1}), \tanh(a_{j2}v+c_{j2}),a_{j1},a_{j2},c_{j1},c_{j2}\bigr), a_{jk},c_{jk}\Bigr)\\
      &-\tanh(a_{jk}v+c_{jk}).
  \end{aligned}
\end{equation}
Thus each target prediction depends only on $v$ and its query coefficients. Each latent coordinate ranges over $[-1,1]$ in every family, so
\begin{equation}\label{eq:synthetic-scalar-uniform}
  \Lcal_{Q,P}(b;e)=\max_{t\in P}\sup_{v\in[-1,1]}|h_t(v)|.
\end{equation}
The supremum commutes with the finite maximum: each target's extremum can be attained at a state of its own. Agreement between the network forward pass and its scalar form checks this reduction before the domain audit.

We bound prediction sensitivity in the full input-observation norm using the trained weights. Let $W_1,\ldots,W_6$ be the successive \textit{encoder} and \textit{predictor} matrices, $w_1,w_2$ the varying observation columns of $W_1$, and $W_{4,h}$ the hidden-state columns of the first \textit{predictor} matrix. Using the GELU derivative bound $1.13$, a valid constant is
\begin{equation}\label{eq:synthetic-weight-stability}
  K=1.13^4\|W_6\|_2\|W_5\|_2\|W_{4,h}\|_2 \cdot\|W_3\|_2\|W_2\|_2\max_{\sigma\in\{-1,1\}}\|w_1+\sigma w_2\|_2.
\end{equation}
Every varying input observation enters this calculation. The \textit{encoder--predictor} composition therefore satisfies~\cref{ass:uniform-prediction-stability} with $\rho(t)=Kt$. The scalar error has Lipschitz constant $H_t=K\max(a_{j1},a_{j2})+a_{jk}$, since $\tanh$ is Lipschitz with constant one.

On a regular grid $v_\ell=-1+2\ell/N$, let $M_t=\max_{0\le\ell\le N}|h_t(v_\ell)|$. Every point of the interval lies within $1/N$ of the grid. Thus
\begin{equation}\label{eq:synthetic-uniform-enclosure}
  \max_{t\in P}M_t\le\boxed{\Lcal_{Q,P}(b;e)}\le\max_{t\in P}\left(M_t+\frac{H_t}{N}\right).
\end{equation}
For the reported numerical values, let $\widetilde M_t$ be the Float64 grid maximum and use the arithmetic allowance $10^{-6}$. We use
\begin{equation}\label{eq:synthetic-numerical-enclosure}
  \underline L=\max\{0,\max_{t\in P}\widetilde M_t-10^{-6}\},\qquad \overline L=\max_{t\in P}(\widetilde M_t+H_t/N)+10^{-6}.
\end{equation}
The numerical enclosure assumes that $10^{-6}$ covers grid-evaluation and enclosure-arithmetic error, and that the evaluated $H_t$ remain valid Lipschitz upper bounds. Recovery additionally assumes that the evaluated $\alpha_P>0$ remains a valid observability lower bound. Tables report Float64 evaluations of the resulting conditional analytic bounds at the stated display precision; rounding the final quotient and its displayed value is separate from the analytic inequality. The implementation uses high-precision spot checks and ordinary Float64 arithmetic. \Cref{tab:uniform-audit-settings} records the allowance and checks.

The grid is an offline domain audit of the fixed \textit{predictor}. Its resolution is determined by the weight bound and a preset interpolation tolerance. Under the arithmetic conditions above, the ideal GT map in~\cref{eq:synthetic-gt-oracle} obeys
\begin{equation}\label{eq:synthetic-uniform-recovery}
  \sup_{f\in\Fcal}d_{\Ucal}\bigl(\widehat f((b\circ e)(f(Q))),f\bigr)\le\frac{2\overline L}{\alpha_P},
\end{equation}
where $\overline L$ denotes the numerical upper enclosure. For feasible numerical outputs in $\Fcal$, a uniform residual slack $\xi$ adds $\xi/\alpha_P$ by~\cref{eq:numerical-inverse-transfer}.

The audit includes all repetitions for the largest sphere and torus at budgets above their sufficient sampling thresholds, and for the independent family at its full budget. Each bound uses the final training checkpoint and its training target set. All reported ideal-map bounds improve on the full-profile diameter (\cref{tab:synthetic-uniform}).

\begin{table}[htbp]
  \centering
  \caption{Whole-domain audit settings. Bounds use the final training checkpoints and their training target sets. Seed sets and grid sizes appear in~\cref{tab:repetition-design}.}
  \label{tab:uniform-audit-settings}
  \small
  \begin{tabular*}{\columnwidth}{@{\extracolsep{\fill}}lr@{}}
    \toprule
    Setting & Value \\
    \midrule
    \textit{Sphere} and \textit{Torus}: $(d,m)$ & $(32,512)$ \\
    \textit{General}: $(D,m)$ & $(16{,}384,32{,}768)$ \\
    Interpolation tolerance & $.002$ \\
    GELU derivative bound & $1.13$ \\
    Arithmetic allowance & $10^{-6}$ \\
    Full-input sensitivity $K$ & $374.7$--$705.7$ \\
    Full-profile diameter & $1.8101$ \\
    High-precision check & $70$ digits \\
    Float64 discrepancy & $\le4\times10^{-16}$ \\
    \bottomrule
  \end{tabular*}
\end{table}

\section{Joint-Query Calibration}
\label{sec:synthetic-mixed}

This joint-query family makes the mechanism in~\cref{thm:paper-general-bounds} explicit: target separation converts prediction error into recovery of the full state. Its linear inverse link lets records at known latent coordinates control global prediction error directly.

For the calibration certificate below, each query depends on the whole latent configuration. For $x\in\mathbb S^{r-1}$, define $f_x(q)=\tanh(a_q^\top x)$. The input matrix $A_Q$ and base target matrix $A_P$ have independently drawn unit rows. Target sets vary through $A_\lambda=A_P\operatorname{diag}(1,\ldots,1,\lambda)$ with $0\le\lambda\le1$, so every row has Euclidean norm at most one; the query universe contains the input rows and every target variant. This keeps the ontic state family fixed while changing the distinctions available to prediction.

Consider an \textit{encoder} applying the inverse response link and a matrix, followed by a joint \textit{predictor}:
\begin{equation}
  (b\circ e)(f_x(Q))=\tanh\bigl(W_PW_E\operatorname{arctanh}(f_x(Q))\bigr).
\end{equation}
Inputs are clipped to the valid observation range before applying the inverse link. The resulting full-input prediction map is Lipschitz with constant at most $\cosh^2(1)\|W_PW_E\|_{\infty\to\infty}$.

We obtain prediction, observability and recovery bounds from the same calibration records below. Throughout this section, $\sigma_{\min}(A)=\inf_{\|v\|_2=1}\|Av\|_2$, so positivity means full column rank.

\subsection{From Training Residuals to a Global Bound}

Calibration records connect prediction error to recovery in three steps: they bound the global prediction error, certify that the targets distinguish states, and determine a recovery map.

For the joint-query family, set $A_P=A_\lambda$ and $M=W_PW_E$. Let the calibration matrix $X\in\Rbb^{n\times r}$ have full column rank, with $\sigma=\sigma_{\min}(X)>0$. The inverse-link records $U\in\Rbb^{n\times|Q|}$ and $V\in\Rbb^{n\times|P|}$ approximate $XA_Q^\top$ and $XA_P^\top$ with entrywise errors at most $\eta_Q$ and $\eta_P$, respectively. The weights may be fitted to these records.

\begin{lemma}[Calibration Error]\label{lem:calibration-error}
  Under the calibration setup, let $Y\in\Rbb^{n\times k}$ and $B\in\Rbb^{r\times k}$ satisfy $|(Y-XB)_{ij}|\le\eta$. Then
  \begin{equation}\label{eq:calibration-error-lift}
    \max_j\|(X^\dagger Y-B)_{\cdot j}\|_2\le\frac{\sqrt n\eta}{\sigma},\qquad \|X^\dagger Y-B\|_{2\to2}\le\frac{\sqrt{nk}\eta}{\sigma}.
  \end{equation}
\end{lemma}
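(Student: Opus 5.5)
The plan is to use the least-squares identity $X^\dagger X=I_r$, which holds because $X$ has full column rank. Write the residual matrix as $E\coloneqq Y-XB\in\Rbb^{n\times k}$, so that $|E_{ij}|\le\eta$ entrywise. Then
\begin{equation}
  X^\dagger Y-B=X^\dagger(XB+E)-B=X^\dagger E,
\end{equation}
and both bounds reduce to controlling $X^\dagger E$ through the operator norm of the pseudoinverse together with norms of $E$.

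For the pseudoinverse, the singular value decomposition $X=U\Sigma V^\top$ gives $X^\dagger=V\Sigma^{-1}U^\top$. Hence $\|X^\dagger\|_{2\to2}=1/\sigma_{\min}(X)=1/\sigma$, using the paper's convention $\sigma_{\min}(X)=\inf_{\|v\|_2=1}\|Xv\|_2$. This coincides with the smallest singular value since $X$ has full column rank.

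For the columnwise bound, the $j$th column satisfies $(X^\dagger E)_{\cdot j}=X^\dagger E_{\cdot j}$. Its norm is therefore at most $\|E_{\cdot j}\|_2/\sigma$. Because $E_{\cdot j}\in\Rbb^n$ has entries bounded by $\eta$, we get $\|E_{\cdot j}\|_2\le\sqrt n\,\eta$. Maximizing over $j$ gives the first inequality.

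For the operator-norm bound, submultiplicativity gives $\|X^\dagger E\|_{2\to2}\le\|E\|_{2\to2}/\sigma$. We then bound the spectral norm by the Frobenius norm: $\|E\|_{2\to2}\le\|E\|_F\le\sqrt{nk}\,\eta$.

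There is no real obstacle in this argument. The only point needing care is that $\|X^\dagger\|_{2\to2}$ equals exactly $1/\sigma$, which requires full column rank, and the calibration setup supplies this.
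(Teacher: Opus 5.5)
Your proposal is correct and follows the paper's proof exactly: write $X^\dagger Y-B=X^\dagger(Y-XB)$ using $X^\dagger X=I$, bound each column of the residual by $\sqrt n\,\eta$ and its Frobenius norm by $\sqrt{nk}\,\eta$, and multiply by $\|X^\dagger\|_{2\to2}=1/\sigma$. The only difference is that you also justify the pseudoinverse norm via the SVD and the spectral-norm step via $\|E\|_{2\to2}\le\|E\|_F$, both of which the paper leaves implicit; only $\|X^\dagger\|_{2\to2}\le1/\sigma$ is actually needed, not equality.
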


\begin{proof}
  Since $X^\dagger X=I$,
  \begin{equation}
    X^\dagger Y-B=X^\dagger(Y-XB).
  \end{equation}
  Each column of $Y-XB$ has norm at most $\sqrt n\eta$, and its Frobenius norm is at most $\sqrt{nk}\eta$. Multiplication by $X^\dagger$, whose operator norm is $1/\sigma$, gives both bounds.
\end{proof}

\begin{lemma}[Calibrated Prediction]\label{lem:calibration-frame}
  Define the residual and its global error bound by
  \begin{align}
    R&\coloneqq UM^\top-V,\\
    \Delta&\coloneqq\max_{p\in P}\|X^\dagger R_{\cdot p}\|_2+\frac{\sqrt n}{\sigma}\bigl(\|M\|_{\infty\to\infty}\eta_Q+\eta_P\bigr).
  \end{align}
  Then
  \begin{equation}\label{eq:frame-prediction-certificate}
    \Lcal_{Q,P}(b;e)\le\max_{p\in P}\|(MA_Q-A_P)_p\|_2\le\Delta.
  \end{equation}
  For exact records, $\eta_Q=\eta_P=0$, the second inequality is an equality.
\end{lemma}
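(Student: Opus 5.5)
The plan is to prove the two inequalities in \cref{eq:frame-prediction-certificate} separately. The first uses pointwise properties of $\tanh$. The second combines an exact algebraic identity for the records with the calibration lift of \cref{lem:calibration-error}.

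\emph{First inequality.} Fix $x\in\mathbb S^{r-1}$ and a target row $p\in P$. I would ignore clipping for the moment, since on exact observations $|a_q^\top x|\le1$ and the inputs already lie in the valid range. Then $\operatorname{arctanh}(f_x(Q))=A_Qx$, so the predicted target at $p$ is $\tanh\bigl((MA_Qx)_p\bigr)$ and the true response is $\tanh\bigl((A_Px)_p\bigr)$. Because $\tanh$ is $1$-Lipschitz,
\begin{equation}
\bigl|\tanh((MA_Qx)_p)-\tanh((A_Px)_p)\bigr|\le|((MA_Q-A_P)x)_p|\le\|(MA_Q-A_P)_p\|_2\|x\|_2,
\end{equation}
where the last step is Cauchy--Schwarz. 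Using $\|x\|_2=1$ and taking the maximum over $p$ and the supremum over $x$ gives the first inequality.

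\emph{Second inequality.} Write $B\coloneqq(MA_Q-A_P)^\top\in\Rbb^{r\times|P|}$, whose column $p$ is the row $(MA_Q-A_P)_p$. The residual decomposes as
\begin{equation}
R=UM^\top-V=XB+(U-XA_Q^\top)M^\top-(V-XA_P^\top).
\end{equation}
Each entry of $(U-XA_Q^\top)M^\top$ is a combination of entries of size at most $\eta_Q$, with weights given by a row of $M$. That entry is therefore bounded by $\|M\|_{\infty\to\infty}\eta_Q$. It follows that $|(R-XB)_{ij}|\le\|M\|_{\infty\to\infty}\eta_Q+\eta_P$. Applying the column bound in \cref{lem:calibration-error} with $Y=R$ gives
\begin{equation}
\|B_{\cdot p}\|_2\le\|X^\dagger R_{\cdot p}\|_2+\frac{\sqrt n}{\sigma}\bigl(\|M\|_{\infty\to\infty}\eta_Q+\eta_P\bigr)
\end{equation}
by the triangle inequality. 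Maximizing over $p$ yields $\max_p\|(MA_Q-A_P)_p\|_2\le\Delta$.

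\emph{Equality for exact records.} When $\eta_Q=\eta_P=0$, we have $R=XB$ exactly. Since $X$ has full column rank, $X^\dagger X=I$, so $X^\dagger R=B$ and the second inequality becomes an equality.

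The step needing the most care is the first inequality, specifically whether the clipping and $\operatorname{arctanh}$ preprocessing act as an exact inverse on the state family. This holds because the rows are unit vectors and $\|x\|_2=1$, so every input lies in $[-\tanh 1,\tanh 1]$, inside the clipping range. The entrywise bound on the propagated input error also requires the convention that $\|\cdot\|_{\infty\to\infty}$ is the maximum absolute row sum, so I would state that convention explicitly.
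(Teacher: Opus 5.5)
Your proposal is correct and follows the paper's proof essentially step for step. You use the same residual identity $R-X(MA_Q-A_P)^\top=(U-XA_Q^\top)M^\top-(V-XA_P^\top)$, the same entrywise bound $\|M\|_{\infty\to\infty}\eta_Q+\eta_P$, the column bound of \cref{lem:calibration-error} followed by the triangle inequality, $X^\dagger X=I$ for the exact-record equality, and the $1$-Lipschitz property of $\tanh$ for the first inequality. Your extra checks fill in details the paper leaves implicit: that clipping is inactive on exact inputs because $|a_q^\top x|\le1$, the Cauchy--Schwarz step, and the row-sum convention for $\|\cdot\|_{\infty\to\infty}$.
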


\begin{proof}
  The residual satisfies
  \begin{equation}
    R-X(MA_Q-A_P)^\top=(U-XA_Q^\top)M^\top -(V-XA_P^\top).
  \end{equation}
  Its entries have magnitude at most $\|M\|_{\infty\to\infty}\eta_Q+\eta_P$. Apply~\cref{lem:calibration-error} with $Y=R$ and $B=(MA_Q-A_P)^\top$; the column bound and triangle inequality give the bound $\Delta$ on every row. The $1$-Lipschitz link $\tanh$ gives the prediction bound. With exact records, $R=X(MA_Q-A_P)^\top$, so $X^\dagger R=(MA_Q-A_P)^\top$ and the second inequality is an equality.
\end{proof}

The same records estimate the target matrix. Its smallest singular value measures whether the targets retain every latent direction.

\begin{lemma}[Calibrated Observability]\label{lem:calibrated-observability}
  Define
  \begin{align}
    \widehat A_P&\coloneqq(X^\dagger V)^\top,\qquad \tau\coloneqq\frac{\sqrt{n|P|}\eta_P}{\sigma},\\
    \widehat\alpha_P&\coloneqq\frac{\operatorname{sech}^2(1)}{\sqrt{|P|}}\bigl(\sigma_{\min}(\widehat A_P)-\tau\bigr)_+.
  \end{align}
  Then $\|\widehat A_P-A_P\|_{2\to2}\le\tau$ and
  \begin{equation}\label{eq:frame-observability-certificate}
    d_P(f,g)\ge\widehat\alpha_Pd_{\Ucal}(f,g),\qquad f,g\in\Fcal.
  \end{equation}
  Thus $\widehat\alpha_P>0$ certifies separation. For exact target records, $\widehat A_P=A_P$.
\end{lemma}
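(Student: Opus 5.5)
The proof has two parts: an operator-norm perturbation bound for $\widehat A_P$, then a transfer from latent separation to response separation. For the first part, I would apply \cref{lem:calibration-error} with $Y=V$, $B=A_P^\top$, $k=|P|$ and $\eta=\eta_P$. Its second inequality gives $\|X^\dagger V-A_P^\top\|_{2\to2}\le\sqrt{n|P|}\eta_P/\sigma=\tau$. Transposition preserves the spectral norm, so $\|\widehat A_P-A_P\|_{2\to2}\le\tau$. For exact records, $V=XA_P^\top$ and $X^\dagger X=I$, hence $\widehat A_P=A_P$.

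Next comes a singular-value perturbation step. For any unit $v$, the triangle inequality gives $\|A_Pv\|_2\ge\|\widehat A_Pv\|_2-\tau$. Therefore $\sigma_{\min}(A_P)\ge(\sigma_{\min}(\widehat A_P)-\tau)_+$.

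For the observability step, take $f=f_x$ and $g=f_{x'}$ with $x,x'\in\mathbb S^{r-1}$. Every query row has norm at most one, so every argument $a_q^\top x$ lies in $[-1,1]$. On that interval $\tanh$ is $1$-Lipschitz, and its derivative is at least $\operatorname{sech}^2(1)$.

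\emph{Lower bound on targets.} By the mean value theorem, $|f(p)-g(p)|\ge\operatorname{sech}^2(1)|a_p^\top(x-x')|$. Then
\begin{equation}
d_P(f,g)\ge\operatorname{sech}^2(1)\|A_P(x-x')\|_\infty\ge\frac{\operatorname{sech}^2(1)}{\sqrt{|P|}}\|A_P(x-x')\|_2\ge\frac{\operatorname{sech}^2(1)}{\sqrt{|P|}}\sigma_{\min}(A_P)\|x-x'\|_2 .
\end{equation}

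\emph{Upper bound on the full profile.} For every query $q\in\Ucal$, $|f(q)-g(q)|\le|a_q^\top(x-x')|\le\|x-x'\|_2$, by Cauchy--Schwarz and $\|a_q\|_2\le1$. Hence $d_{\Ucal}(f,g)\le\|x-x'\|_2$.

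Combining the two bounds with the perturbation step gives $d_P(f,g)\ge\widehat\alpha_P d_{\Ucal}(f,g)$. Separation follows: if $\widehat\alpha_P>0$, then equal target observations force $d_{\Ucal}(f,g)=0$.

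The main obstacle is the upper bound on $d_{\Ucal}$, since the query universe includes all target variants $A_\lambda$ and the input rows. I must check that every such row has norm at most one. This holds by construction: the unit rows are scaled by $\lambda\le1$. I must also make sure the distance is measured in latent Euclidean norm, not coordinatewise, so that Cauchy--Schwarz applies uniformly over the possibly infinite universe. The remaining steps are routine norm comparisons.
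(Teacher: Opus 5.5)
Your proposal is correct and matches the paper's proof step for step: you apply \cref{lem:calibration-error} with $Y=V$ and $B=A_P^\top$ to get the operator-norm bound, deduce $\sigma_{\min}(A_P)\ge(\sigma_{\min}(\widehat A_P)-\tau)_+$, lower-bound the target distance using $\operatorname{sech}^2(1)$ and the mean value theorem, and upper-bound the full distance by $d_{\Ucal}(f_x,f_{x'})\le\|x-x'\|_2$ via the row norms. One small correction: $A_\lambda=A_P\operatorname{diag}(1,\ldots,1,\lambda)$ scales the last latent coordinate (column) of each row rather than whole rows, but every row norm still stays at most one, so your argument is unaffected.
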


\begin{proof}
  Apply~\cref{lem:calibration-error} with $Y=V$, $B=A_P^\top$ and $\eta=\eta_P$. The operator-norm bound gives $\|\widehat A_P-A_P\|_{2\to2}\le\tau$, hence
  \begin{equation}
    \sigma_{\min}(A_P)\ge\bigl(\sigma_{\min}(\widehat A_P)-\tau\bigr)_+.
  \end{equation}
  All query rows have norm at most one, so $d_{\Ucal}(f_x,f_{x'})\le\|x-x'\|_2$ for unit $x,x'$. The derivative of $\tanh$ on $[-1,1]$ is at least $\operatorname{sech}^2(1)$; hence
  \begin{equation}
    \begin{aligned}
      d_P(f_x,f_{x'})&\ge\operatorname{sech}^2(1)\|A_P(x-x')\|_\infty\\
      &\ge\frac{\operatorname{sech}^2(1)\sigma_{\min}(A_P)}{\sqrt{|P|}}\|x-x'\|_2 \ge\widehat\alpha_Pd_{\Ucal}(f_x,f_{x'}).
    \end{aligned}
  \end{equation}
  This proves~\cref{eq:frame-observability-certificate}. For exact target records, $V=XA_P^\top$ gives $\widehat A_P=A_P$.
\end{proof}

When this certificate is positive, the calibrated inverse estimates the latent vector. Normalization returns that estimate to the sphere.

\begin{proposition}[Calibrated Recovery]\label{prop:calibrated-recovery}
  Use $\Delta$, $\widehat A_P$ and $\tau$ from~\cref{lem:calibration-frame,lem:calibrated-observability}, and assume $\sigma_{\min}(\widehat A_P)>\tau$. Let $\nu$ normalize nonzero vectors to the sphere and send zero to the first coordinate vector. Define
  \begin{align}
    c(y)&\coloneqq\operatorname{arctanh}\!\left(\operatorname{clip}_{[-\tanh(1),\tanh(1)]}(y)\right),\\
    \widehat x(y)&\coloneqq\nu\!\left(\widehat A_P^\dagger c(y)\right).
  \end{align}
  This readout satisfies
  \begin{equation}\label{eq:frame-recovery-certificate}
    \sup_{x\in\mathbb S^{r-1}}d_{\Ucal}\bigl(f_{\widehat x((b\circ e)(f_x(Q)))},f_x\bigr)\le\frac{2\bigl(\sqrt{|P|}\Delta+\tau\bigr)}{\sigma_{\min}(\widehat A_P)}.
  \end{equation}
\end{proposition}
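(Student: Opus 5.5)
We prove the bound by comparing the recovered latent vector with the true one and then using that every query row has Euclidean norm at most one. Fix $x\in\mathbb S^{r-1}$ and write $y=(b\circ e)(f_x(Q))$. By~\cref{lem:calibration-frame}, every target prediction satisfies $|y_p-\tanh(a_p^\top x)|\le\Delta$, so $d_P(y,f_x(P))\le\Delta$.

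\emph{Step 1: the link.} Since $|a_p^\top x|\le\|a_p\|_2\le1$, the true response lies in $[-\tanh(1),\tanh(1)]$. Clipping to this interval is $1$-Lipschitz and fixes the true value, so the clipped prediction stays within $\Delta$ of it. On this interval, $\operatorname{arctanh}$ has derivative at most $\cosh^2(1)$. A direct argument would therefore give $\|c(y)-A_Px\|_\infty\le\cosh^2(1)\Delta$. This factor does not appear in the stated bound, so Step 1 is the main obstacle.

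To absorb it, I would try to use the inverse-link structure directly. The prediction map is $\tanh(MA_Qx)$, and $M A_Q x$ lies in the valid range whenever $\|MA_Q\|$ rows have norm at most one. In that case $c(y)=MA_Qx$ exactly, and~\cref{lem:calibration-frame} gives $\|c(y)-A_Px\|_\infty\le\max_p\|(MA_Q-A_P)_p\|_2\le\Delta$ without any $\cosh^2$ factor. If clipping is active at some coordinate $p$, then $|(MA_Qx)_p|>1\ge|a_p^\top x|$. The clip moves the value toward $a_p^\top x$, so the bound $\le\Delta$ on that coordinate still holds. This is the route I would commit to: it yields $\|c(y)-A_Px\|_2\le\sqrt{|P|}\Delta$.

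\emph{Step 2: calibrated inverse.} Write $c(y)=\widehat A_Px+(A_P-\widehat A_P)x+(c(y)-A_Px)$. By~\cref{lem:calibrated-observability}, $\|(A_P-\widehat A_P)x\|_2\le\tau$. Since $\widehat A_P$ has full column rank, $\widehat A_P^\dagger\widehat A_P=I$ and $\|\widehat A_P^\dagger\|=1/\sigma_{\min}(\widehat A_P)$. Combining these with Step 1 gives
\[
\|\widehat A_P^\dagger c(y)-x\|_2\le\frac{\sqrt{|P|}\Delta+\tau}{\sigma_{\min}(\widehat A_P)}\eqqcolon\beta .
\]

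\emph{Step 3: normalization and the ontic metric.} For a unit vector $x$ and any $z$, we have $\|\nu(z)-x\|_2\le2\|z-x\|_2$. If $z\ne0$, this follows from $\|\nu(z)-z\|_2=|1-\|z\|_2|\le\|z-x\|_2$ and the triangle inequality. If $z=0$, then $\|\nu(0)-x\|_2\le2=2\|z-x\|_2$. Hence $\|\widehat x(y)-x\|_2\le2\beta$.

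Finally, each $\tanh(a_q^\top\cdot)$ is $1$-Lipschitz in the Euclidean norm because $\|a_q\|_2\le1$. This is the estimate $d_{\Ucal}(f_{x'},f_x)\le\|x'-x\|_2$ already used in the proof of~\cref{lem:calibrated-observability}. Taking the supremum over $x$ then gives~\cref{eq:frame-recovery-certificate}.
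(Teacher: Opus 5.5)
Your proposal is correct and follows essentially the same route as the paper. The paper's proof also writes $c(y)=\operatorname{clip}_{[-1,1]}(MA_Qx)$ and uses $A_Px\in[-1,1]^{|P|}$ so that clipping is nonexpansive, which avoids the $\cosh^2(1)$ factor; it then adds the $\tau$ perturbation, uses $\|\widehat A_P^\dagger\|_{2\to2}=1/\sigma_{\min}(\widehat A_P)$, applies the factor-$2$ bound for $\nu$ as a nearest-point projection, and finishes with the norm-one query rows, exactly as in your Steps 1--3.
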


\begin{proof}
  Fix a unit $x$ and write $y=(b\circ e)(f_x(Q))=\tanh(MA_Qx)$. Since $c(y)=\operatorname{clip}_{[-1,1]}(MA_Qx)$ and $A_Px\in[-1,1]^{|P|}$, clipping and~\cref{lem:calibration-frame,lem:calibrated-observability} give
  \begin{equation}
    \|c(y)-\widehat A_Px\|_2\le\|c(y)-A_Px\|_2 +\|(A_P-\widehat A_P)x\|_2 \le\sqrt{|P|}\Delta+\tau.
  \end{equation}
  For a unit $x$, normalization is a nearest-point projection onto the sphere, including at zero. Thus, for every vector $v$,
  \begin{equation}\label{eq:sphere-projection-error}
    \|\nu(v)-x\|_2\le\|\nu(v)-v\|_2+\|v-x\|_2\le2\|v-x\|_2.
  \end{equation}
  \begin{center}
    \makebox[85.44550pt][c]{\raisebox{-20.00000pt}[64.65695pt][0.00000pt]{\includegraphics[width=125.44550pt,height=104.65695pt]{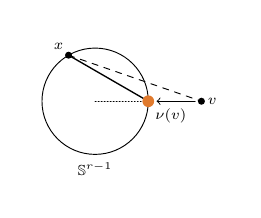}}}

  \end{center}
  Since $\widehat A_P$ has full column rank and all query rows have norm at most one,
  \begin{equation}
    d_{\Ucal}(f_{\widehat x(y)},f_x)\le2\|\widehat A_P^\dagger c(y)-x\|_2 \le\frac{2\|c(y)-\widehat A_Px\|_2}{\sigma_{\min}(\widehat A_P)} \le\frac{2\bigl(\sqrt{|P|}\Delta+\tau\bigr)}{\sigma_{\min}(\widehat A_P)}.
  \end{equation}
  Taking the supremum over $x$ proves~\cref{eq:frame-recovery-certificate}.
\end{proof}

\par\raggedbottom
\section{Observation Geometry}\label{sec:hc3-geometry-details}

We examine how many representative observations cover a later diagnostic block, using the same geometry protocol and time-shift controls. Covariance statistics provide a complementary description. Prediction and model-state recovery use this cohort in~\cref{sec:hc3-prediction-details,sec:hc3-local-details}.

\mypara{Data access and reuse.}\label{sec:hc3-data-access}
The recordings were contributed by Mizuseki, Sirota, Pastalkova, Diba, and Buzs\'aki and distributed as HC-3 through CRCNS.org~\citep{Mizuseki2013HC3,mizuseki2014neurosharing}. The provider makes the data available for scientific use under the \href{https://crcns.org/terms}{CRCNS data-use terms}, which require attribution and citation and restrict redistribution. The \href{https://crcns.org/data-sets/hc/hc-3/about-hc-3}{HC-3 dataset page} identifies the source files and access procedure. Our analysis reuses these existing animal recordings under that published provision for scientific use. The reproduction archive supplies the session manifest and preprocessing code; readers obtain recordings directly from the provider. Original code and aggregate outputs are supplied for confidential peer-review reproduction, and dependencies retain their upstream licenses, listed in the archive.

\subsection{Session and Observation}
The HC-3 cohort~\citep{mizuseki2014neurosharing} spans open-field, linear-track, and wheel recordings (\cref{tab:hc3-panel}). We select sessions using metadata to balance data availability with breadth across recording groups. Eligible sessions have aligned position, recording metadata, and spike files. We allocate sessions across recording prefixes, cycling through the groups within each prefix. Within a group, priority follows processing availability, matched shanks, estimated unit counts, and recording length. This procedure supplies varied recording contexts for comparing observation geometry.

Every included session meets the unit-quality criteria in~\cref{tab:hc3-preprocessing}. Electrode metadata identifies recording prefixes belonging to the same animal. All analyses use this fixed cohort; detailed session statistics appear in~\cref{tab:hc3-geometry-sessions}.

\begin{table*}[!t]
  \centering
  \caption{HC-3 study cohort relative to the complete local catalog, with session counts by animal. The ec015 recording prefix belongs to ec014.}
  \label{tab:hc3-panel}
  \small
  \renewcommand{\arraystretch}{1.1}
  \begin{minipage}[t]{0.51\textwidth}
    \centering
    \begin{tabular*}{\linewidth}{@{\extracolsep{\fill}}lrr@{}}
      \toprule
      Coverage & Catalog & Study \\
      \midrule
      Sessions & 442 & 25 \\
      Animals & 11 & 4 \\
      Recording groups & 119 & 23 \\
      Behavioral contexts & 14 & 5 \\
      \bottomrule
    \end{tabular*}
  \end{minipage}\hfill
  \begin{minipage}[t]{0.42\textwidth}
    \centering
    \begin{tabular*}{\linewidth}{@{\extracolsep{\fill}}lr@{}}
      \toprule
      Animal & Sessions \\
      \midrule
      ec012 & 1 \\
      ec013 & 7 \\
      ec014 & 10 \\
      ec016 & 7 \\
      \bottomrule
    \end{tabular*}
  \end{minipage}
\end{table*}

Each query specifies a unit's spike count over a trailing window. At each sampled observation time, the window ends at that time; evaluating the fixed collection of queries gives the corresponding observation vector. Each query returns the count, logarithmically transformed and scaled using the earlier training block (\cref{tab:hc3-preprocessing}). This normalization stays fixed throughout the recording. Each window lies fully inside its assigned block, with gaps at the boundaries. We derive geometric coordinates from a reference block, test their transfer to a later diagnostic block, and reserve the final block for subsequent experiments. Shorter and longer counting windows test sensitivity to temporal resolution. The calculations operate directly on recorded observations.

\begin{table}[htbp]
  \centering
  \caption{Observation construction and time blocks. Block percentages refer to recording duration.}
  \label{tab:hc3-preprocessing}
  \small
  \setlength{\tabcolsep}{4pt}
  \renewcommand{\arraystretch}{1.1}
  \begin{tabular}{@{}p{0.34\columnwidth}p{0.61\columnwidth}@{}}
    \toprule
    Preprocessing & Setting \\
    \midrule
    File eligibility & XML, LFP, position; $\ge4$ matched shanks, $\ge10$ estimated units, $\ge1{,}000$ position samples \\
    Unit quality & Training rate $\ge.1$\,Hz; $\ge12$ units per session \\
    Retained units & Median 51; range 20--104 \\
    Count window & 1\,s; sensitivity .5 and 2\,s \\
    Window stride & .25\,s \\
    Transform & $\log(1+\text{count})$ \\
    Scale & Per-unit training 95th percentile, floored at 1 \\
    \addlinespace[3pt]
    Training block & Initial 35\% \\
    Reference block & 45--65\% \\
    Diagnostic block & 65--75\% \\
    Reserved block & 75--100\% \\
    Boundary gap & 2\,s \\
    Shank quality & Invalid shank 13 in ec014.723 omitted \\
    \bottomrule
  \end{tabular}
\end{table}

\subsection{Covering Geometry}\label{sec:hc3-covering-details}

For each recording block, let $X$ be the set of sampled observation vectors, equipped with the maximum-coordinate metric. A farthest-first traversal constructs a greedy cover with size $\widehat N_X(\varepsilon)$. The same traversal provides a packing lower bound $\underline N_X(\varepsilon)$, giving $\underline N_X(\varepsilon)\le\Ncal_X(\varepsilon)\le\widehat N_X(\varepsilon)$ with centers in $X$. The lower bound counts selected observations separated by more than $2\varepsilon$. We evaluate the full radius grid, nested sample sizes, and time-shift replicates specified in~\cref{tab:hc3-cover-protocol}.

The reference block fixes the distance scale. Real and shifted diagnostic observations use identical sample indices, sample sizes, and radii. The main comparison uses the grid radius nearest a prespecified fraction of the reference diameter. Curve bands show session interquartile ranges; control bars span the time-shift controls.

\begin{table}[htbp]
  \centering
  \caption{Covering protocol, using the observations and chronological blocks in~\cref{tab:hc3-preprocessing}.}
  \label{tab:hc3-cover-protocol}
  \small
  \begin{tabular}{@{}p{.40\columnwidth}p{.55\columnwidth}@{}}
    \toprule
    Setting & Value \\
    \midrule
    Distance & Maximum absolute coordinate difference \\
    Samples per block & Nested $128$, $256$, $512$, $1024$, limited by block length \\
    Radius grid & $25$ logarithmic scales, $.03$--$1$ reference diameter \\
    Fixed-radius rule & Nearest grid radius to $.25$ reference diameter \\
    Actual fixed radius & $.2685$ reference diameter \\
    Time-shift controls & $19$ replicates per block and window \\
    \bottomrule
  \end{tabular}
\end{table}
\begin{table}[htbp]
  \centering
  \caption{Fixed-radius cover-size comparison over the full cohort. Each session has a smaller cover than its median time-shift control at every window.}
  \label{tab:hc3-cover-window}
  \small
  \begin{tabular}{@{}lrr@{}}
    \toprule
    Window & Cover & Reduction \\
    \midrule
    .5\,s & 475 & 11.2\% \\
    1\,s & 243 & 15.2\% \\
    2\,s & 170 & 14.4\% \\
    \bottomrule
  \end{tabular}
\end{table}
The fixed-resolution comparison shows that recorded population observations admit smaller constructed covers than time-shift controls. The animal and session summaries use the same radius and retain the entire cohort. These are covering statistics for the recorded observations in the specified maximum-coordinate metric.

\begin{table*}[htbp]
  \centering
  \caption{Complete primary-window covering results at the fixed radius. The ratio divides the real greedy-cover size by its median time-shift cover size.}
  \label{tab:hc3-cover-sessions}
  \small
  \begin{tabular*}{\textwidth}{@{\extracolsep{\fill}}lrrrr@{}}
    \toprule
    Session & Units & Samples & Cover & Fixed-Radius Ratio \\
    \midrule
    ec012ec.187 & 20 & 419 & 159 & .874 \\
    ec013.156 & 56 & 389 & 202 & .957 \\
    ec013.198 & 61 & 605 & 357 & .944 \\
    ec013.250 & 37 & 721 & 158 & .963 \\
    ec013.375 & 38 & 683 & 486 & .970 \\
    ec013.406 & 47 & 652 & 413 & .967 \\
    ec013.553 & 52 & 981 & 239 & .895 \\
    ec013.646 & 37 & 510 & 284 & .959 \\
    ec014.123 & 57 & 1024 & 94 & .832 \\
    ec014.183 & 104 & 1024 & 380 & .735 \\
    ec014.195 & 91 & 1024 & 458 & .798 \\
    ec014.215 & 87 & 1024 & 554 & .843 \\
    ec014.260 & 59 & 1024 & 339 & .835 \\
    ec014.277 & 69 & 1024 & 392 & .800 \\
    ec014.723 & 35 & 1024 & 125 & .796 \\
    ec015.043 & 48 & 1024 & 127 & .804 \\
    ec015.045 & 47 & 1024 & 325 & .819 \\
    ec015.047 & 46 & 1024 & 243 & .877 \\
    ec016.233 & 52 & 751 & 155 & .657 \\
    ec016.269 & 57 & 1024 & 75 & .765 \\
    ec016.397 & 51 & 1024 & 195 & .848 \\
    ec016.430 & 44 & 1024 & 191 & .923 \\
    ec016.448 & 43 & 1024 & 302 & .915 \\
    ec016.483 & 40 & 1024 & 119 & .804 \\
    ec016.582 & 61 & 1024 & 470 & .902 \\
    \bottomrule
  \end{tabular*}
\end{table*}

\mypara{Computational cost.}\label{sec:geometry-complexity}
For $n$ sampled observations of $u$ units, forming the maximum-coordinate distance matrix costs $O(n^2u)$ time and $O(n^2)$ memory. Farthest-first traversal updates the nearest-center distance in $O(n^2)$ time; the same traversal supplies all reported radii. Covariance formation and eigendecomposition cost $O(nu^2+u^3)$ time and $O(u^2)$ matrix storage. Time-shift replicates repeat these operations, with shifts preserving each unit's empirical marginal. The sample cap and number of controls are fixed in~\cref{tab:hc3-cover-protocol,tab:hc3-geometry-protocol}.

\subsection{Covariance Geometry}
We compute the covariance spectrum and participation-ratio effective dimension in each block. Reference covariance eigenvectors define the projection applied to the later diagnostic block. The number of directions needed to retain a given fraction of variance measures covariance concentration; the covering exponent in~\cref{eq:state-covering-growth} controls how many representatives are needed to approximate every state at a given accuracy.

\begin{table*}[!t]
  \centering
  \caption{Covariance geometry settings. All use the same diagnostic block.}
  \label{tab:hc3-geometry-protocol}
  \small
  \setlength{\tabcolsep}{5pt}
  \renewcommand{\arraystretch}{1.08}
  \begin{tabular}{@{}p{0.23\textwidth}p{0.21\textwidth}p{0.51\textwidth}@{}}
    \toprule
    Analysis & Parameter & Setting \\
    \midrule
    Covariance geometry & Shift control & 19 replicates; circular offsets $\ge10$\,s from either endpoint \\
    & Block bootstrap & 200 replicates; 10\,s blocks; 95\% intervals \\
    \bottomrule
  \end{tabular}
\end{table*}

The controls independently circularly shift each unit's observation sequence, preserving its marginal distribution and circular autocorrelation while changing alignment between units. The comparison reflects shared temporal modulation and nonstationarity as well as coordination between units. We compare all controls within each session and assess uncertainty by resampling contiguous time blocks. Complete session results and window sensitivity appear in~\cref{tab:hc3-geometry-sessions,tab:hc3-window-sensitivity}.

The reference subspace captures substantial variance in the later block; the covariance summaries retain the complete cohort and window comparisons.

\begin{table*}[!t]
  \centering
  \caption{Complete primary-window results. Effective dimension and shift reduction refer to the diagnostic block. The 90\% rank uses directions from the reference basis.}
  \label{tab:hc3-geometry-sessions}
  \small
  \begin{tabular}{@{}lrrrr@{}}
    \toprule
    Session & Units & Effective Dim. & Shift Reduction (\%) & 90\% Rank \\
    \midrule
    ec012ec.187 & 20 & 9.2 & 23.0 & 14 \\
    ec013.156 & 56 & 17.8 & 25.1 & 39 \\
    ec013.198 & 61 & 19.8 & 27.9 & 47 \\
    ec013.250 & 37 & 19.8 & 10.7 & 27 \\
    ec013.375 & 38 & 17.1 & 29.7 & 27 \\
    ec013.406 & 47 & 20.8 & 13.1 & 33 \\
    ec013.553 & 52 & 26.2 & 13.1 & 36 \\
    ec013.646 & 37 & 19.9 & 11.8 & 27 \\
    ec014.123 & 57 & 19.0 & 24.3 & 31 \\
    ec014.183 & 104 & 26.4 & 42.4 & 63 \\
    ec014.195 & 91 & 32.7 & 29.1 & 53 \\
    ec014.215 & 87 & 28.6 & 37.0 & 52 \\
    ec014.260 & 59 & 28.8 & 21.3 & 42 \\
    ec014.277 & 69 & 34.4 & 19.1 & 50 \\
    ec014.723 & 35 & 16.4 & 18.6 & 24 \\
    ec015.043 & 48 & 18.2 & 26.3 & 31 \\
    ec015.045 & 47 & 20.5 & 28.5 & 35 \\
    ec015.047 & 46 & 23.9 & 18.9 & 33 \\
    ec016.233 & 52 & 17.4 & 25.6 & 32 \\
    ec016.269 & 57 & 15.6 & 25.0 & 37 \\
    ec016.397 & 51 & 23.7 & 12.9 & 36 \\
    ec016.430 & 44 & 23.9 & 8.2 & 30 \\
    ec016.448 & 43 & 22.6 & 5.0 & 29 \\
    ec016.483 & 40 & 19.2 & 16.7 & 29 \\
    ec016.582 & 61 & 30.8 & 17.0 & 43 \\
    \bottomrule
  \end{tabular}
\end{table*}

\begin{table}[htbp]
  \centering
  \caption{Window sensitivity. Entries are session medians in the diagnostic block; reduction is relative to the median shift control. Every eligible session lies below every shift control at each window length.}
  \label{tab:hc3-window-sensitivity}
  \small
  \begin{tabular}{@{}lrrr@{}}
    \toprule
    Window & Eff. Dim. & Unit Fraction & Reduction \\
    \midrule
    .5\,s & 23.76 & 50.33\% & 19.27\% \\
    1.0\,s & 20.50 & 45.77\% & 21.33\% \\
    2.0\,s & 17.58 & 36.47\% & 25.04\% \\
    \bottomrule
  \end{tabular}
\end{table}

\raggedbottom
\section{HC-3 Prediction Details}\label{sec:hc3-prediction-details}

\mypara{Matched prediction task.}

We use the same metadata-selected session panel and preprocessing as~\cref{sec:hc3-geometry-details}. Units are partitioned into input queries $Q$, candidate target queries, and an independent evaluation set $V$. Nested target budgets share the input set and evaluation units within each seed. Training uses neural observations and unit identities. A cross-attention \textit{encoder} compresses the input observations into learned tokens, and a query-conditioned \textit{predictor} estimates target observations. All parameters are trained jointly from scratch. The full design and fixed endpoint are listed in~\cref{tab:hc3-b-design}.

\begin{table}[!htbp]
  \centering
  \caption{HC-3 prediction protocol. Percentages denote within-session chronological blocks or unit partitions. Normalization uses the earlier training portion specified in the geometry protocol. All configurations use the fixed final epoch.}
  \label{tab:hc3-b-design}
  \small
  \setlength{\tabcolsep}{4pt}
  \begin{tabular*}{\columnwidth}{@{\extracolsep{\fill}}lr@{}}
    \toprule
    Design & Value \\
    \midrule
    Sessions / paired seeds & $25$ / $3$ \\
    $Q$ / target pool / $V$ & $40\%/40\%/20\%$ \\
    Target pool fractions & $1/8,1/4,1/2,1$ \\
    Configurations & $300$ \\
    Training block & $0$--$45\%$ \\
    Past reference block & $45$--$65\%$ \\
    Validation block & $65$--$75\%$ \\
    Test block & $75$--$100\%$ \\
    Window / stride / guard & $1/.25/2$ s \\
    Hidden width / tokens & $136/48$ \\
    \textit{Encoder} / \textit{predictor} depth & $4/4$ \\
    Attention heads / FF width & $8/544$ \\
    Dropout / batch & $.05/64$ \\
    Epochs & $200$ \\
    Cosine learning rate & $3\times10^{-4}\to3\times10^{-6}$ \\
    \bottomrule
  \end{tabular*}
\end{table}

\mypara{Training choices.}\label{sec:hc3-training-choices}
The reference architecture and $200$-epoch endpoint were fixed after preliminary training checks and then shared by every session, target budget, and repetition. The small model reduces depth, width, and learned-token count while retaining the same observation task. These architecture settings are design choices, not quantities selected separately on the final test block. AdamW uses weight decay $10^{-4}$, moments $(.9,.999)$, and stabilizer $10^{-8}$; gradients are clipped to norm $1$. Training uses Float32, with TF32 disabled and deterministic PyTorch operations. The loss is the batch mean of the maximum absolute target error. The cosine schedule, chronological split, and fixed final epoch are given in~\cref{tab:hc3-b-design}; all remaining initialization and library settings are specified in the archived implementation.

\mypara{Attention cost.}\label{sec:hc3-attention-cost}
Let $q=|Q|$, $p=|P|$, $t$ be the learned-token count, and $h$ the hidden width. Per observation and layer, the dense attention and projection operations cost $O((q+p+t)h^2+(t^2+tq+p^2+pt)h)$ time. Materialized attention matrices require $O(t^2+tq+p^2+pt)$ storage for a fixed number of heads, alongside the token activations. Batch size and layer count multiply these costs; the unit-embedding table adds storage linear in the number of recorded units and $h$. The tables specify each architecture and the supplied manifests give the session-dependent query counts.

\mypara{Prediction score.}
The prediction score averages the maximum absolute target-observation error across test windows. The constant baseline is the coordinatewise training median on the same target set. Relative gain is one minus the ratio of model error to baseline error. We average seed-level gains within each session, then weight sessions equally. Session-to-session spread is descriptive because recordings within an animal share biological variation. Different budgets evaluate different target sets, each with its own matched baseline.

\begin{table}[!htbp]
  \centering
  \caption{Prediction gain, mean $\pm$ sample standard deviation over repetitions after averaging the same sessions. Each target set uses its own training-median baseline.}
  \label{tab:hc3-prediction}
  \small
  \setlength{\tabcolsep}{2pt}
  \begin{tabular*}{\columnwidth}{@{\extracolsep{\fill}}lr@{}}
    \toprule
    Targets & Prediction Gain \\
    \midrule
    $1/2$ & $7.0\pm1.0$\% \\
    Full & $21.0\pm1.1$\% \\
    \bottomrule
  \end{tabular*}
\end{table}

\section{Local State Recovery}\label{sec:hc3-local-details}

\mypara{Reference state.}
The ontic-state surrogate is the large \textit{encoder}'s complete post-normalization token state, flattened into a reference vector. For each session and seed, the full-target large model is held fixed. The smaller \textit{encoder--predictor} learns the same input-to-target task from scratch, with identical input and target queries, preprocessing, training observations, and chronological validation and test blocks. Its training objective uses target observations. After predictive training, the reference states determine the recovery maps and their evaluation. We compare the final small \textit{encoder} with its exact saved initialization on the same held-out inputs.

\begin{table}[htbp]
  \centering
  \caption{Matched prediction models. The common optimization schedule and chronological split are given in~\cref{tab:hc3-b-design}. Recovery uses the full target pool, with $25$ sessions and $3$ paired seeds. Parameter counts include the \textit{encoder} and \textit{predictor}.}
  \label{tab:local-models}
  \small
  \setlength{\tabcolsep}{4pt}
  \begin{tabular*}{\columnwidth}{@{\extracolsep{\fill}}lrr@{}}
    \toprule
    Configuration & \textit{Small} & \textit{Reference} \\
    \midrule
    \textit{Encoder} / \textit{predictor} depth & $2/2$ & $4/4$ \\
    Hidden width / tokens & $64/16$ & $136/48$ \\
    Attention heads / FF width & $4/256$ & $8/544$ \\
    State coordinates & $1{,}024$ & $6{,}528$ \\
    Neighbor candidates & $1,4,16,64$ & --- \\
    Fixed training epoch & $200$ & $200$ \\
    Parameters (M) & $\sim .3$ & $\sim 2.4$ \\
    \bottomrule
  \end{tabular*}
\end{table}

\mypara{Local recovery.}
For a held-out input, we locate the nearest neighbors of its learned state in the small \textit{encoder}'s training-state library and average their reference states. Distances use Euclidean coordinates, centered by the training mean and divided by a common training root-mean-square scale. This scalar normalization preserves neighbor order. A neighbor's weight is proportional to the exponential of its negative squared distance divided by the squared distance of the furthest selected neighbor, with a numerical floor. At test time, these training neighbors and their distance-based weights directly determine the recovered reference state.

\mypara{Selection and cost.}\label{sec:local-reproduction}
The neighborhood grid spans individual neighbors through broader local averages. The count minimizing validation mean squared error is selected separately for the trained and initial representations; ties retain the first count in ascending order. Evaluation uses the fixed final test block. For a library of $n$ training states with $s$ learned coordinates, a batch of $B$ queries requires $O(Bns)$ distance computation. Neighbor selection adds the cost of selecting the smallest $k$ distances per query; averaging reference vectors of dimension $r$ costs $O(Bkr)$. Batches of $128$ bound the distance workspace to $O(128n)$ entries, in addition to stored learned and reference states. No gradient updates are used to fit this local recovery map.

\mypara{Recovery score.}
We score all reference coordinates together using squared reconstruction error divided by the squared error of the training-mean reference state; the reported $R^2$ is one minus this ratio. Both conditions use this same baseline. We first average paired seeds within each session and then weight sessions equally. Seeds change the query partition and the independently trained model pair. Animal groups follow recording metadata; recording-name prefixes can differ from animal identities.

\begin{table}[!htbp]
  \centering
  \caption{Local-recovery gains, mean $\pm$ sample standard deviation over repetitions with the session or animal weights held fixed.}
  \label{tab:local-summary}
  \small
  \setlength{\tabcolsep}{2pt}
  \begin{tabular*}{\columnwidth}{@{\extracolsep{\fill}}lr@{}}
    \toprule
    Aggregation & $\Delta R^2$ \\
    \midrule
    All sessions & $.0239\pm.0118$ \\
    ec012 & $.0622\pm.0700$ \\
    ec013 & $.0479\pm.0169$ \\
    ec014 & $.0038\pm.0145$ \\
    ec016 & $.0233\pm.0276$ \\
    Animal-equal & $.0343\pm.0218$ \\
    \bottomrule
  \end{tabular*}
\end{table}

\end{document}